\newcommand*{\defeq}{\stackrel{\text{def}}{=}}
\title{Adversarial Schrödinger Bridge Matching}
\theoremstyle{plain}
\newtheorem{theorem}{Theorem}[section]
\newtheorem{proposition}[theorem]{Proposition}
\theoremstyle{definition}
\newtheorem{definition}[theorem]{Definition}
\theoremstyle{remark}
\DeclareMathOperator*{\argmin}{arg\,min}
\newcommand{\KL}[2]{\text{KL}\left(#1\Vert #2\right)}
\author{%
  Nikita Gushchin\thanks{Equal contribution}  \\
  Skoltech\thanks{Skolkovo Institute of Science and Technology}\\
  Moscow, Russia \\
  \texttt{n.gushchin@skoltech.ru} \\
  \And
  Daniil Selikhanovych$^*$ \\
  Skoltech$^\dagger$  \\
  Moscow, Russia \\
  \texttt{selikhanovychdaniil@gmail.com} \\
  \And
  Sergei Kholkin$^*$ \\
  Skoltech$^\dagger$  \\
  Moscow, Russia \\
  \texttt{s.kholkin@skoltech.ru} \\
  \And
  Evgeny Burnaev \\
  Skoltech$^\dagger$, AIRI \thanks{Artificial Intelligence Research Institute} \\
  Moscow, Russia \\
  \texttt{e.burnaev@skoltech.ru} \\
  \And
  Alexander Korotin \\
  Skoltech$^\dagger$, AIRI$^\ddagger$ \\
  Moscow, Russia \\
  \texttt{a.korotin@skoltech.ru} \\
}
\begin{document}

\maketitle

\vspace{-3.5mm}
\begin{abstract}
    
    The Schrödinger Bridge (SB) problem offers a powerful framework for combining optimal transport and diffusion models. A promising recent approach to solve the SB problem is the Iterative Markovian Fitting (IMF) procedure, which alternates between Markovian and reciprocal projections of continuous-time stochastic processes. However, the model built by the IMF procedure has a long inference time due to using many steps of numerical solvers for stochastic differential equations.
    To address this limitation, we propose a novel Discrete-time IMF (D-IMF) procedure in which learning of stochastic processes is replaced by learning just a few transition probabilities in discrete time. Its great advantage is that in practice it can be naturally implemented using the Denoising Diffusion GAN (DD-GAN), an already well-established adversarial generative modeling technique. We show that our D-IMF procedure can provide the same quality of unpaired domain translation as the IMF, using only several generation steps instead of hundreds. We provide the code at  \url{https://github.com/Daniil-Selikhanovych/ASBM}.
    
    \vspace{-1mm}
\end{abstract}

\begin{figure*}[!h]
\vspace{-2.5mm}
\hspace{-8.75mm}\includegraphics[width=1.06\linewidth]{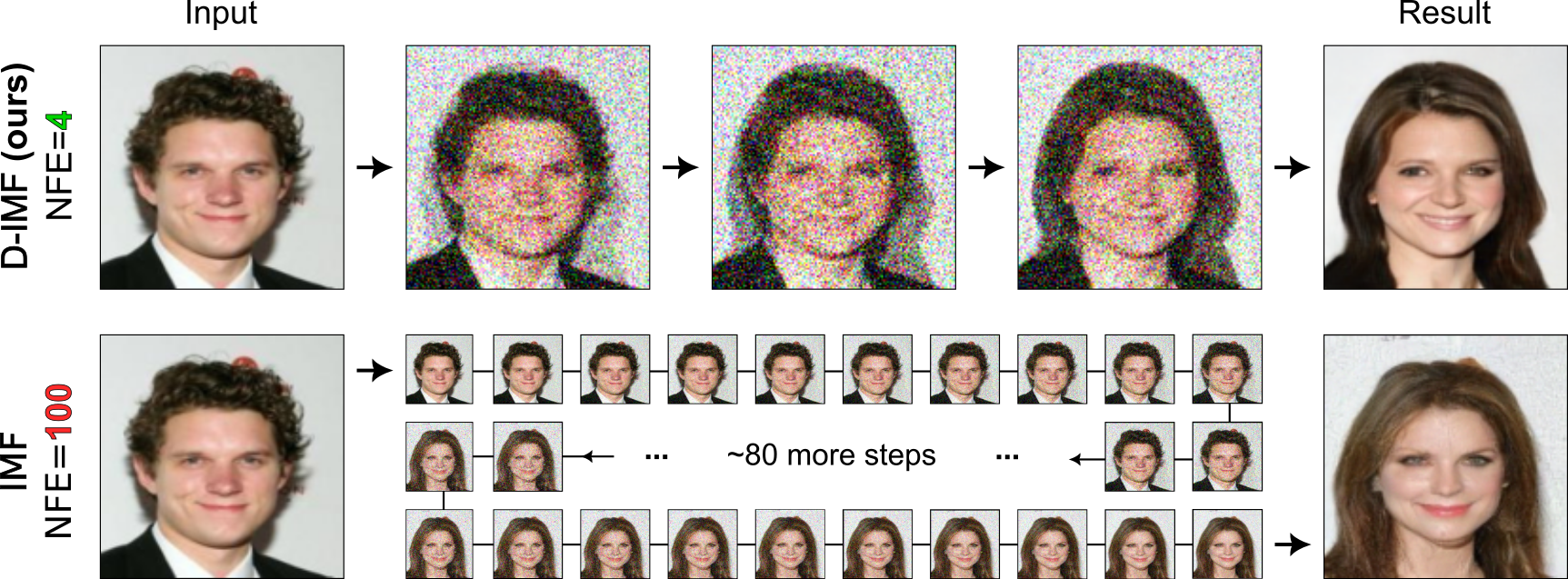}
\caption{\centering Our D-IMF approach performs unpaired image-to-image translation in just a few steps, achieving results comparable to the hundred-step IMF \cite{shi2023diffusion}. Celeba \cite{liu2015faceattributes}, \textit{male}$\rightarrow$\textit{female} ($128\times 128$).}\label{fig:teaser}
\vspace{-2mm}
\end{figure*}
\vspace{-3mm}
\section{Introduction}
\label{sec-introduction}
\vspace{-3mm}
Recent generative models based on the Flow Matching \cite{lipman2022flow} and Rectified Flows \cite{liu2022flow} show great potential as a successor of classical denoising diffusion models such as DDPM \cite{ho2020denoising}. Both these approaches consider the same problem of learning an Ordinary Differential Equation (ODE) that interpolates one given distribution to the other one, e.g., noise to data. Thanks to the close connection to the theory of Optimal Transport (OT) problem \cite{villani2008optimal}, Flow Matching and Rectified Flows approaches typically have faster inference compared to classical diffusion models \cite{liu2023instaflow, pooladian2023multisample}. Also, it was shown that they can outperform diffusion models on the high-resolution text-to-image synthesis: they even lie in the foundation of the recent Stable Diffusion 3 model \cite{esser2024scaling}. \vspace{-1mm}

The extension of Flow Matching and Rectified Flow approaches to the SDE are Bridge Matching (Markovian projection) and \textbf{Iterative Markovian fitting} (IMF) procedures \cite{peluchetti2023non,shi2023diffusion, peluchetti2023diffusion}, respectively. They also have a close connection with the OT theory. Specifically, it is known \cite{shi2023diffusion, peluchetti2023diffusion} that IMF converges to the solution of the dynamic formulation of entropic optimal transport (EOT), also known as the Schrödinger Bridge (SB). However, learning continuous-time SDEs in IMF is non-trivial and, unfortunately, leads to \textbf{long inference} due to the necessity to use many steps of numerical solvers.

\textbf{Contributions.} This paper addresses the above-mentioned limitation of the existing Iterative Markovian Fitting (IMF) framework by introducing a novel approach to learn the Schrödinger Bridge.
\begin{enumerate}[leftmargin=*]
    \vspace{-2mm}
    \item \textbf{Theory I.} 
    We introduce a Discrete Iterative Markovian Fitting \textbf{(D-IMF)} procedure (\wasyparagraph\ref{sec:main-theorem}, \ref{sec:dmf-procedure}), which innovatively applies discrete Markovian projection to solve the Schrödinger Bridge problem without relying on Stochastic Differential Equations. This approach significantly simplifies the inference process, enabling it to be accomplished (theoretically) in just a few evaluation steps.
    \vspace{-00.5mm}
    \item \textbf{Theory II.} We derive closed-form update formulas for the D-IMF procedure when dealing with high-dimensional Gaussian distributions. This advancement permits a detailed empirical analysis of our method's convergence rate and enhances its theoretical foundation (\wasyparagraph\ref{sec:theory-gaussian-to-gaussian}, \ref{sec:exp-gaussian}).
    \vspace{-0.5mm}
    \item \textbf{Practice.} For general data distributions available by samples, we propose an algorithm \textbf{(ASBM)} to implement the discrete Markovian projection and our D-IMF procedure in practice (\wasyparagraph\ref{sec:image-experiments}). Our algorithm is based on adversarial learning and Denoising Diffusion GAN \cite{xiao2022tackling}. Our learned SB model uses just $4$ evaluation steps for inference (\wasyparagraph\ref{sec:practical-aspects}) instead of hundreds of the basic IMF \cite{shi2023diffusion}.
    \vspace{-2mm}
\end{enumerate}

\textbf{Notations.} In the paper, we simultaneously work with the continuous stochastic processes and discrete stochastic processes in the $D$-dimensional Euclidean space $\mathbb{R}^{D}$. We denote by $\mathcal{P}(C([0, 1]), \mathbb{R}^{D})$ the set of continuous stochastic processes with time $t \in [0,1]$, i.e., the set of distributions on continuous trajectories $f: [0, 1] \rightarrow \mathbb{R}^{D}$. We use $dW_t$ to denote the differential of the standard Wiener process.

To establish a link between continuous and discrete stochastic processes, we fix $N \geq 1$ intermediate time moments $0=t_{0}<t_{1}<\dots<t_{N} < t_{N+1}=1$ together with $t_0 = 0$ and $t_{N+1}=1$. We consider discrete stochastic processes with those time-moments as the elements of the set $\mathcal{P}(\mathbb{R}^{D\times (N+2)})$ of probability distributions on $\mathbb{R}^{D \times (N+2)}$. Among such discrete processes, we are specifically interested in subset $\mathcal{P}_{2,ac}(\mathbb{R}^{D\times (N+2)}) \subset \mathcal{P}(\mathbb{R}^{D \times (N+2)})$ of absolutely continuous distributions on $\mathbb{R}^{D \times (N+2)}$ which have a finite second moment and entropy. For any such $q \in\mathcal{P}_{2,ac}(\mathbb{R}^{D \times (N+2)})$, we write $q(x_0, x_{t_{1}}, \dots, x_{t_{N+1}})$ to denote its density at a point $(x_0, x_{t_{1}}, \dots,  x_{t_{N}}, x_1) \in \mathbb{R}^{D \times (N+2)}$. For continuous process $T$, we denote by $p^{T}\in \mathcal{P}(\mathbb{R}^{D\times(N+2)})$ the discrete process which is the finite-dimensional projection of $T$ to time moments $0=t_{0}<t_{1}<\dots<t_{N} < t_{N+1}=1$. For convenience we also use the notation $x_{\text{in}} = (x_{t_{1}}, \dots,  x_{t_{N}})$ to denote the vector of all intermediate-time variables. In what follows, KL is a short notation for the Kullback-Leibler divergence.

\vspace{-4mm}
\section{Background}
\vspace{-3.5mm}
\label{sec-background}
We start with recalling the Bridge Matching and Iterative Propotional Fitting procedures developed for continuous-time stochastic processes (\wasyparagraph\ref{sec:background-bridge-matching}). Next, we discuss the Schrödinger Bridge problem, the solution to which is the unique fixed point of Iterative Markovian Fitting procedure (\wasyparagraph\ref{sec:background-SB-problem}).

\vspace{-3mm}
\subsection{Bridge Matching and Iterative Markovian Fitting Procedures}\label{sec:background-bridge-matching}
\vspace{-3mm}

    Modern diffusion and flow generative modeling are mainly about the construction of a model that interpolates one probability distribution $p_0 \in \mathcal{P}_{2,ac}(\mathbb{R}^D)$ to some another probability distribution $p_1 \in \mathcal{P}_{2,ac}(\mathbb{R}^D)$. One of the general approaches for this task is the Bridge Matching \cite{liu2023i2sb, liu2022let,chung2024direct}.
    


\textbf{Reciprocal processes.} The Bridge Matching procedure is applied to the processes, which are represented as a mixture of Brownian Bridges. Consider the Wiener process $W^{\epsilon}$ with the {volatility $\epsilon$} which start at $p_0$, i.e., the process given by the SDE:  $dx_t = \sqrt{\epsilon}dW_t$, $x_0\sim p_0$. Let $W^{\epsilon}_{|x_0, x_1}$ denote the stochastic process $W^{\epsilon}$ conditioned on values $x_0,x_1$ at times ${t=0,1}$, respectively. This process $W^{\epsilon}_{|x_0, x}$ is called the Brownian Bridge \cite[Chapter 9]{ibe2013markov}. For some $q(x_0, x_1) \in \mathcal{P}_{2,ac}(\mathbb{R}^{D\times 2})$ with $q(x_0) = p_0(x_0)$ and $q(x_1)=p_1(x_1)$ the process $T_{q} \! \defeq \int W^{\epsilon}_{|x_0,x_1} dq(x_0, x_1)$ is called the mixture of Brownian Bridges. Following \cite{shi2023diffusion}, we say that mixtures of Brownian Bridges form a \textit{reciprocal class} of processes (for the Brownian Bridge). For brevity, we call these processes just reciprocal processes.





\textbf{Bridge matching \cite{liu2023i2sb, liu2022let}.}
The goal of Bridge Matching (with the Brownian Bridge) is to construct continuous-time Markovian process $M$ from $p_0$ to $p_1$ in the form of SDE: $dx_t = v(x_t, t)dt + \sqrt{\epsilon}dW_t$. This is achieved by using the \textit{Markovian projection} of a reciprocal process $T_{q} = \int W^{\epsilon}_{|x_0,x_1} dq(x_0, x_1)$, which aims to find the Markovian process $M$ which is the most similar to $T_{q}$ in the sense of KL:
\vspace{-1mm}
\begin{equation}
    \text{proj}_{\mathcal{M}}(T_{q}) \defeq  \argmin_{M\in\mathcal{M}} \KL{T_{q}}{M},
    \nonumber
\end{equation}
    where $\mathcal{M} \subset {\mathcal{P}(C([0, 1]), \mathbb{R}^{D})}$ is the set of all Markovian processes. For the Brownian Bridge $W^{\epsilon}_{|x_0, x_1}$ it is known \cite{shi2023diffusion, gushchin2024light} that the SDE and the drift $v(x_t, t)$ of $\text{proj}_{\mathcal{M}}(T_{q})$ is given by:
\begin{equation}
    dx_t = v(x_t, t)dt + \sqrt{\epsilon}dW_t, \quad v(x_t, t) = \int \frac{x_1 - x_t}{1-t} p^{T_{q}}(x_1|x_t) dx_1,
    \nonumber
\end{equation}
where $p^{T_{q}}(x_1|x_t)$ the conditional distribution of the stochastic process $T_{q}$ at time moments $t$ and $1$.  The process $\text{proj}_{\mathcal{M}}(T_{q})$ has the same time marginal distributions $p^{T_{q}}(x_t)$ as the original Brownian bridge mixture $T_{q}$. However, the joint distribution $p^{T_{q}}(x_0, x_1)$ of $T_{q}$ and the joint distribution $p^{\text{proj}_{\mathcal{M}}(T_{q})}(x_0,x_1)$ of its projection $\text{proj}_{\mathcal{M}}(T_{q})$ do not coincide in the general case \cite{de2023augmented}, see Figure~\ref{fig:cont_markovian_proj}.

\vspace{-3mm}
\begin{figure}[h]
    \centering
    \hspace{0mm}
    \includegraphics[width=0.8\linewidth]{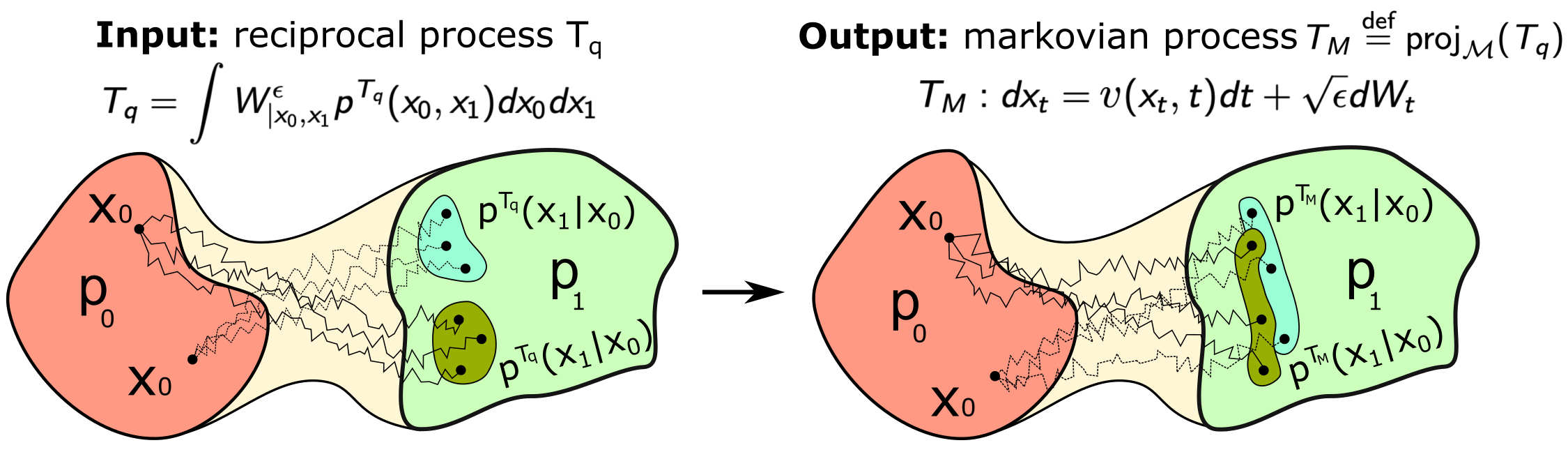}
    \caption{\centering Markovian projection of a reciprocal stochastic process $T_q$.}
    \label{fig:cont_markovian_proj}
\end{figure}
\vspace{-1mm}

\textbf{Iterative Markovian Fitting \cite{shi2023diffusion, peluchetti2023diffusion, brekelmans2023Schrodinger}.} The Iterative Markovian Fitting procedure introduces a second type of projection of continuous-time stochastic processes called the \textit{Reciprocal projection}. For a {process $T$}, it is  is defined by $\text{proj}_{\mathcal{R}}(T) =  \int W^{\epsilon}_{|x_0,x_1} dp^{T}(x_0,x_1)$, see illustrative Figure~\ref{fig:cont_reciprocal_proj}. 

\vspace{-2mm}
\begin{figure}[h]
    \hspace{-10mm}
    \includegraphics[width=1.10\linewidth]{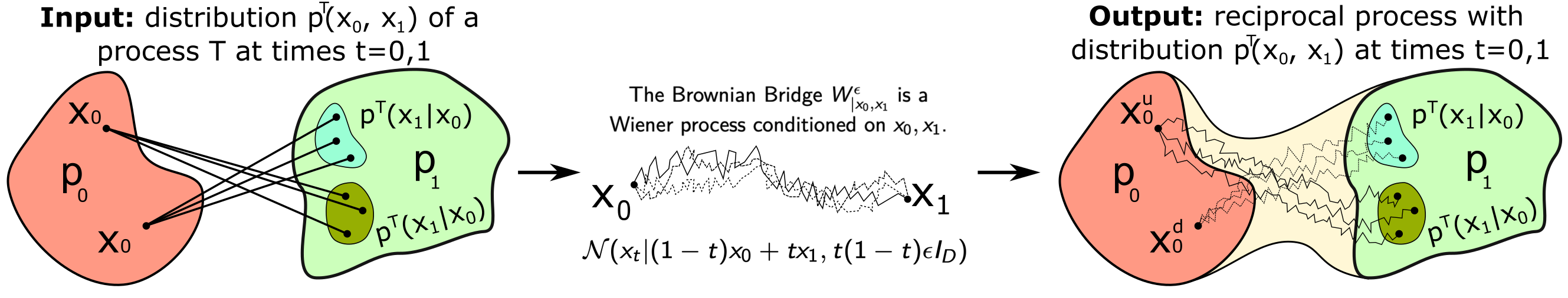}
    \caption{Reciprocal projection of a stochastic process $T$, i.e., $\text{proj}_{\mathcal{R}}(T) = \int W^{\epsilon}_{|x_0,x_1} dp^T(x_0, x_1)$.}
    \label{fig:cont_reciprocal_proj}
\end{figure} 
\vspace{-2mm}

The process $\text{proj}_{\mathcal{R}}(T)$ is called a projection, since:
\begin{equation}
    \text{proj}_{\mathcal{R}}(T) = \argmin_{R\in\mathcal{R}} \KL{T}{R},
    \nonumber
\end{equation}
where $\mathcal{R} \subset {\mathcal{P}(C([0, 1]), \mathbb{R}^{D})}$ is the set of all reciprocal processes. The Iterative Markovian Fitting procedure is an alternation between Markovian and Reciprocal projections:
\begin{equation}
    T^{2l + 1} = \text{proj}_{\mathcal{M}}(T^{2l}), \quad T^{2l + 2} = \text{proj}_{\mathcal{R}}(T^{2l + 1}),
    \nonumber
\end{equation}
It is known that the procedure converges to the unique stochastic process $T^*$, which is known as a solution to the Schrödinger Bridge (SB) problem between $p_0$ and $p_1$. Furthermore, the SB $T^*$ is the only process starting at $p_0$ and ending at $p_1$ that is both Markovian and reciprocal \cite{leonard2013survey}.


\vspace{-3mm}
\subsection{Schrödinger Bridge (SB) Problem}\label{sec:background-SB-problem}
\vspace{-2.5mm}
\textbf{Schrödinger Bridge problem.} The Schrödinger Bridge problem \cite{Schrodinger1931umkehrung} was proposed in 1931/1932 by Erwin Schrödinger. For the Wiener prior $W^{\epsilon}$ Schrödinger Bridge problem between two probability distributions $p_0 \in  \mathcal{P}_{2,ac}(\mathbb{R}^{D})$ and $p_1 \in  \mathcal{P}_{2,ac}(\mathbb{R}^{D})$ is to minimize the following objective:
\begin{equation}\label{eq:general_SB}
\min_{T \in \mathcal{F}(p_0, p_1)}\KL{T}{W^{\epsilon}},
\end{equation}
where $\mathcal{F}(p_0, p_1)\subset \mathcal{P}(C([0, 1]), \mathbb{R}^{D})$ is the subset of stochastic processes which starts at distribution $p_0$ (at the time $t=0$) and end at $p_1$ (at $t=1$). The Scrhödinger Bridge has a unique solution, which is a diffusion process $T^{*}$ described by the SDE: $dX_{t}=v^{*}(X_t,t)dt+\sqrt{\epsilon}dW_{t}$ \cite{leonard2013survey}. The process $T^{*}$ is called \textit{the Schrödinger Bridge} and $v^{*}:\mathbb{R}^{D}\times [0,1]\rightarrow \mathbb{R}^{D}$ is called \textit{the optimal drift}.

From the practical point of view, the solution to the SB problem $T^*$ tends to preserve the Euclidean distance between start point $x_0$ and endpoint $x_1$. The equivalent form of SB problem, the static Schrödinger Bridge problem, explains this property more clearly.

\textbf{Static Schrödinger Bridge problem.} 
One may decompose {KL$(T||W^{\epsilon})$} as \cite[Appendix C]{vargas2021solving}:
\begin{gather}
\text{KL}(T||W^{\epsilon}) = \text{KL}\big(p^{T}(x_0, x_1) || p^{W^{\epsilon}}(x_0, x_1)\big) + \int \text{KL}(T_{|x_0,x_1} || W^{\epsilon}_{|x_0,x_1})dp^{T}(x_0, x_1), \label{eq:disintegration}
\end{gather}
i.e., $\text{KL}$ divergence between $T$ and $W^{\epsilon}$ is a sum of two terms: the 1st represents the similarity of the processes' joint marginal distributions at start and finish times $t=0,1$, while the 2nd term represents the average similarity of conditional processes $T_{|x_0,x_1}$ and $W^{\epsilon}_{|x_0, x_1}$. In \cite[Proposition 2.3]{leonard2013survey}, the authors show that if $T^*$ solves \eqref{eq:general_SB}, then $T^*_{|x_0,x_1} = W^{\epsilon}_{|x_0,x_1}$. Hence, one may optimize \eqref{eq:general_SB} over $T$ for which $T_{|x_0,x_1} = W_{|x_0,x_1}^{\epsilon}$ for every $x_0,x_1$, i.e., over reciprocal processes $T$:
\begin{gather}
\eqref{eq:general_SB}=
\min_{T \in \mathcal{F}(p_0, p_1)\cap \mathcal{R}}\!\! \text{KL}\big(p^{T}(x_0,x_1) || p^{W^{\epsilon}}(x_0,x_1)\big)=
\!\!\!\min_{q \in \Pi(p_0, p_1)}\!\! \text{KL}\big(q(x_0, x_1) || p^{W^{\epsilon}}(x_0, x_1)\big), \label{eq:kl-eot-eqviv}
\end{gather}
where $\Pi(p_0, p_1) \subset \mathcal{P}_{2,ac}(\mathbb{R}^{D\times 2})$ is the set of joint probability distributions with marginal distributions $p_0$ and $p_1$. Thus, the initial Schrödinger Bridge problem can be solved by optimizing only over a reciprocal process's joint distribution $q(x_0, x_1)$ at $t=0,1$. This problem is called the Static Schrödinger Bridge problem. In turn, the problem \label{eq:eq:kl-eot-eqviv} can be rewritten in the following way \cite[Eq. 7]{gushchin2023entropic}:
\begin{gather}
 \min_{q\in \Pi(p_0, p_1)} \epsilon \text{KL}(q||p^{W^{\epsilon}}(x_0, x_1)) = \!\!\!\min_{q\in \Pi(p_0, p_1)} \int \frac{||x_0-x_1||^2}{2} dq(x_0,x_1)  - \epsilon \cdot \text{Entropy}(q) + C, \label{eq:eot-problem}
\end{gather}
i.e., as finding a joint distribution $q(x_0, x_1)$ which tries to minimize the Euclidian distance $\frac{||x-y||^2}{2}$ between $x_0$ and $x_1$ (preserve similarity between $x_0$ and $x_1$), but with the addition of entropy regularizer $\epsilon \cdot \text{Entropy}(q)$ with the coefficient $\epsilon$. Thus, the coefficient $\epsilon>0$, which is the same for all problems considered above, regulates the stochastic or diversity of samples from $q(x_0, x_1).$ The last problem \eqref{eq:eot-problem} is also known as the entropic optimal transport (EOT) problem \cite{cuturi2013sinkhorn,peyre2019computational,leonard2013survey}.

\vspace{-3mm}
\section{Adversarial Schrödinger Bridge Matching (ASBM)}
\label{sec-asbm}
\vspace{-3mm}

The IMF framework \cite{peluchetti2023diffusion,shi2023diffusion} works with \textit{continuous} time stochastic processes: it is built on the well-celebrated result that the only process which is both Markovian and reciprocal is the Schrödinger bridge $T^{*}$ \cite{leonard2013survey}. We derive an analogous theoretical result but for processes in \textit{discrete} time. We \underline{provide proofs} for all the theorems and propositions in Appendix~\ref{sec:proofs_appx}.

In \wasyparagraph\ref{sec:discrete-preliminaries}, we give preliminaries on discrete processes with Markovian and reciprocal properties. In \wasyparagraph\ref{sec:main-theorem}, we present the main theorem of our paper, which is the foundation of our \textbf{Discrete-time Iteratime Markovian Fitting (D-IMF)} framework. In \wasyparagraph\ref{sec:dmf-procedure}, we describe D-IMF procedure itself and prove that it allows us to solve the Schrödinger Bridge problem. In \wasyparagraph\ref{sec:theory-gaussian-to-gaussian}, we provide an analysis of applying our D-IMF for solving the Schrödinger Bridge between Gaussian distributions. In \wasyparagraph\ref{sec:practical-aspects}, we present the practical implementation of our D-IMF procedure using adversarial learning.

\vspace{-3mm}
\subsection{Discrete Markovian and reciprocal stochastic processes}\label{sec:discrete-preliminaries}
\vspace{-2.5mm}
\textbf{Discrete reciprocal processes.} We define the discrete reciprocal processes similarly to the continuous case by considering the finite-time projection of the Brownian bridge $W^{\epsilon}_{|x_0, x_1}$, which is given by:
\begin{eqnarray}
    \label{eq:brownian-bridge-transition}
    p^{W^{\epsilon}}(x_{t_1}, \dots, x_{t_{N}}| x_0, x_1) = \prod_{n=1}^{N} p^{W^{\epsilon}}(x_{t_{n}}|x_{t_{n-1}}, x_1),
    \\
    p^{W^{\epsilon}}(x_{t_{n}}| x_{t_{n-1}}, x_1) = \mathcal{N}(x_{t_{n}}| x_{t_{n-1}} + \frac{t_{n} - t_{n-1}}{1 - t_{n-1}}(x_{1} - x_{t_{n-1}}), \epsilon \frac{(t_n-t_{n-1})(1 - t_n)}{1 - t_{n-1}}).
\label{eq:brownian-sampling}
\end{eqnarray}
This joint distribution $p^{W^{\epsilon}}(x_{t_1}, \dots, x_{t_N}| x_0, x_1)$ defines a discrete stochastic process, which we call a discrete Brownian bridge. In turn, we say that a distribution $q\in \mathcal{P}_{2,ac}(\mathbb{R}^{D\times (N\times 2)})$ is a mixture of discrete Brownian bridges if it satisfies
\begin{equation}\label{eq:joint-brownian-bridge}
    q(x_0, x_{t_1}, \dots, x_{t_{N}}, x_{1}) = p^{W^{\epsilon}}(x_{t_1}, \dots, x_{t_{N}}| x_0, x_1) q(x_0, x_1),
    \nonumber
\end{equation}
where $q(x_0,x_1)$ denotes its joint marginal distribution of $q$ at times $0,1$. That is, its "inner" part at times $t_{1},\dots,t_{N}$ is the discrete Brownian Bridge. We denote the set of all such mixtures as $\mathcal{R}(N)\subset \mathcal{P}_{2,ac}(\mathbb{R}^{D\times (N+2)})$ and
call them discrete reciprocal processes.

\textbf{Discrete Markovian processes.} We say that a discrete process $q \in \mathcal{P}_{2,ac}(\mathbb{R}^{D\times (N+2)})$ is Markovian if its density can be represented in the following form (recall that $t_0=0, t_{N+1}=1)$:
\begin{equation}\label{eq:discrete-reciprocal-process}
    q(x_0, x_{t_{1}}, x_{t_2}, \dots,  x_{t_{N}}, x_1) = q(x_0) \prod_{n=1}^{N+1} q(x_{t_n}|x_{t_{n-1}}).
\end{equation}
We denote the set of all such discrete Markovian processes as $\mathcal{M}(N)\subset \mathcal{P}_{2,ac}(\mathbb{R}^{D\times (N+2)})$.
\vspace{-3mm}
\subsection{Main Theorem}\label{sec:main-theorem}
\vspace{-3mm}

\begin{theorem}[Discrete Markovian and reciprocal process is the solution of static SB] \label{thm:discrete-recioprocal-and-ma} Consider any discrete process
$q \in \mathcal{P}_{2,ac}(\mathbb{R}^{D\times (N+2)})$, which is simultaneously reciprocal and markovian, i.e. $q\in\mathcal{R}(N)$ and $q\in\mathcal{M}(N)$ and has marginals $q(x_0)=p_{0}(x_0)$ and $q(x_{1})=p_{1}(x_{1})$:
\begin{equation}
    q(x_0, x_{t_1}, \dots, x_{t_{N}}, x_1) = p^{W^{\epsilon}}(x_{t_1}, \dots, x_{t_{N}}| x_0, x_1)q(x_0, x_1) = q(x_0) \prod_{n=1}^{N+1} q(x_{t_n}|x_{t_{n-1}}),
    \nonumber
\end{equation}
Then $q(x_0, x_{t_{1}},\dots,x_{t_{N}}, x_1)=p^{T^{*}}(x_0, x_{t_{1}},\dots,x_{t_{N}}, x_1)$, i.e., it is the finite-dimensional projection of the Schrödinger Bridge $T^*$ to the considered times. Moreover, its joint marginal $q(x_0,x_1)$ at times $t=0,1$ is the solution to the \textbf{static SB} problem \eqref{eq:eot-problem} between $p_0$ and $p_1$, i.e., $q(x_0,x_1)=p^{T^{*}}(x_0,x_1)$.
\end{theorem}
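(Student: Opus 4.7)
The plan is to reduce the claim to the classical uniqueness of Schrödinger potentials for the static SB problem~\eqref{eq:eot-problem}. I will show that the discrete reciprocal-and-Markovian assumption on $q$ forces the boundary joint $q(x_0,x_1)$ to admit the Schrödinger factorization $\phi(x_0)\psi(x_1)p^{W^{\epsilon}}(x_0,x_1)$; uniqueness of EOT minimizers with marginals $(p_0,p_1)$ then identifies $q(x_0,x_1)$ with $p^{T^{*}}(x_0,x_1)$, and the reciprocal structure of both $q$ and $T^{*}$ extends this identification to the whole discrete grid.

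\textbf{Collision of the two factorizations.} Iterating the Markov property of $W^{\epsilon}$ gives the standard bridge identity
\begin{equation*}
p^{W^{\epsilon}}(x_{t_1},\dots,x_{t_N}\mid x_0,x_1) = \frac{\prod_{n=1}^{N+1} p^{W^{\epsilon}}(x_{t_n}\mid x_{t_{n-1}})}{p^{W^{\epsilon}}(x_1\mid x_0)}.
\end{equation*}
Substituting into the reciprocal decomposition of $q$, equating with its Markov decomposition, and using $q(x_0)=p_0(x_0)=p^{W^{\epsilon}}(x_0)$, I arrive at
\begin{equation*}
\frac{q(x_0,x_1)}{p^{W^{\epsilon}}(x_0,x_1)} \;=\; \prod_{n=1}^{N+1} \frac{q(x_{t_n}\mid x_{t_{n-1}})}{p^{W^{\epsilon}}(x_{t_n}\mid x_{t_{n-1}})}
\end{equation*}
almost everywhere. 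The left-hand side depends only on $(x_0,x_1)$, so the right-hand product, a priori a function of all inner variables, must in fact be constant in each inner $x_{t_n}$ for every fixed pair $(x_0,x_1)$.

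\textbf{Separation of variables and Schrödinger form.} Writing $r_n := q(\cdot\mid\cdot)/p^{W^{\epsilon}}(\cdot\mid\cdot)$, the previous rigidity is the multiplicative functional equation $\prod_{n=1}^{N+1} r_n(x_{t_{n-1}},x_{t_n}) = c(x_0,x_1)$. I would peel off one inner variable at a time: since the neighbouring pair $r_k r_{k+1}$ cannot depend on the shared variable $x_{t_k}$, a Cauchy--Pexider-type argument yields rank-one factors $r_n(x_{t_{n-1}},x_{t_n}) = G_n(x_{t_{n-1}}) H_n(x_{t_n})$ together with the coupling conditions $H_n(x)G_{n+1}(x) \equiv \mathrm{const}$ for $n=1,\dots,N$. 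Substituting back and absorbing constants gives the Schrödinger decomposition $q(x_0,x_1) = \phi(x_0)\psi(x_1)p^{W^{\epsilon}}(x_0,x_1)$ for some measurable $\phi,\psi\geq 0$.

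\textbf{Uniqueness, lifting, and main obstacle.} This Schrödinger factorization with marginals $p_0,p_1$ is unique (classical existence/uniqueness of Schrödinger potentials, equivalent to uniqueness of the minimizer of~\eqref{eq:eot-problem}), so $q(x_0,x_1) = p^{T^{*}}(x_0,x_1)$. Since $T^{*}$ is itself reciprocal, one has $p^{T^{*}}(x_0,x_{t_1},\dots,x_{t_N},x_1) = p^{W^{\epsilon}}(x_{t_1},\dots,x_{t_N}\mid x_0,x_1)\,p^{T^{*}}(x_0,x_1)$, which coincides with the reciprocal decomposition of $q$ under the identification just established; hence $q = p^{T^{*}}$ on the whole grid, as claimed. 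The technical heart of the argument is the separation-of-variables step: I expect the care to lie in extending the pointwise factorization to a measurable a.e.\ factorization on the true support of $q$ and in handling possible zeros of the transition densities. The remaining ingredients (the bridge identity and uniqueness of Schrödinger potentials) are standard and short.
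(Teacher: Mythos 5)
Your proposal is correct and follows the same core strategy as the paper's proof: collide the reciprocal and Markov factorizations of $q$, extract a separation-of-variables identity, conclude that $q(x_0,x_1)/p^{W^{\epsilon}}(x_0,x_1)$ has the Schrödinger product form $\phi(x_0)\psi(x_1)$, and finish by uniqueness of couplings of $(p_0,p_1)$ admitting that form, lifting to the full grid via reciprocality of $T^{*}$. The two executions differ in where the work is placed. First, the paper does not keep all $N$ inner variables: it integrates $q$ over $x_{t_2},\dots,x_{t_N}$ to reduce to the three time points $0,t_1,1$, obtaining the single identity $p^{W^{\epsilon}}(x_{t_1}|x_0,x_1)\,q(x_1|x_0)=q(x_{t_1}|x_0)\,q(x_1|x_{t_1})$; the separation step is then one instance of the elementary functional equation $f_3(x_0,x_1)=f_1(x_t,x_1)+f_2(x_t,x_0)$, solved by plugging in $x_1=0$ --- exactly your ``peel off one variable'' move, but needed only once. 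Your rank-one factorization of every ratio $r_n$ is more than is required: fixing all inner variables at a reference point already separates $c(x_0,x_1)$ directly. Second, where you cite classical uniqueness of Schrödinger potentials, the paper proves the needed uniqueness from scratch (its Stage 1): for two couplings of $(p_0,p_1)$ both of the form $C(x_0)\exp(-\|x_1-x_0\|^2/(2\epsilon))\phi(x_1)$, a direct computation shows $\KL{q^*}{q}=-\KL{q}{q^*}$, forcing both to vanish; your citation is legitimate but less self-contained. Both routes gloss over the same measure-theoretic point (logarithms of densities that may vanish off the support), which you at least flag explicitly; there is no genuine gap beyond that shared one.
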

Thus, to solve the static SB problem, it is enough to find a Markovian mixture of discrete Brownian bridges. To do so, we propose the Discrete-time Iterative Markovian Fitting (D-IMF) procedure.  

\vspace{-3mm}
\subsection{Discrete-time Iterative Markovian Fitting (D-IMF) procedure}\label{sec:dmf-procedure}
\vspace{-3mm}
Similar to the IMF procedure, our proposed Discrete-time IMF is based on two alternating projections of discrete stochastic processes: reciprocal and Markovian. We start with the reciprocal projection.
\begin{definition}[Discrete Reciprocal Projection]
Assume that $q \in \mathcal{P}_{2,ac}(\mathbb{R}^{D\times (N+2)})$ is a discrete stochastic process.
Then the reciprocal projection $\text{proj}_{\mathcal{R}}(q)$ is a discrete stochastic process with the joint distribution given by: 
\begin{equation}
    \big[\text{proj}_{\mathcal{R}}(q)\big](x_0, x_{t_1}, \dots, x_{t_{N}}, x_1) = p^{W^{\epsilon}}(x_{t_{1}}, \dots, x_{t_{N}}|x_0, x_1) q(x_0, x_1).
    \label{discrete-rp}
\end{equation}
\end{definition}
This projection takes the joint distribution of start and end points $q(x_0, x_1)$ and inserts the Brownian Bridge for intermediate time moments, see  Figure~\ref{fig:reciprocal_discrete}. The prop. below justifies the projection's name. 

\vspace{-2mm}
\begin{figure}[h]
    \hspace{-12mm}
    \includegraphics[width=1.14\linewidth]{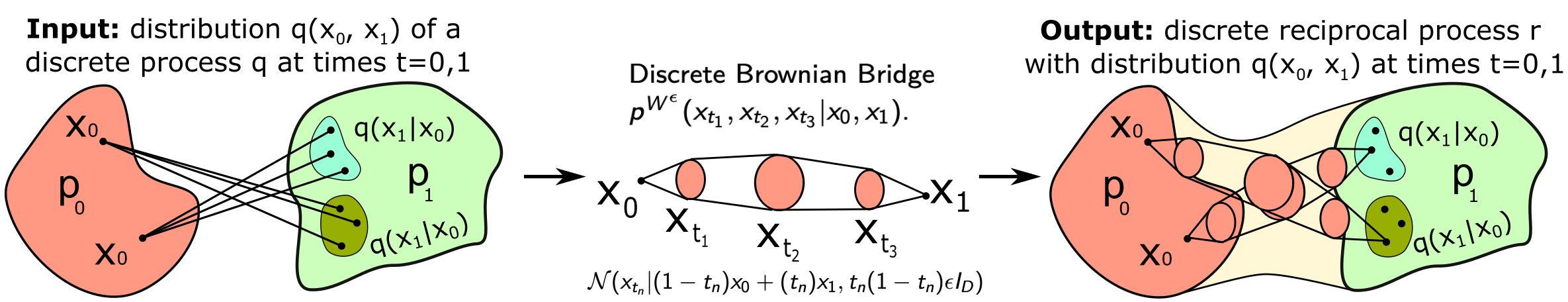}
    \caption{\centering Reciprocal projection of a discrete stochastic process $q$, i.e., $r(x_0, x_{t_1}, ..., x_{t_N}, x_1) = p^{W^\epsilon}(x_{t_1}, ..., x_{t_N}|x_0, x_1)q(x_0, x_1)$.}
    \label{fig:reciprocal_discrete}
\end{figure}
\vspace{-1.5mm}

\begin{proposition}[Discrete Reciprocal projection minimizes KL divergence with reciprocal processes]\label{thm:reicprocal-minimizes-KL}
Under mild assumptions, the reciprocal projection $\text{proj}_{\mathcal{R}}(q)$ of a stochastic discrete process $q \in \mathcal{P}_{2,ac}(\mathbb{R}^{D\times (N+2)})$ is the unique solution for the following optimization problem:
\begin{equation}
    \text{proj}_{\mathcal{R}}(q) = \argmin_{r \in \mathcal{R}(N)} \KL{q}{r}.
\end{equation}
\vspace{-4mm}
\end{proposition}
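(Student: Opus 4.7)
The plan is to exploit the chain rule (or disintegration) for the KL divergence along the split between the endpoint variables $(x_0, x_1)$ and the inner variables $x_{\text{in}} = (x_{t_1}, \dots, x_{t_N})$. Every $r \in \mathcal{R}(N)$ factors as $r(x_0, x_{\text{in}}, x_1) = p^{W^\epsilon}(x_{\text{in}}|x_0, x_1)\, r(x_0, x_1)$, i.e., the conditional piece is fixed across the entire class $\mathcal{R}(N)$ and only the endpoint marginal $r(x_0, x_1)$ is free. This suggests that the reciprocal projection problem collapses to a two-marginal KL minimization over $r(x_0, x_1)$.

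Concretely, I would factorize $q$ as $q(x_0, x_{\text{in}}, x_1) = q(x_{\text{in}}|x_0, x_1)\, q(x_0, x_1)$ and then expand
\begin{equation}
\KL{q}{r} = \KL{q(x_0, x_1)}{r(x_0, x_1)} + \mathbb{E}_{q(x_0, x_1)}\!\left[\KL{q(x_{\text{in}}|x_0, x_1)}{p^{W^\epsilon}(x_{\text{in}}|x_0, x_1)}\right].
\nonumber
\end{equation}
The second term on the right does not depend on $r$ at all, so minimizing $\KL{q}{r}$ over $r \in \mathcal{R}(N)$ reduces to minimizing $\KL{q(x_0, x_1)}{r(x_0, x_1)}$ over the endpoint marginals admissible in $\mathcal{R}(N)$. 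Any distribution in $\mathcal{P}_{2,ac}(\mathbb{R}^{D\times 2})$ is an admissible endpoint marginal, so the unique minimizer is $r(x_0, x_1) = q(x_0, x_1)$, which plugged back into the reciprocal factorization yields exactly $\text{proj}_{\mathcal{R}}(q)$ as defined in \eqref{discrete-rp}. Uniqueness of the minimizer follows from the strict convexity of $\KL{\cdot}{\cdot}$ in its first argument on the set where both terms are finite.

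The main subtlety — what the statement calls the "mild assumptions" — is making sure the decomposition above is legitimate, i.e., that we never run into $\infty - \infty$. The natural sufficient condition is that $q$ is absolutely continuous with respect to $\text{proj}_{\mathcal{R}}(q)$ (equivalently, that the conditional KL term is finite for $q$-a.e.\ $(x_0, x_1)$ and integrable); otherwise $\KL{q}{r} = +\infty$ for every $r \in \mathcal{R}(N)$, and the minimization is vacuous or trivially attained by $\text{proj}_{\mathcal{R}}(q)$ itself. Given the definition of $\mathcal{P}_{2,ac}$ (finite second moment and entropy), this finiteness is immediate for the canonical candidates, so the mild-assumption hypothesis is really just ensuring the problem is nondegenerate. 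The only remaining thing is to verify that the argmin lies in $\mathcal{R}(N)$ in the first place, which is clear from construction, since $\text{proj}_{\mathcal{R}}(q)$ was defined precisely to have the reciprocal factorization.
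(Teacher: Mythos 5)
Your proposal is correct and follows essentially the same route as the paper's proof: both decompose $\KL{q}{r}$ into a term independent of $r$ (the averaged conditional KL between $q(x_{\text{in}}|x_0,x_1)$ and the fixed Brownian-bridge conditional) plus $\KL{q(x_0,x_1)}{r(x_0,x_1)}$, and conclude the unique minimizer is $r(x_0,x_1)=q(x_0,x_1)$. Your treatment of the ``mild assumptions'' is slightly more explicit than the paper's (which simply posits the existence of some $r\in\mathcal{R}(N)$ with $\KL{q}{r}<\infty$), but the argument is the same.
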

Similarly to the discrete reciprocal projection, we introduce discrete Markovian projection.
\begin{definition}[Discrete Markovian Projection] Assume that $q \in \mathcal{P}_{2,ac}(\mathbb{R}^{D\times (N+2)})$ is a discrete stochastic process. The Markovian projection of $q$ is a discrete stochastic process $\text{proj}_{\mathcal{M}}(q) \in \mathcal{P}_{2,ac}(\mathbb{R}^{D \times (N+2)})$ whose joint distribution given by: \vspace{-2mm}
\begin{equation}
\big[\text{proj}_{\mathcal{M}}(q)\big](x_0, x_{t_1}, ..., x_{t_{N}}, x_1) = q(x_0) \prod_{n=1}^{N+1} q(x_{t_n}|x_{t_{n-1}}).
\label{discrete-mp}
\end{equation}
\vspace{-5mm}
\end{definition}

Despite it is possible to use any discrete stochastic process $q$ as an input to a discrete markovian projection, in the rest of the paper only discrete reciprocal processes are considered as an input. 
For such cases, we provide  a visualization of the markovian projection in Figure~\ref{fig:markovian_discrete}.
\vspace{-2mm}
\begin{figure}[h]
    \centering
    \includegraphics[width=0.8\linewidth]{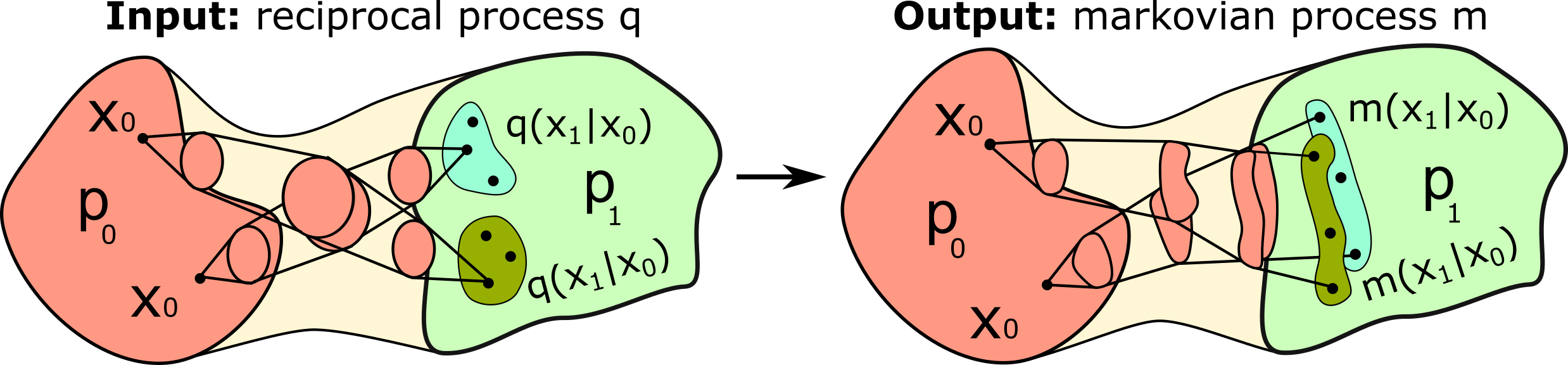}
    \caption{Markovian projection of a reciprocal discrete stochastic process $q$.}
    \label{fig:markovian_discrete}
\end{figure}
\vspace{-3mm}

As with the reciprocal projection, our following proposition justifies the name of the projection.
\begin{proposition}[Discrete Markovian projection minimizes KL divergence with Markovian processes]\label{thm:markovian-projections-minimizs-kl} Under mild assumptions, the Markovian projection $\text{proj}_{\mathcal{M}}(q)$ of a stochastic discrete process $q \in \mathcal{P}_{2,ac}(\mathbb{R}^{D\times (N+2)})$ is a unique solution to the following optimization problem:
\begin{equation}
    \text{proj}_{\mathcal{M}}(q) = \argmin_{m \in \mathcal{M}(N)} \KL{q}{m}.
\end{equation}
\vspace{-5mm}
\end{proposition}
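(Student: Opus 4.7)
The plan is to compute $\KL{q}{m}$ explicitly for a Markovian candidate $m \in \mathcal{M}(N)$ and show that the minimum is attained exactly at the factorization prescribed by \eqref{discrete-mp}. The key algebraic identity I would establish is
\begin{equation}
\KL{q}{m} = \KL{q}{\text{proj}_{\mathcal{M}}(q)} + \KL{q(x_0)}{m(x_0)} + \sum_{n=1}^{N+1} \mathbb{E}_{q(x_{t_{n-1}})}\KL{q(x_{t_n}\mid x_{t_{n-1}})}{m(x_{t_n}\mid x_{t_{n-1}})}.
\nonumber
\end{equation}
Once this decomposition is in hand, each of the three types of terms on the right is nonnegative by Gibbs' inequality, so $\KL{q}{m} \geq \KL{q}{\text{proj}_{\mathcal{M}}(q)}$, and $\text{proj}_{\mathcal{M}}(q)$ lies in $\mathcal{M}(N)$ by construction, yielding optimality. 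Uniqueness will follow from the strict positivity of KL: equality forces $m(x_0) = q(x_0)$ almost everywhere and $m(x_{t_n}\mid x_{t_{n-1}}) = q(x_{t_n}\mid x_{t_{n-1}})$ for $q(x_{t_{n-1}})$-almost every $x_{t_{n-1}}$, which via \eqref{eq:discrete-reciprocal-process} pins down $m$ up to a null set.

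To derive the decomposition, I would write $\KL{q}{m} = \mathbb{E}_q \log q - \mathbb{E}_q \log m$ and expand $\log m$ using the Markov factorization $\log m = \log m(x_0) + \sum_{n=1}^{N+1} \log m(x_{t_n}\mid x_{t_{n-1}})$. For each factor, since $m(x_{t_n}\mid x_{t_{n-1}})$ depends on only two coordinates, the expectation under $q$ collapses to an expectation under the two-dimensional marginal $q(x_{t_{n-1}}, x_{t_n})$. Doing the same expansion for $\text{proj}_{\mathcal{M}}(q)$ (whose factors are, by definition, the corresponding conditionals of $q$ itself), subtracting, and regrouping produces exactly the three KL terms displayed above; all entropy-of-$q$ contributions cancel.

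The "mild assumptions" mentioned in the statement are what I would tighten next: I need the one- and two-dimensional marginals $q(x_0)$, $q(x_{t_{n-1}})$, $q(x_{t_{n-1}}, x_{t_n})$ to admit densities (so conditionals $q(x_{t_n}\mid x_{t_{n-1}})$ are well defined a.e.), and I need all the KL and entropy quantities to be finite so that the subtraction in the identity above is legitimate, not a $\infty - \infty$. Both follow from $q \in \mathcal{P}_{2,ac}(\mathbb{R}^{D\times(N+2)})$ (absolute continuity gives densities to all marginals, finite second moment plus finite entropy keep the integrals controlled), plus the assumption that $\KL{q}{m}$ is actually finite for at least one $m \in \mathcal{M}(N)$ (otherwise the statement is vacuous).

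The main obstacle I anticipate is not conceptual but measure-theoretic bookkeeping: carefully justifying Fubini when splitting the expectation over $q$ into successive conditional expectations, and handling sets of measure zero when passing from "equality of conditionals $q$-a.e." to "equality of the joint Markov distribution." Once those are cleanly written down, the optimization itself is transparent because the KL objective decouples across the $N+2$ Markov factors, and Gibbs' inequality applies separately to each.
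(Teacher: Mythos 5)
Your proposal is correct and follows essentially the same route as the paper's proof: both expand $\log m$ via the Markov factorization, collapse each factor's expectation onto the two-time marginal $q(x_{t_{n-1}},x_{t_n})$, and reduce the objective to $\KL{q(x_0)}{m(x_0)}$ plus a sum of expected conditional KL terms plus a quantity independent of $m$, after which nonnegativity of KL gives optimality and uniqueness. The only (cosmetic) difference is that you identify the $m$-independent remainder as $\KL{q}{\text{proj}_{\mathcal{M}}(q)}$, turning the decomposition into an explicit Pythagorean identity, whereas the paper leaves it as an unnamed constant $C_2$.
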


Now we are ready to define our D-IMF procedure. For two given distributions $p_0 \in \mathcal{P}_{2,ac}(\mathbb{R}^{D})$ and $p_1 \in \mathcal{P}_{2,ac}(\mathbb{R}^{D})$ at times $t=0$ and $t=1$, respectively, it starts with any discrete Brownian mixture $p^{W^{\epsilon}}(x_{t_{1}}, \dots, x_{t_{N}}|x_0, x_1) q(x_0, x_1)$, where $q(x_0, x_1) \in \Pi(p_0, p_1)\cap \mathcal{P}_{2,ac}(\mathbb{R}^{D\times 2})$. Then, it constructs the following sequence of discrete stochastic processes:
\begin{gather}
    q^{2l+1} = \text{proj}_{\mathcal{M}}(q^{2l}), \quad q^{2l + 2} = \text{proj}_{\mathcal{R}}(q^{2l+1}).
    \label{eq:imf-updates}
\end{gather}
\begin{theorem}[D-IMF procedure converges to the the Schrödinger Bridge]\label{thm:dimf-converges} 
Under mild assumptions, the sequence $q^{l}$ constructed by our D-IMF procedure converges in KL to $p^{T^{*}}$. In particular, $q^{l}(x_0,x_1)$ convergence to the solution $p^{T^{*}}(x_0,x_1)$ of the static SB. Namely, we have
\begin{equation}
    \lim_{l \rightarrow \infty} \KL{q^{l}}{p^{T^{*}}} = 0, \qquad \text{and} \qquad \lim_{l \rightarrow \infty} \KL{q^{l}(x_0,x_1)}{p^{T^{*}}(x_0,x_1)} = 0.
    \nonumber
\end{equation}
\end{theorem}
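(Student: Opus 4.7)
The plan is to adapt the classical Csiszár--Tusnády style convergence analysis of alternating I-projections. The structural ingredient I will exploit is that $p^{T^*}$ already lies in the intersection $\mathcal{M}(N) \cap \mathcal{R}(N)$ by Theorem~\ref{thm:discrete-recioprocal-and-ma}, so Pythagorean identities for the two projections can both be applied with $p^{T^*}$ as target at every step of the iteration.

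First I would establish two Pythagorean identities: for every discrete process $q \in \mathcal{P}_{2,ac}(\mathbb{R}^{D\times(N+2)})$ and every $m \in \mathcal{M}(N)$ with the same start marginal as $q$,
\begin{equation}
    \KL{q}{m} = \KL{q}{\text{proj}_{\mathcal{M}}(q)} + \KL{\text{proj}_{\mathcal{M}}(q)}{m},
    \nonumber
\end{equation}
and analogously for $r \in \mathcal{R}(N)$ with the reciprocal projection. Both identities follow from a direct chain-rule decomposition: in the reciprocal case $\text{proj}_{\mathcal{R}}(q)$ shares the $(x_0,x_1)$-marginal of $q$ while its conditional law on $x_{\text{in}}$ equals the Brownian bridge kernel common to all elements of $\mathcal{R}(N)$, so the cross term cancels; in the Markovian case the one-step transitions of $\text{proj}_{\mathcal{M}}(q)$ match those of $q$, and the chain rule along the factorization \eqref{eq:discrete-reciprocal-process} splits the KL cleanly. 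Substituting $m = r = p^{T^*}$ yields the telescoping relation
\begin{equation}
    \KL{q^{l+1}}{p^{T^*}} = \KL{q^{l}}{p^{T^*}} - \KL{q^{l}}{q^{l+1}},
    \nonumber
\end{equation}
so $\KL{q^l}{p^{T^*}}$ decreases monotonically and $\sum_{l} \KL{q^{l}}{q^{l+1}} \le \KL{q^0}{p^{T^*}} < \infty$, provided the mild assumption that the initial KL is finite.

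To upgrade the monotone decrease into $\KL{q^l}{p^{T^*}} \to 0$, I would invoke a limit-point argument. Summability forces $\KL{q^{l}}{q^{l+1}} \to 0$, so by Pinsker's inequality consecutive iterates become arbitrarily close in total variation. The KL sublevel set $\{q : \KL{q}{p^{T^*}} \le \KL{q^0}{p^{T^*}}\}$ is tight, and by lower semicontinuity of KL any weak subsequential limit $q^{\infty}$ satisfies the marginal constraints $p_0, p_1$. Combining continuity of $\text{proj}_{\mathcal{M}}$ and $\text{proj}_{\mathcal{R}}$ on this sublevel set with $\KL{q^{l}}{q^{l+1}} \to 0$ forces $q^{\infty} \in \mathcal{M}(N) \cap \mathcal{R}(N)$, and Theorem~\ref{thm:discrete-recioprocal-and-ma} then identifies $q^{\infty} = p^{T^*}$. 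Uniqueness of the limit point together with monotone convergence of $\KL{q^l}{p^{T^*}}$ then yields convergence of the full sequence, and the marginal statement $\KL{q^l(x_0,x_1)}{p^{T^*}(x_0,x_1)} \to 0$ follows from the data-processing inequality.

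The main obstacle I anticipate is precisely the final identification step: passing from $\KL{q^{l}}{q^{l+1}} \to 0$ to $q^{\infty} \in \mathcal{M}(N) \cap \mathcal{R}(N)$ requires some regularity of the two projections in the KL/total-variation topology on the sublevel set, which is where the ``mild assumptions'' in the statement become essential. By contrast, the Pythagorean identities themselves are essentially algebraic manipulations of the factorizations \eqref{discrete-rp} and \eqref{discrete-mp}, and the monotone-decrease/summability step is then immediate; thus the bulk of the technical weight lies in the compactness and continuity arguments needed to identify the limit.
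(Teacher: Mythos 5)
Your proposal follows essentially the same route as the paper: the two Pythagorean identities for the projections (the paper's Proposition~\ref{thm:kl-pythagorean}), the telescoping/monotone-decrease argument giving $\sum_l \KL{q^l}{q^{l+1}} < \infty$ (Proposition~\ref{thm:limit-between-sequntial-iters}), and then a compactness-of-KL-sublevel-sets argument with lower semicontinuity of KL and closedness of $\mathcal{M}(N)$ and $\mathcal{R}(N)$ to identify the subsequential limit as an element of $\mathcal{M}(N)\cap\mathcal{R}(N)$, which Theorem~\ref{thm:discrete-recioprocal-and-ma} pins down as $p^{T^*}$. The only cosmetic difference is that the paper avoids asserting continuity of the projection maps, instead extracting nested subsequences of the Markovian and reciprocal iterates separately and showing their limits coincide via $\KL{m^*}{r^*}\le\liminf\KL{m^{l_{k_j}}}{r^{l_{k_j}}}=0$, which is a cleaner way to handle the identification step you flagged as the main obstacle.
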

\vspace{-3mm}
\subsection{Closed form Updates of D-IMF for Gaussian Distributions}\label{sec:theory-gaussian-to-gaussian}
\vspace{-3mm}
In this section, we show that our D-IMF updates \eqref{eq:imf-updates} can be derived in the closed form for the Gaussian case.
Let $p_0 = \mathcal{N}(x_0|\mu_0, \Sigma_0)$ and $p_1 = \mathcal{N}(x_1|\mu_1, \Sigma_1)$ be Gaussians. Consider any initial discrete Gaussian process $q \in \mathcal{P}_{2,ac}(\mathbb{R}^{D\times (N+2)})$ that has joint distribution $q(x_0, x_1) \in \Pi(p_0, p_1)$:
\begin{equation}\label{eq:gaussian-plan}
x_{01} \defeq 
\begin{pmatrix}
x_0 \\
x_1
\end{pmatrix}, \quad
\mu_{01} \defeq 
\begin{pmatrix}
\mu_0 \\
\mu_1
\end{pmatrix}, \quad
\Sigma = \begin{pmatrix}
    \Sigma_0 & \Sigma_{\text{cov}} \\
    \Sigma_{\text{cov}}^T & \Sigma_{1}
\end{pmatrix}, \quad
q(x_0, x_1) \defeq \mathcal{N}(
x_{01}|
\mu_{01},
\Sigma
)
\end{equation}
where $\Sigma\in\mathbb{R}^{2D\times 2D}$ is positive definite and symmetric and $\Sigma_{\text{cov}}$ is the covariance of $x_0$ and $x_1$. In this case, the result of updates \eqref{eq:imf-updates} is always a discrete Gaussian processes with specific parameters. To show this, we introduce two auxiliary matrices  $U\in\mathbb{R}^{ND\times 2D}$ and $K\in\mathbb{R}^{ND\times ND}$:
$$
\small
U \! \defeq \! 
\begin{pmatrix}
(1-t_1)I_{D} \!&\! t_1I_{D} \\
(1-t_2)I_{D} \!&\! t_2I_{D} \\
\vdots \! \!&\! \! \vdots \\
(1-t_{N})I_{D} \!\!&\!\! t_{N}I_{D}
\end{pmatrix} \! , \quad 
K \! \defeq \!
\begin{pmatrix}
 t_{1}(1-t_1)I_{D} \!\!\!&\!\!\!  t_{1}(1-t_2)I_{D} \!\!\!&\!\!\!\! \dots \!\!\!\!&\!\!\! t_{1}(1-t_{N})I_{D} \\
 t_{1}(1-t_2)I_{D} \!\!\!&\!\!\!  t_{2}(1-t_2)I_{D} \!\!\!&\!\!\!\! \dots \!\!\!\!&\!\!\! t_{2}(1-t_{N})I_{D} \\
\vdots \!\!\!&\!\!\! \vdots \!\!\!&\!\!\! \dots \!\!\!&\!\!\! \vdots \\
 t_{1}(1-t_{N})I_{D} \!\! &\!\!  t_{2}(1-t_{N})I_{D} \!\! &\!\!\!  \dots \!\!\! &\!\! t_{N}(1-t_{N})I_{D} \\
\end{pmatrix}
$$
Here $I_{D}$ is an identity matrix with the shape $D\times D$. Below we present updates for both projections.

\begin{theorem}[Reciprocal projection of a process whose joint marginal distribution is Gaussian]\label{thm:gauss-reciprocal}
Assume that $q \in \mathcal{P}_{2,ac}(\mathbb{R}^{D\times (N+2)})$ has Gaussian joint distribution $q(x_0, x_1)$ given by \eqref{eq:gaussian-plan}. Then 
\begin{equation}\label{eq:gaussian-reciprocal-projection}
    [\text{proj}_{\mathcal{R}}q](x_{\text{in}}, x_0, x_1) = \mathcal{N}(
    \begin{pmatrix}
    x_{\text{in}} \\
    x_{01} \\
    \end{pmatrix}|
    \begin{pmatrix} 
    U\mu_{01} \\
    \mu_{01} \\
    \end{pmatrix}, 
    \Sigma_{R}
    ), \quad
    \Sigma_{R} \defeq 
    \begin{pmatrix}
    \epsilon K +  U\Sigma U^T  & U\Sigma \\
    (U\Sigma)^T & \Sigma \\
    \end{pmatrix}
\end{equation}
\end{theorem}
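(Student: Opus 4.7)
The strategy is to recognize the reciprocal projection as a linear-Gaussian hierarchy and then read off the mean and covariance blocks. By definition~\eqref{discrete-rp}, the joint density splits as $p^{W^{\epsilon}}(x_{t_1},\dots,x_{t_N}\mid x_0,x_1)\,q(x_0,x_1)$. The first factor is Gaussian in $x_{\text{in}}$ thanks to the product-of-transitions formula \eqref{eq:brownian-bridge-transition}--\eqref{eq:brownian-sampling}, and the second is Gaussian by hypothesis~\eqref{eq:gaussian-plan}. Hence the joint density of $(x_{\text{in}},x_{01})$ is Gaussian, and the only remaining work is to identify its mean vector and covariance matrix.

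First, I would repackage the Brownian-bridge conditional as a linear-Gaussian model $x_{\text{in}}\mid x_{01}\sim\mathcal{N}(U x_{01},\,\epsilon K)$. The mean $U x_{01}$ is immediate from iterating the conditional mean in \eqref{eq:brownian-sampling}: the conditional expectation of $x_{t_n}$ given $(x_0,x_1)$ telescopes to the linear interpolation $(1-t_n)x_0 + t_n x_1$, which is precisely the $n$-th row block of $U$ acting on $x_{01}$. For the covariance, the entries $K_{nn'} = t_{\min(n,n')}(1-t_{\max(n,n')})$ are exactly the classical Brownian bridge covariance $\mathrm{Cov}(B_s,B_t)=\min(s,t)-st$ (scaled by $\epsilon$). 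One can either quote this fact or verify it by induction on $N$ from \eqref{eq:brownian-sampling}: each new step adds independent noise of variance $\epsilon(t_n-t_{n-1})(1-t_n)/(1-t_{n-1})$ together with a deterministic contraction toward $x_1$, and these combine to reproduce the telescoping pattern.

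With the conditional in hand, I would write $x_{\text{in}} = U x_{01} + \eta$ for $\eta\sim\mathcal{N}(0,\epsilon K)$ independent of $x_{01}\sim\mathcal{N}(\mu_{01},\Sigma)$, making $(x_{\text{in}},x_{01})$ an affine image of independent Gaussians. A direct computation then yields $\mathbb{E}[x_{\text{in}}]=U\mu_{01}$, $\mathbb{E}[x_{01}]=\mu_{01}$, and the three covariance blocks $\mathrm{Cov}(x_{01})=\Sigma$, $\mathrm{Cov}(x_{\text{in}},x_{01}) = U\,\mathrm{Cov}(x_{01})=U\Sigma$, and $\mathrm{Cov}(x_{\text{in}})=U\Sigma U^T + \epsilon K$. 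Assembling these blocks in the ordering $(x_{\text{in}},x_{01})$ gives exactly $\Sigma_R$ as claimed in \eqref{eq:gaussian-reciprocal-projection}.

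The only genuinely non-trivial step is establishing the multi-time covariance identity $\mathrm{Cov}(x_{\text{in}}\mid x_{01}) = \epsilon K$; once that is granted, the rest is routine bookkeeping with affine transformations of Gaussians. I would handle this step via a short induction using \eqref{eq:brownian-sampling}, or by directly invoking the standard Brownian bridge covariance, whichever is deemed cleaner for the exposition.
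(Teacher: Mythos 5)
Your proposal is correct, and it reaches the result by a noticeably more direct route than the paper. The first half coincides with the paper's argument: both identify the discrete Brownian bridge conditional as $\mathcal{N}(x_{\text{in}}\mid U x_{01},\,\epsilon K)$, with the mean coming from the linear interpolation $(1-t_n)x_0+t_n x_1$ and the covariance from the classical bridge formula $\epsilon\, s(1-t)$ for $s\le t$ (the paper simply cites this fact for the continuous bridge; your offer to verify it by induction from \eqref{eq:brownian-sampling} is a harmless strengthening). The two arguments then diverge on how the joint covariance is assembled. The paper expands the log-density of the product $p^{W^{\epsilon}}(x_{\text{in}}\mid x_0,x_1)\,q(x_0,x_1)$ as a quadratic form, matches coefficients to read off the precision-matrix blocks $A=(\epsilon K)^{-1}$, $B=-(\epsilon K)^{-1}U$, $C=U^T(\epsilon K)^{-1}U+\Sigma^{-1}$, and then applies the block-wise matrix inversion formula to recover $\Sigma_R$. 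You instead represent the hierarchy as $x_{\text{in}}=Ux_{01}+\eta$ with $\eta\sim\mathcal{N}(0,\epsilon K)$ independent of $x_{01}$, and compute the three covariance blocks $\mathrm{Cov}(x_{01})=\Sigma$, $\mathrm{Cov}(x_{\text{in}},x_{01})=U\Sigma$, $\mathrm{Cov}(x_{\text{in}})=U\Sigma U^T+\epsilon K$ directly. Your route avoids both the coefficient-matching and the block inversion, and in particular never needs $K$ or $\Sigma$ to be inverted, which is a small but genuine gain in robustness and brevity; the paper's precision-matrix computation, on the other hand, is the kind of bookkeeping that generalizes mechanically to products of Gaussian densities where no clean generative representation is available. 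Both arguments are complete and yield the same $\Sigma_R$.
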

\begin{theorem}[Markovian projection of a discrete Gaussian process]\label{thm:gaauss-markovian}
    Assume that $q \! \in \! \mathcal{P}_{2,ac}(\mathbb{R}^{D\times (N+2)})$ is a discrete Gaussian process with $q(x_0, x_1)$ given by \eqref{eq:gaussian-plan} and the density
    $$
    q(x_{\text{in}}, x_0, x_1) = \mathcal{N}(
    \begin{pmatrix}
    x_{\text{in}} \\
    x_{01} \\
    \end{pmatrix}|
    \begin{pmatrix} 
    \mu_{\text{in}} \\
    \mu_{01} \\
    \end{pmatrix}, 
    \widetilde{\Sigma}_{R}
    ), \quad
    \mu_{in} = (\mu_{t_{1}}, \dots, \mu_{t_{N}}),
    $$
    where $\mu_{\text{in}}$ and $\widetilde{\Sigma}_{R}$ are some parameters of $q$. Then its Markovian projection is given by:
    \vspace{-1mm}
    \begin{eqnarray}
        [\text{proj}_{\mathcal{M}}q](x_{\text{in}}, x_0, x_1) = q(x_0)\prod_{n=1}^{N+1} q(x_{t_{n}}|x_{t_{n-1}}), \quad q(x_{t_{n}}|x_{t_{n-1}}) = \mathcal{N}(x_{t_{n}}| \widehat{\mu}_{t_{n}}(x_{t_{n-1}}), \widehat{\Sigma}_{t_{n}}),
        \nonumber
        \\
        \widehat{\mu}_{t_{n}}(x_{t_{n-1}}) = \mu_{t_{n}} + (\widetilde{\Sigma}_{R})_{t_{n}, t_{n-1}}((\widetilde{\Sigma}_{R})_{t_{n-1}, t_{n-1}})^{-1}(x_{t_{n-1}} - \mu_{t_{n-1}}),
        \nonumber
        \\
        \widehat{\Sigma}_{t_{n}} = (\widetilde{\Sigma}_{R})_{t_{n}, t_{n}} -  (\widetilde{\Sigma}_{R})_{t_{n}, t_{n-1}}((\widetilde{\Sigma}_{R})_{t_{n-1}, t_{n-1}})^{-1}((\widetilde{\Sigma}_{R})_{t_{n}, t_{n-1}})^T.
        \nonumber
    \end{eqnarray}
    In turn, the joint distribution $[\text{proj}_{\mathcal{M}}q](x_0, x_1)$ is given by
\begin{eqnarray}
    \!\! [\text{proj}_{\mathcal{M}}q](x_0, x_1) = \mathcal{N}(
    \begin{pmatrix}
        x_0 \\
        x_1
    \end{pmatrix}\!|\!
    \begin{pmatrix}
        \mu_0 \\
        \mu_1
    \end{pmatrix} \!, \!
    \begin{pmatrix}
        \Sigma_0 & \!\!\! \Sigma_{01} \\
        (\Sigma_{01})^T & \!\!\! \Sigma_1 \\
    \end{pmatrix}), \quad \!\!\!\!\!\!
    \Sigma_{01}^T = \Big[\prod_{n=1}^{N+1}(\widetilde{\Sigma}_{R})_{t_{n+1}, t_{n}}((\widetilde{\Sigma}_{R})_{t_{n}, t_{n}})^{-1}\Big]\Sigma_0.
    \nonumber
\end{eqnarray}
Here $(\widetilde{\Sigma}_{R})_{t_{i}, t_{j}}$ is the submatrix of $\widetilde{\Sigma}_{R}$ denoting the covariance of $x_{t_{i}}$ and $x_{t_{j}}$, while $\Sigma_0 $ and $\Sigma_1$ are covariance matrices of $x_0$ and $x_1$, respectively.
\end{theorem}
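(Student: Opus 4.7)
The plan is to reduce the statement to two standard facts about multivariate Gaussians: (i) any conditional distribution of a block of a jointly Gaussian vector is Gaussian, with mean and covariance given by the Schur-complement formulas, and (ii) a Gaussian Markov chain admits an explicit rolled-out linear representation. Since $q \in \mathcal{P}_{2,ac}(\mathbb{R}^{D\times (N+2)})$, the covariance $\widetilde{\Sigma}_R$ is strictly positive definite, so all diagonal principal submatrices we need to invert are nonsingular.

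For the transition kernels $q(x_{t_n}\mid x_{t_{n-1}})$, I would apply (i) directly. The subvector $(x_{t_{n-1}}, x_{t_n})$ is jointly Gaussian with mean $(\mu_{t_{n-1}}, \mu_{t_n})$ and $2D\times 2D$ covariance assembled from the corresponding blocks of $\widetilde{\Sigma}_R$. Plugging these blocks into the standard conditional-Gaussian formulas yields exactly the stated $\widehat{\mu}_{t_n}(x_{t_{n-1}})$ and $\widehat{\Sigma}_{t_n}$. This step is pure bookkeeping.

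For the joint $(x_0,x_1)$ marginal of $\text{proj}_{\mathcal{M}}(q)$, I would use (ii). First note that $\text{proj}_{\mathcal{M}}(q)$ preserves one-time marginals: marginalizing $q(x_0)q(x_{t_1}\mid x_0)$ over $x_0$ reproduces $q(x_{t_1})$ by Bayes, and an obvious induction shows $[\text{proj}_{\mathcal{M}}q](x_{t_n}) = q(x_{t_n})$ for every $n$. In particular $x_0\sim\mathcal{N}(\mu_0,\Sigma_0)$ and $x_1\sim\mathcal{N}(\mu_1,\Sigma_1)$ under the projection, so only the cross-covariance $\Sigma_{01}$ is nontrivial. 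Setting $A_n \defeq (\widetilde{\Sigma}_R)_{t_n,t_{n-1}}((\widetilde{\Sigma}_R)_{t_{n-1},t_{n-1}})^{-1}$ and using the conditional-Gaussian formula from the first step, I can write the Markov representation
\begin{equation}
x_{t_n} - \mu_{t_n} = A_n(x_{t_{n-1}} - \mu_{t_{n-1}}) + \eta_n, \qquad \eta_n \sim \mathcal{N}(0,\widehat{\Sigma}_{t_n}) \text{ mutually independent and independent of } x_0. \nonumber
\end{equation}
Unrolling this recursion from $n=1$ to $n=N+1$ yields $x_1 - \mu_1 = (A_{N+1}A_N\cdots A_1)(x_0-\mu_0) + \xi$, where $\xi$ is a Gaussian independent of $x_0$. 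Taking the expected outer product $\mathbb{E}[(x_0-\mu_0)(x_1-\mu_1)^T]$ kills the $\xi$ term and gives $\Sigma_{01} = \Sigma_0(A_{N+1}\cdots A_1)^T$, i.e.\ $\Sigma_{01}^T = (A_{N+1}\cdots A_1)\Sigma_0$, which is exactly the claimed product formula.

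The main (minor) obstacle is bookkeeping rather than mathematics: one must keep track of the right-to-left composition order in the matrix product and carefully match the indexing of the blocks of $\widetilde{\Sigma}_R$ with the paper's notation. Beyond that, everything is routine Gaussian linear algebra combined with the tower property of conditional expectation.
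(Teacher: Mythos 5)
Your proposal is correct and follows essentially the same route as the paper's proof: part one is the identical application of the conditional-Gaussian (Schur complement) formulas to the blocks of $\widetilde{\Sigma}_R$, and part two rests on the same key observation that the projection preserves one-time marginals and that the conditional means compose as the affine maps $A_n$, so $\Sigma_{01}^T = (A_{N+1}\cdots A_1)\Sigma_0$. The only (cosmetic) difference is that you extract the cross-covariance by unrolling the innovations representation and taking an outer product, whereas the paper recursively computes $\mathbb{E}[x_{t_n}\mid x_0]$ via the tower property and then reads off $\Sigma_{01}^T = G\Sigma_0$ from the regression matrix $G$ — the same calculation in different packaging.
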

Thus, if we start D-IMF from some discrete process $q^0$ with marginals $q^0(x_0) = p_0(x_0)$, $q^0(x_1) = p_1(x_1)$ and Gaussian $q(x_0,x_1)$, then at each iteration of our D-IMF procedure $q^{l}$ will be discrete Gaussian process with the same marginals and eventually will converge to $q^*$. In \wasyparagraph\ref{sec:exp-gaussian}, we use our derived closed-form to perform an experimental analysis of D-IMF's convergence depending on the number of intermediate time moments $N$ and the value of coefficient $\epsilon$. 

\vspace{-3mm}
\subsection{Practical Implemetation of D-IMF: ASBM Algorithm}\label{sec:practical-aspects}
\vspace{-3mm}
To implement our D-IMF procedure in practice, one should choose the process $q^{0}$ and implement both discrete Markovian and reciprocal projections. Note that one is usually not interested in the processes' density but only needs the ability to sample endpoints $x_{1}$ (or trajectories $x_0, x_{t_{1}},\dots, x_{t_{N}}, x_{1}$) given a starting point $x_0$ ($=x_{t_0})$. Thus, to solve SB between $p_0(x_0)$ and $p_1(x_1)$ one should choose $q^{0}$ to have start and end marginals $q^{0}(x_0) = p_0(x_0)$ and $q^{0}(x_1) = p(x_1)$ accessible by samples. 


\textbf{Implementation of the discrete reciprocal projection.} 
The reciprocal projection \eqref{discrete-rp} of a given discrete process $q(x_0, x_{\text{in}}, x_1)$ is easy if one can sample from $q(x_0,x_1)$.
To sample from $\text{proj}_{\mathcal{R}}(q)$ it is enough to sample first a pair $(x_0, x_1) \sim q(x_0, x_1)$ and then sample intermediate points $x_{t_{1}},\dots,x_{t_{N}}$ from the Brownian bridge $p^{W^{\epsilon}}(x_{t_{1}}, \dots, x_{t_{N}}|x_0, x_1)$. This can be straightforwardly done using the formula \eqref{eq:brownian-bridge-transition} where the involved distributions \eqref{eq:brownian-sampling} are simple Gaussians which are easy to sample from.

\textbf{Implementation of the discrete Markovian projection via DD-GAN.} To find the Markovian projection \eqref{discrete-mp} of a reciprocal process $q \in \mathcal{R}(N)$, one just needs to estimate the transition probabilities
between sequential time moments, i.e., the set $\{q(x_{t_n}|x_{t_{n-1}})\}^{N+1}_{n=1}$ and use the starting marginal $[\text{proj}_{\mathcal{M}}q](x_0) = q(x_0) = p_0(x_0)$. The natural way to find transition probabilities is to set to parametrize all these distributions as $\{q_{\theta}(x_{t_{n}}|x_{t_{n-1}})\}_{n=1}^{N+1}$ and solve 
\vspace{-2mm}
\begin{equation}
\min_{\theta} \sum_{n=1}^{N+1}\mathbb{E}_{q(x_{t_{n-1}})} D_{\text{adv}}\big(q(x_{t_n}|x_{t_{n-1}}) || q_{\theta}(x_{t_n}|x_{t_{n-1}})\big),
\label{adv-loss}
\end{equation}
where $D_{\text{adv}}$ is some distance or divergence between probability distributions. In this case, a minimum of such loss is achieved when $q_{\theta}(x_{t_{n}}|x_{t_{n-1}})=q(x_{t_{n}}|x_{t_{n-1}})$ for each $n \in \{1, 2, \dots, N+1\}$.



We note that a related setting is considered in the Denoising Diffusion GANs (DD-GAN), see \cite[Eq. 4]{xiao2022tackling}. The difference is in the nature of $q$: there $q$ comes from the standard noising diffusion process, while in our case it is a given reciprocal process. Overall, the authors show that problems like \eqref{adv-loss} can be efficiently approached via time-conditioned GANs. Therefore, we naturally pick \textit{DD-GAN approach as the backbone} to learn our discrete Markovian projection and use their best practices.

In short, following DD-GAN, we parameterize $q_{\theta}(x_{t_n}|x_{t_{n-1}})$ via a time-conditioned generator network $G_{\theta}(x_{t_{n-1}},z,t_{n-1})$.
As in DD-GAN, we use the non-saturating GAN loss \cite{goodfellow2014generative} as $D_{\text{adv}}$, which optimizes softened reverse KL-divergence \cite{shannon2020non}. To optimize this loss, an additional conditional discriminator network $D(x_{t_{n-1}}, x_{t_{n}}, t_{n-1})$ is needed. We do not recall technical details here as they are the same as in DD-GAN. For further \underline{details} on DD-GAN learning, we refer to Appendix \ref{sec-ddgan-implementation}.

Note that after learning $\{q_{\theta}(x_{t_{n}}|x_{t_{n-1}})\}_{n=1}^{N+1}$ the sampling assumes to take sample from $q_0(x_0) = p(x_0)$ and then sample from $\{q_{\theta}(x_{t_{n}}|x_{t_{n-1}})\}_{n=1}^{N+1}$. Hence it is guaranteed that $q_0(x_0) = p(x_0)$, but there may be an approximation error in estimating $q_1(x_1) \approx p(x_1)$. This is due to the asymmetry of the definition of Markovian projection, i.e., it can be written in two equivalent ways:
\begin{eqnarray}
    \big[\text{proj}_{\mathcal{M}}(q)\big](x_0, x_{t_1}, ..., x_{t_{N}}, x_1) = q(x_0) \prod_{n=1}^{N+1} q(x_{t_n}|x_{t_{n-1}}) = q(x_1)\prod_{n=1}^{N+1}q(x_{t_{n-1}}|x_{t_{n}}).
    \nonumber
\end{eqnarray}
Analogously to the implementation of IMF \cite[Algorithm 1]{shi2023diffusion}, we address this asymmetry in our D-IMF by alternatively learning Markovian projection in forward and reverse directions. To learn Markovian projection in the reverse direction, we just need to use starting marginal $[\text{proj}_{\mathcal{M}}q](x_1) = p_1(x_1)$, parametrize $\{q_{\eta}(x_{t_{n-1}}|x_{t_{n}})\}_{n=1}^{N+1}$ and solve:
\begin{equation}
\min_{\eta} \sum_{n=1}^{N+1}\mathbb{E}_{q(x_{t_{n}})} D_{\text{adv}}\big(q(x_{t_{n-1}}|x_{t_{n}}) || q_{\eta}(x_{t_{n-1}}|x_{t_{n}})\big).
\label{adv-loss-2}
\end{equation}
In this case $q(x_1) = p_1(x_1)$ is guaranteed, while $q(x_0) \approx p_0(x_0)$.






\textbf{Implementation of the D-IMF procedure} \textbf{(ASBM algorithm)}. 
We start with initialization of $q^{0}$ by the reciprocal process. Depending on the setup we use initialization with the independent coupling, i.e. $q^{0}(x_0, x_1)$ = $p_0(x_0)p_1(x_1)$ or a minibatch OT coupling \cite{fatras2020learning,pooladian2023multisample}, see Appendix \ref{sec-hyperparameters} for details.

We follow the best practices of IMF \cite{shi2023diffusion} and in the Markovian projection steps, we alternately learn models in the direction $p_0 \! \! \rightarrow \! p_1$ and in the reverse direction $p_1 \!\! \rightarrow \! p_0$ by using functionals \eqref{adv-loss} and \eqref{adv-loss-2} respectively to avoid the accumulation of errors due to the asymmetry in the definition of the Markovian projection. For details, see Appendix~\ref{sec-dimf-implementation}. At the reciprocal projection steps, we use the model $q_{\theta}(x_0, x_{\text{in}}, x_1)$ or $q_{\eta}(x_0, x_{\text{in}}, x_1)$ learned to approximate $q^{2l+1}$ to sample pair $(x_0, x_1)$ and then sample intermediate points from Brownian bridge. We use the term \textbf{outer iteration} ($K$) for a sequence of two reciprocal projections and two Markovian projections in different directions.






\vspace{-3mm}
\subsection{Relation to Prior Works}
\vspace{-3mm}

There exists a variety of algorithms for learning SB based on different underlying principles: dual form entropic optimal transport algorithms \cite{daniels2021score,mokrov2024energyguided,korotin2024light,gushchin2024light,gushchin2023entropic,seguy2018large}, iterative proportional fitting (IPF) algorithms \cite{vargas2021solving,de2021diffusion,chen2021likelihood}, bridge matching \cite{tong2023simulation,liu2023i2sb} and iterative Markovian fitting (IMF) algorithms \cite{shi2023diffusion,peluchetti2023diffusion,liu2023generalized,kholkin2024diffusion}, adversarial algorithms \cite{kim2024unpaired}, etc. We refer to \cite{gushchin2023building}  for a benchmark and to \cite[Table 1]{korotin2024light} for a quick survey of many of them. In turn, in our paper, we specifically focus on the advancement of IMF-type algorithms \cite{shi2023diffusion,peluchetti2023diffusion} as it they are not only theoretically well-grounded but also closely connected to the rectified flow approach \cite{liu2022flow} which works well in large-scale generative modeling \cite{liu2023instaflow,yan2024perflow}. Below we discuss the relation of our contributions (\wasyparagraph\ref{sec-introduction}) to the prior works in IMF \cite{shi2023diffusion,peluchetti2023diffusion}.

\textbf{Theory I}. As we detailed in \wasyparagraph\ref{sec-background}, basic IMF  operates with stochastic processes in continuous time and iteratively performs Markovian and reciprocal projections. Our D-IMF procedure (\wasyparagraph\ref{sec-asbm}) does the same but in the discrete time, so it might \textit{deceptively} seem like our D-IMF is just an approximation of IMF. However, this is a misleading viewpoint. Indeed, the Markovian projection in the discrete time, in general, does not match with the continuous time Markovian projection. Still our D-IMF procedure \textit{provably} converges to SB. Furthermore, D-IMF procedure can theoretically work with just one intermediate time step (when $N=1$). Overall, its convergence rate varies depending on the number of intermediate points, see \wasyparagraph\ref{sec:exp-gaussian}. Naturally, we conjecture that in the limit $N\rightarrow\infty$ (when the time steps $t_{1},\dots,t_{N}$ densely fill $[0,1]$) our D-IMF behaves the same as IMF since the discrete and continuous Markovian projections start to be close, see discussion in \cite[Appendix E]{shi2023diffusion}. 

\textbf{Theory II}. In \wasyparagraph\ref{sec:theory-gaussian-to-gaussian}, we derive the closed-form expression for our D-IMF updates in the Gaussian case. For the continuous IMF, there exists an analogous result \cite[\wasyparagraph 6.1]{peluchetti2023diffusion}. However, unlike our result, that one is not explicit in the sence that it requires solving the matrix-valued ODE \cite[Eq. 39]{peluchetti2023diffusion} to get the actual projection. The analytical solution is known only when $D=1$, i.e., 1-dimensional case, see also \cite[Appendix D]{shi2023diffusion}. In contrast, our Gaussian D-IMF updates work in any dimension $D$.

\textbf{Practice.} Default continuous-time IMF \cite{shi2023diffusion,peluchetti2023diffusion} in practice is naturally implemented via the Bridge Matching approach which learns an SDE. In our case, at each D-IMF step we learn several transitional probabilities and do this via also well-established adversarial techniques. In this sense, our practical implementation differs -- each approach is based on its own backbone -- bridge matching vs. adversarial learning -- and naturally inherits the benefits/drawbacks of the respective backbone. They are fairly well stated in the discussion of the generative learning trilemma in \cite{xiao2022tackling}.




\vspace{-3mm}
\section{Experiments}
\vspace{-3mm}
We evaluate our adversarial SB matching (\textbf{ASBM}) algorithm, which implements our D-IMF procedure on setups with Gaussian distributions (\wasyparagraph\ref{sec:exp-gaussian}) for which we have closed form update formulas (\wasyparagraph{\ref{sec:theory-gaussian-to-gaussian}}) and real image data distributions (\wasyparagraph\ref{sec:image-experiments}). We additionally provide results for an illustrative 2D example in Appendix~\ref{sec:exp-toy-2d}, results for the Colored MNIST dataset in Appendix~\ref{sec:exp-colored-mnist}, and results on the standard SB benchmark in Appendix~\ref{sec:exp-benchmark}. The code for our algorithm and all experiments with it is written in \texttt{Pytorch}, is available in the supplementary materials, and will be made public. We provide all the technical details in Appendix~\ref{sec:app-exp-details}.

\vspace{-3mm}
\subsection{Gaussian-to-Gaussian Schrödinger Bridge}\label{sec:exp-gaussian}
\vspace{-3mm}

\begin{figure*}[!t]
\vspace{-6mm}
\begin{subfigure}[b]{0.49\linewidth}
\centering
\includegraphics[width=0.995\linewidth]{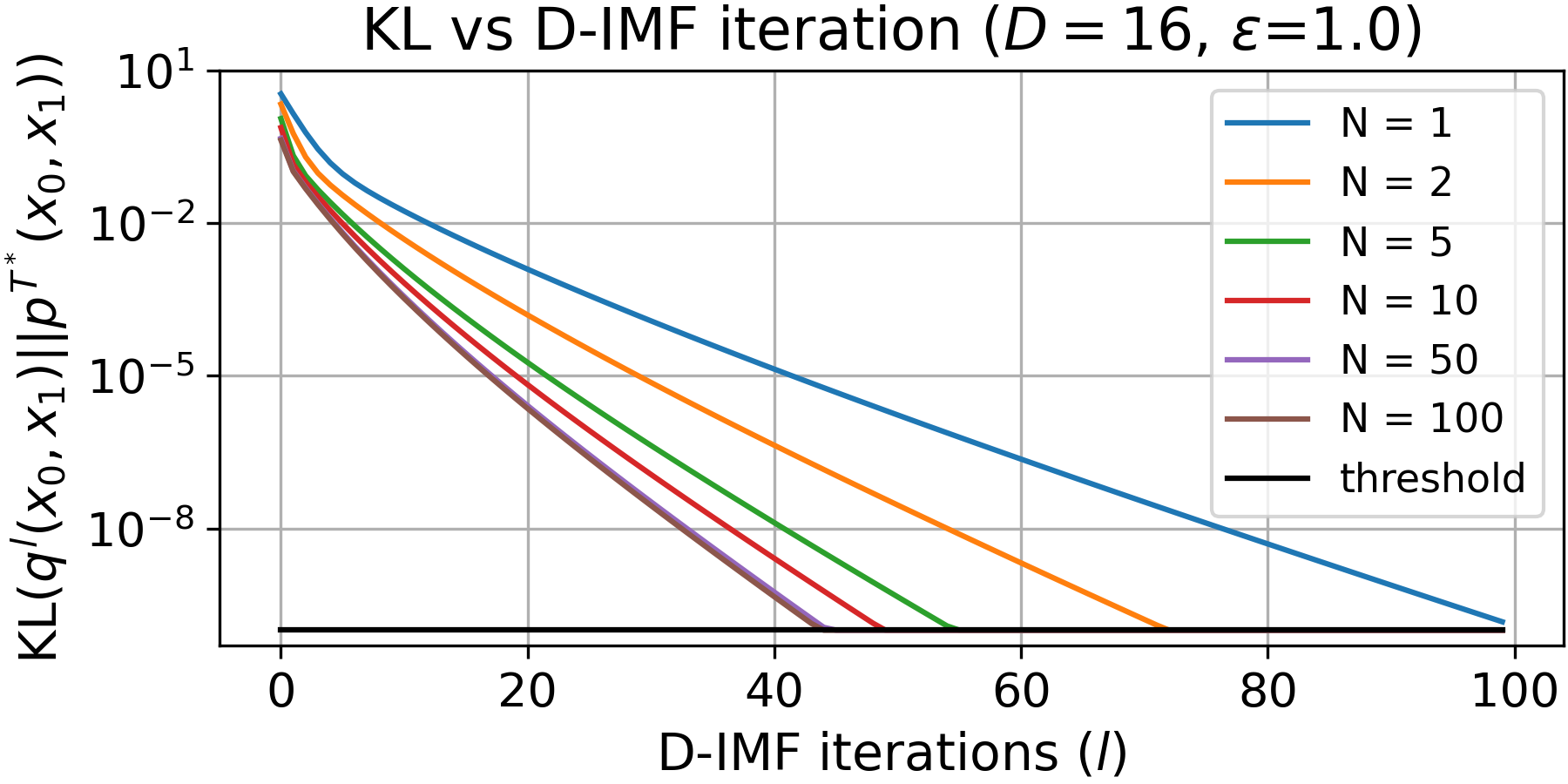}
\caption{\centering Dependence on the number of time steps $N$.}\label{fig:gauss_N_dependece}
\vspace{-3mm}
\end{subfigure}
\hfill\begin{subfigure}[b]{0.49\linewidth}
\centering
\includegraphics[width=0.995\linewidth]{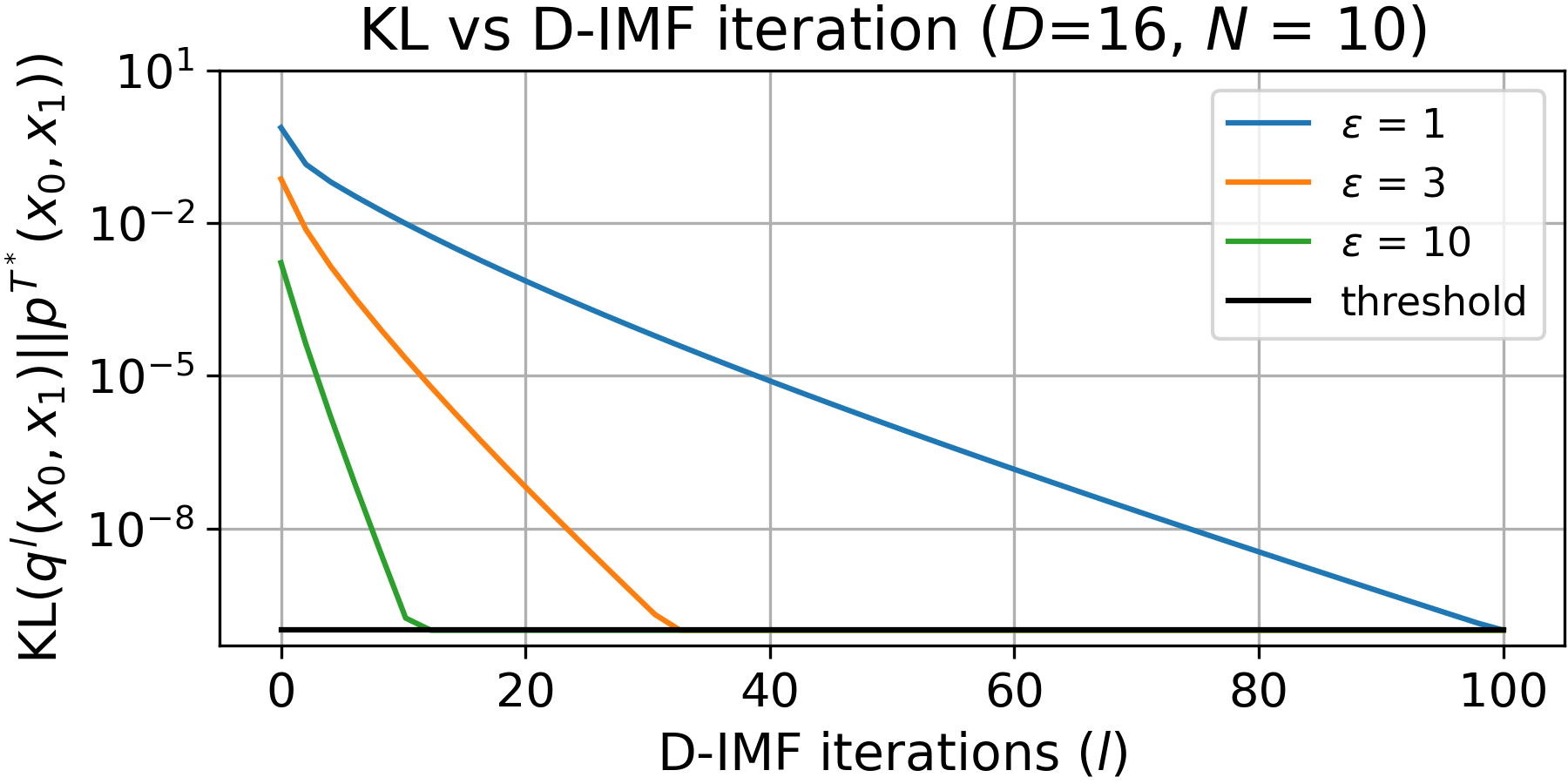}
\caption{\centering Dependence on the variance $\epsilon$ of the prior process.}\label{fig:gauss_eps_dependence}
\vspace{-3mm}
\end{subfigure}
\vspace{1mm} \caption{\centering Dependence of convergence of \textbf{our} D-IMF procedure on $N$ and $\epsilon$.}
\vspace{-7mm}
\end{figure*}

We analyze the convergence of our D-IMF procedure depending on the number of intermediate time steps $N\geq 1$ (we use $t_n=n/{N+1}$) and the value $\epsilon>0$ in the Gaussian case. In this case, the static SB solution $p^{T^{*}}(x_0,x_1)$ is analytically known, see, e.g., \cite{janati2020entropic}. This provides us an opportunity to analyse how fast $\KL{q^{l}(x_0, x_1)}{p^{T^{*}}(x_0,x_1)}$ decreases when $l\rightarrow\infty$.

We conduct experiments by using our analytical formulas for D-IMF from \wasyparagraph\ref{sec:theory-gaussian-to-gaussian}. We follow setup from \cite{gushchin2023entropic} and consider Schrödinger Bridge problem with the dimensionality $D = 16$ and  $\epsilon \in \{1, 3, 10\}$ for centered Gaussians ${p_0 = \mathcal{N}(0, \Sigma_0)}$ and ${p_1 = \mathcal{N}(0, \Sigma_1)}$. To construct $\Sigma_0$ and $\Sigma_1$, sample their eigenvectors from the uniform distribution on the unit sphere and sample their eigenvalues from the log uniform distribution on $[-\log 2, \log2]$. We use the same $p_0$ and $p_1$ for all experiments.

We start our D-IMF procedure from the reciprocal process with $q^{0}(x_0, x_1) = p_{0}(x_0)p_{1}(x_1)$, i.e. from the independent joint distribution at times $t=0,1$. We present the convergence plots in Figures ~\ref{fig:gauss_N_dependece} and \ref{fig:gauss_eps_dependence}. In both plots, we use $10^{-10}$ as a threshold corresponding to the exact matching of distributions to prevent numerical instabilities. We see that our D-IMF procedure empirically shows the exponential rate of convergence in all the cases. As we can see from Figure~\ref{fig:gauss_N_dependece}, the convergence speed dependence on $N$ quickly saturates. Thus, even several time moments, e.g., $N=5$, provide quick convergence speed. From Figure~\ref{fig:gauss_eps_dependence}, we clearly see that the convergence speed is highly influenced by the chosen value of the parameter $\epsilon$. For instance, the transition from $\epsilon=1$ to $\epsilon=10$ requires ten times more D-IMF iterations. Thus, this hyperparameter may be important in practice.

\vspace{-3mm}
\begin{figure*}[!t]
\vspace{-8mm}
\centering
\begin{subfigure}[b]{0.0986\linewidth}
\centering
\includegraphics[width=0.95\linewidth]{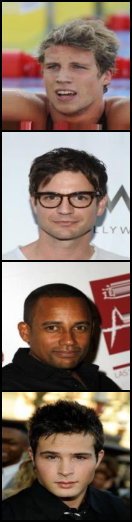}
\caption{\centering ${x\sim p_0}$\newline}
\vspace{-4.5mm}
\end{subfigure}
\vspace{-1mm}\begin{subfigure}[b]{0.39\linewidth}
\centering
\includegraphics[width=0.95\linewidth]{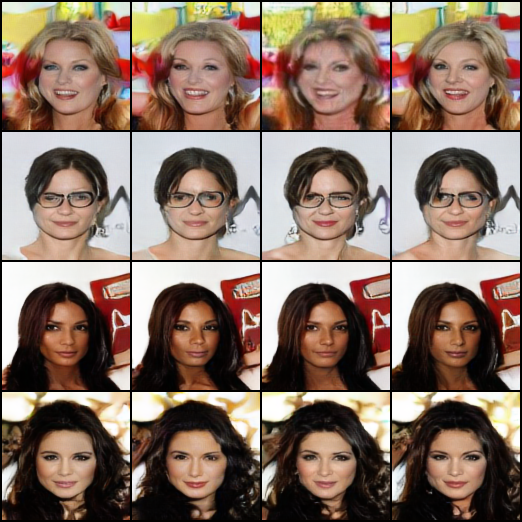}
\caption{\centering ASBM (\textbf{ours}), $\epsilon=1$ (lower diversity) \newline $\text{FID}=16.08$, $\text{NFE}=4$.} \label{fig:asbm-celeba-1}
\vspace{-4.5mm}
\end{subfigure}
\begin{subfigure}[b]{0.39\linewidth}
\centering
\includegraphics[width=0.95\linewidth]{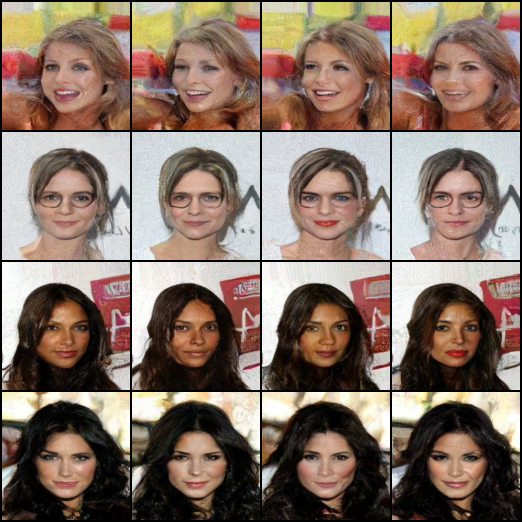}
\caption{\centering DSBM \cite{shi2023diffusion}, $\epsilon=1$ (lower diversity)  \newline $\text{FID}=37.8$, $\text{NFE}=100$.} \label{fig:dsbm-celeba-1}
\vspace{-4.5mm}
\end{subfigure}

\vspace{5mm}
\centering
\begin{subfigure}[b]{0.0986\linewidth}
\centering
\includegraphics[width=0.95\linewidth]{pics/celeba_128/celeba_128_f_start_data_samples.png}
\caption{\centering ${x\sim p_0}$  \newline}
\vspace{-1mm}
\end{subfigure}
\begin{subfigure}[b]{0.39\linewidth}
\centering
\includegraphics[width=0.95\linewidth]{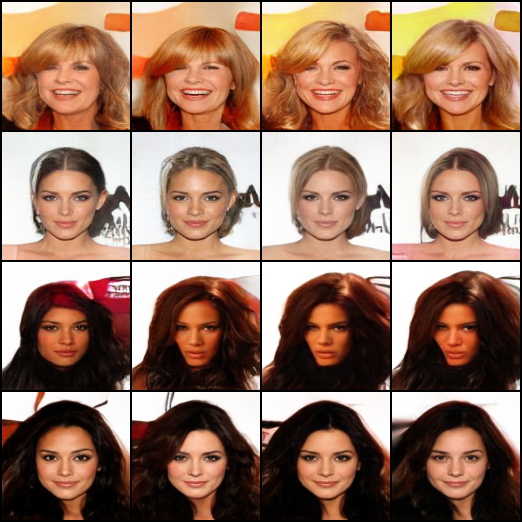}
\caption{\centering ASBM (\textbf{ours}), $\epsilon=10$ (higher diversity)  \newline $\text{FID}=17.44$, $\text{NFE}=4$.} \label{fig:asbm-celeba-10}
\vspace{-1mm}
\end{subfigure}
\begin{subfigure}[b]{0.39\linewidth}
\centering
\includegraphics[width=0.95\linewidth]{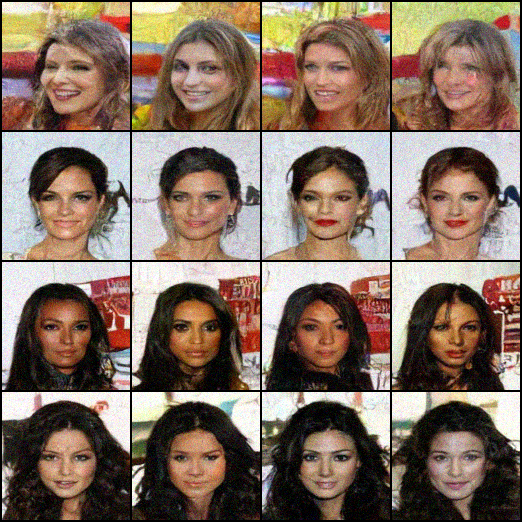}
\caption{\centering DSBM \cite{shi2023diffusion}, $\epsilon=10$ (higher diversity)  \newline $\text{FID}=89.19$, $\text{NFE}=100$.} \label{fig:dsbm-celeba-10}
\vspace{-1mm}
\end{subfigure}

\vspace{-0mm} \caption{\centering Results of Celeba, \textit{male}$\rightarrow$\textit{female} translation learned with ASBM (ours), and DSBM learned on Celeba dataset with 128 resolution size for $\epsilon \in \{1, 10\}$.}
\label{fig:celeba_128_samples_ASBM_DSBM}
\vspace{-7mm}
\end{figure*}

\subsection{Unpaired Image-to-image Translation}\label{sec:image-experiments}
\vspace{-3mm}
To test our approach on real data, we consider the unpaired image-to-image translation setup of learning \textit{male} $\rightarrow$ \textit{female} faces of Celeba dataset \cite{liu2015faceattributes}. We use $10\%$ of \textit{male} and \textit{female} images as the test set for evaluation. We train our ASBM algorithm based on the D-IMF procedure with $\epsilon=1$ and $\epsilon=10$. Following the best practices of DD-GAN \cite{xiao2022tackling}, we use $N=3$, intermediate times $t_1=\frac{1}{4}, t_2=\frac{2}{4}, t_3=\frac{3}{4}$ and $K=5$ outer iterations of D-IMF.  We provide qualitative results and the FID metric \cite{heusel2017gans} on the test set in Figures ~\ref{fig:asbm-celeba-1} and \ref{fig:asbm-celeba-10}. Since we use $N=3$ intermediate time moments, our algorithm requires only $4$ number of function evaluations (NFE) at the inference stage.

We focus our comparison on the DSBM algorithm based on the IMF-procedure \cite{shi2023diffusion} since it is closely related to our method. We train DSBM following the authors \cite{shi2023diffusion} and use $\text{NFE}=100$. As well as for ASBM, we use $5$ outer iterations of IMF, corresponding to the same number of reciprocal and Markovian projections, but for continuous processes. We use approximately the same number of parameters of neural networks used for models in Markovian projections for ASBM and DSBM (see Appendix~\ref{sec-hyperparameters}). For other details, see Appendix~\ref{sec:app-dsbm-details}. We present results for DSBM in Figure~\ref{fig:dsbm-celeba-1} and Figure~\ref{fig:dsbm-celeba-10}. Our algorithm provides better results while using only $4$ evaluation steps. Further additional results and measurements for ASBM and DSBM algorithms on the Celeba dataset are presented in Appendix~\ref{sec:app-celeba-additional-results}.

Thus, our D-IMF procedure allows us to solve the Schrödinger Bridge efficiently without learning the time-continuous stochastic process, which in turn speeds up inference by an order of magnitude. This aligns with the results obtained in the Gaussian-to-Gaussian setups about exponentially fast convergence of D-IMF even with several intermediate time moments.







\vspace{-3mm}
\section{Discussion}
\vspace{-3mm}
        \textbf{Potential impact.} Beside the pure speed up of the inference of IMF, we want to point to another great advantage of our developed D-IMF framework. In the continuous IMF, one is forced to do Markovian projection via time-consuming learning of continuous-time SDEs (using procedures like bridge matching). In our D-IMF framework, one needs to \textbf{learn several transition probabilities}. We do this via adversarial learning \cite{goodfellow2014generative}, but actually this can be done \textbf{using} almost \textbf{any other generative modeling technique} (moment matching \cite{li2015generative}, normalizing flows \cite{kingma2018glow,rezende2015variational}, energy-based models \cite{zhao2020learning}, score-based models \cite{song2019generative}, etc.). We believe that this observation opens great possibilities for ML community to further explore and improve generative modeling algorithms based on Schrödinger Bridges, Markovian projections (bridge matching) and related techniques, e.g., flow matching \cite{lipman2022flow}.

\underline{Limitations and broader impact} are discussed in Appendix \ref{sec-limitations}.

\section*{Acknowledgements}
\vspace{-1mm}
The work was supported by the Analytical center under the RF Government (subsidy agreement 000000D730321P5Q0002, Grant No. 70-2021-00145 02.11.2021).

\bibliographystyle{plain}
\bibliography{references}


\newpage
\appendix

\section{Limitations and Future Work}
\label{sec-limitations}

\textbf{Adversarial training}. It is a generic knowledge that the adversarial training may be non trivial to conduct due to instabilities, mode collapse and related issues. Fortunately, our ASBM algorithm relies on the already well-established and carefully tuned DD-GAN \cite{xiao2022tackling} technique as a backbone. The latter is specifically designed to address many such limitations and is known to score good metrics in generative modeling.

\textbf{Theoretical convergence rate}. We derive the generic convergence result for our D-IMF procedure (Theorem \ref{thm:dimf-converges}) but without the particular convergence rate. Empirically we observe the exponentially fast convergence (\wasyparagraph\ref{sec:exp-gaussian}), but theoretically proving this rate is an important task for the future work.


\vspace{2mm}
\hrule

\textbf{Broader Impact.} \label{sec:impact} This paper presents work whose goal is to advance the field of Artificial Intelligence, Machine Learning and Generative Modeling. There are many potential societal consequences of our work, none which we feel must be specifically highlighted here.

\section{Proofs}
\label{sec:proofs_appx}

Here we provide the proof of our theoretical results one-by-one. Additionally, we introduce and prove several auxiliary results to simplify the derivation of the main results.

\subsection{Proofs for Statements in Section \ref{sec:main-theorem}}

In our view, the proof of our main Theorem here is the most interesting and insightful (among all the proofs in the paper) as it uses some tricky mathematics, especially in its stage 2. In turn, stage 1 of the proof is inspired by the recent insights of \cite[Theorem 3.2]{gushchin2023building} about the characterization of static Schrödinger Bridge solutions \cite{leonard2013survey} and manipulations with KL for SB in \cite{korotin2024light,gushchin2024light}.

\begin{proof}[Proof of Theorem \ref{thm:discrete-recioprocal-and-ma}] We split the proof in 2 stages. The 1st is auxiliary for the 2nd.

\textbf{\underline{Stage 1.}} Here we show that if some $q(x_0, x_1) \in \mathcal{P}_{2,ac}(\mathbb{R}^{D\times 2})$ with marginals $p_0(x_0) = q(x_0)$ and $p_1(x_1) = q(x_1)$ has the density in the form
\begin{eqnarray}
    q(x_0, x_1) = q(x_0)\widehat{C}(x_0)\exp(-\frac{||x_1 - x_0||^2}{2 \epsilon})\widehat{\phi}(x_1),
    \nonumber
\end{eqnarray}
then it solves the Static SB between distributions $p_0(x_0)$ and $p_1(x_1)$. It is known \cite{leonard2013survey}, that the solution $q^*(x_0, x_1) \defeq p^{T^*}(x_0, x_1)$ of Static SB between $p_0$ and $p_1$ has the density:
\begin{eqnarray}
    q^*(x_0, x_1) = \psi^*(x_0)\exp(-\frac{||x_1 - x_0||^2}{2 \epsilon})\phi^*(x_1).
    \nonumber
\end{eqnarray}
Hence, the conditional density $q^*(x_1|x_0)$ is expressed as:
\begin{eqnarray}
    q^*(x_1|x_0) = \underbrace{\frac{\psi^*(x_0)}{p_0(x_0)}}_{\defeq C^*(x_0)}\exp(-\frac{||x_1 - x_0||^2}{2 \epsilon})\phi^*(x_1) = C^*(x_0)\exp(-\frac{||x_1 - x_0||^2}{2 \epsilon})\phi^*(x_1).
    \nonumber
\end{eqnarray}
Thus, both $q(x_0, x_1)$ and $q^*(x_0, x_1)$ have their densities in the same functional form and the same marginals $q(x_0) = q^*(x_0) = p_0(x_0)$ and $q(x_1) = q^*(x_1) = p_1(x_1)$. However, we want to prove that in this case $q(x_0, x_1)$ and $q^*(x_0, x_1)$ are equal, i.e., $\KL{q^*}{q} = 0$.
\begin{eqnarray}
    \KL{q^*(x_0, x_1)}{q(x_0, x_1)} = \int \log \frac{q^*(x_0, x_1)}{q(x_0, x_1)} q^*(x_0, x_1)dx_0 dx_1 = 
    \nonumber
    \\
     \int \log \frac{\cancel{p_0(x_0})q^*(x_1|x_0)}{\cancel{p_0(x_0)}q(x_1|x_0)} q^*(x_0, x_1)dx_0 dx_1 = 
    \nonumber
    \\
     \int \log \frac{C^*(x_0)\cancel{\exp(-\frac{||x_1 - x_0||^2}{2 \epsilon})}\phi^*(x_1)}{\widehat{C}(x_0)\cancel{\exp(-\frac{||x_1 - x_0||^2}{2 \epsilon})}\widehat{\phi}(x_1)} q^*(x_0, x_1)dx_0 dx_1 = 
    \nonumber
    \\
    \int (\log \frac{C^*(x_0)}{\widehat{C}(x_0)} + \log \frac{\phi^*(x_1)}{\widehat{\phi}(x_1)}) q^*(x_0, x_1)dx_0 dx_1 =
    \nonumber
    \\
    \int \log \frac{C^*(x_0)}{\widehat{C}(x_0)} q^*(x_0, x_1)dx_0 dx_1 + \int \log \frac{\phi^*(x_1)}{\widehat{\phi}(x_1)} q^*(x_0, x_1)dx_0 dx_1 =
    \nonumber
    \\
    \int \log \frac{C^*(x_0)}{\widehat{C}(x_0)} p_0(x_0) dx_0 + \int \log \frac{\phi^*(x_1)}{\widehat{\phi}(x_1)} p_1(x_1) dx_1 =
    \nonumber
    \\
    \int \log \frac{C^*(x_0)}{\widehat{C}(x_0)} q(x_0, x_1) dx_0dx_1 + \int \log \frac{\phi^*(x_1)}{\widehat{\phi}(x_1)} q(x_0, x_1) dx_0dx_1 =
    \nonumber
    \\
    \int (\log \frac{C^*(x_0)}{\widehat{C}(x_0)} + \log \frac{\phi^*(x_1)}{\widehat{\phi}(x_1)}) q(x_0, x_1)dx_0 dx_1 =
    \nonumber
    \\
    \int \log \frac{C^*(x_0)\phi^*(x_1)}{\widehat{C}(x_0)\widehat{\phi}(x_1)} q(x_0, x_1)dx_0 dx_1 =
    \nonumber
    \\
    \int \log \frac{C^*(x_0)\exp(-\frac{||x_1 - x_0||^2}{2 \epsilon})\phi^*(x_1)}{\widehat{C}(x_0)\exp(-\frac{||x_1 - x_0||^2}{2 \epsilon})\widehat{\phi}(x_1)} q(x_0, x_1)dx_0 dx_1 = 
    \nonumber
    \\
    \int \log \frac{q^*(x_1|x_0)}{q(x_1|x_0)} q(x_0, x_1)dx_0 dx_1 = 
    \nonumber
    \\
    \int \log \frac{p_0(x_0)q^*(x_1|x_0)}{p_0(x_0)q(x_1|x_0)} q(x_0, x_1)dx_0 dx_1 = 
    \nonumber
    \\
    \int \log \frac{q^*(x_0, x_1)}{q(x_0, x_1)} q(x_0, x_1)dx_0 dx_1 = 
    \nonumber
    \\
    - \int \log \frac{q(x_0, x_1)}{q^*(x_0, x_1)} q(x_0, x_1)dx_0 dx_1 = -\KL{q(x_0, x_1)}{q^*(x_0, x_1)}.
    \nonumber
\end{eqnarray}
Thus, $\KL{q^*}{q} = -\KL{q}{q^*}$. Since the KL divergence is non-negative, we derive that $q = q^*$.

\textbf{\underline{Stage 2.}} In this stage, we prove the theorem itself.
First, if $N > 1$, i.e., there is more than one intermediate time moment, we integrate $q(x_0, x_{t_{1}}, \dots, x_{t_{N}}, x_1)$ over all intermediate time moments except $t_{1}$. On the one hand, we get
\begin{eqnarray}
    q(x_0, x_{t_{1}}, x_1) = \int q(x_0, x_{t_{1}}, \dots, x_{t_{N}}, x_1) dx_{t_{2}}\dots dx_{t_{N}} = 
    \nonumber
    \\
    \int p^{W^{\epsilon}}(x_{t_1}, \dots, x_{t_{N}}| x_0, x_1)q(x_0, x_1)dx_{t_{2}}\dots dx_{t_{N}} = p^{W^{\epsilon}}(x_{t_{1}}| x_0, x_1)q(x_1,x_0).
    \label{eq:proof-rec-prop}
\end{eqnarray}
On the other hand, we derive
\begin{eqnarray}
    q(x_0, x_{t_{1}}, x_1) = \int  q(x_0)q(x_{t_{1}}|x_0) \overbrace{\prod_{n=2}^{N+1} q(x_{t_n}|x_{t_{n-1}})}^{=q(x_{t_{2}}, \dots, x_{t_{N}}, x_1|x_{t_{1}})}dx_{t_{2}}\dots dx_{t_{N}}
    = q(x_0)q(x_{t_{1}}|x_0)q(x_1|x_{t_{1}}).
    \label{eq:proof-markov-prop}
\end{eqnarray}
Combining \eqref{eq:proof-rec-prop} and \eqref{eq:proof-markov-prop} yields
\begin{equation}
    q(x_0)q(x_{t_{1}}|x_0)q(x_1|x_{t_{1}})=q(x_0, x_{t_{1}}, x_1) = p^{W^{\epsilon}}(x_{t_{1}}| x_0, x_1)q(x_1,x_0).
    \label{eq:3-markov-rec}
\end{equation}
Note that if $N=1$, then we already have \eqref{eq:3-markov-rec} from the conditions of the theorem. Therefore,
$$
    p^{W^{\epsilon}}(x_{t_{1}}| x_0, x_1)q(x_1,x_0) = q(x_0)q(x_{t_{1}}|x_0)q(x_1|x_{t_{1}})
$$
$$
    p^{W^{\epsilon}}(x_{t_{1}}| x_0, x_1)q(x_1|x_0)\cancel{q(x_0)} = \cancel{q(x_0)}q(x_{t_{1}}|x_0)q(x_1|x_{t_{1}}).
$$
From now on, we are interested only in 3 time moments: $t_0=0, t_1$ and $t_{N+1}=1$. To simplify the notation, we will write $t$ instead of $t_2$ in the following proof. We take the logarithm and get
\begin{eqnarray}
    \log q(x_1|x_0) = \log q(x_{t}|x_0) + \log q(x_{1}|x_{t}) - \log p^{W^{\epsilon}}(x_{t}|x_{0}, x_{1})
    \nonumber
\end{eqnarray}
Then, we use the formula for the Brownian Bridge density:
\begin{eqnarray}
    \log q(x_1|x_0) = \log q(x_{t}|x_0) + \log q(x_{1}|x_{t}) - C + 
    \frac{1}{2\epsilon t(1-t)}||x_{t} - (tx_{1} + (1-t)x_{0})||^2 =
    \nonumber
    \\
    \log q(x_{t}|x_0) + \log q(x_{1}|x_{t}) - C + 
    \nonumber
    \\
    \frac{1}{2\epsilon t(1-t)}(||x_{t}||^2 + ||tx_{1}||^2 + ||(1-t)x_{0}||^2 - 2tx_{t}^Tx_{1} - 2(1-t)x_{t}^Tx_{0} + 2t(1-t)x_{0}^Tx_{1}) =
    \nonumber
    \\
    \nonumber
    \log q(x_{t}|x_0) + \log q(x_{1}|x_{t}) - C +
    \nonumber
    \\
    \frac{||x_{t}||^2}{2\epsilon t(1-t)} + \frac{||(1-t)x_0||^2}{2\epsilon t(1-t)} + \frac{||tx_1||^2}{2\epsilon t(1-t)} - \frac{x_{t}^Tx_{1}}{\epsilon (1-t)} - \frac{x_{t}^Tx_{0}}{\epsilon t} + \frac{x_{0}^Tx_{1}}{\epsilon} = 
    \nonumber
    \\
    \underbrace{\log q(x_{1}|x_{t}) + \frac{||x_{t}||^2}{2\epsilon t(1-t)} + \frac{||tx_1||^2}{2\epsilon t(1-t)}  - \frac{x_{t}^Tx_{1}}{\epsilon (1-t)} - C}_{\defeq f_1(x_{t}, x_1)} + 
    \nonumber
    \\
    \underbrace{\frac{||(1-t)x_0||^2}{2\epsilon t(1-t)} + \log q(x_{t}|x_0) - \frac{x_{t}^Tx_{0}}{\epsilon t}}_{\defeq f_{2}(x_{t}, x_{0})} + \frac{x_{0}^Tx_{1}}{\epsilon}.
    \nonumber
\end{eqnarray}
Thus,
\begin{eqnarray}
    \log q(x_1|x_0) = f_{1}(x_{t}, x_1) + f_{2}(x_{t}, x_0) + \frac{x_{0}^Tx_{1}}{\epsilon},
    \nonumber
    \\
    \underbrace{\log q(x_1|x_0) - \frac{x_{0}^Tx_{1}}{\epsilon}}_{\defeq f_3(x_0, x_1)} = f_{1}(x_{t}, x_1) + f_{2}(x_{t}, x_0),
    \nonumber
    \\
    f_3(x_0, x_1) = f_{1}(x_{t}, x_1) + f_{2}(x_{t}, x_0).
    \label{eq:main-proof-f_3}
\end{eqnarray}
Below, we prove that $f_{3}(x_0, x_1) = g_{1}(x_0) + g_{2}(x_1)$ for some functions $g_{1}$ and $g_{2}$. We note that
\begin{eqnarray}
    f_{3}(x_0, 0) = f_{1}(x_{t}, 0) + f_{2}(x_{t}, x_0) \qquad\Rightarrow \qquad
    f_{2}(x_{t}, x_0) = f_{3}(x_0, 0) - f_{1}(x_{t}, 0).
    \label{eq:main-prof-f_2}
\end{eqnarray}
We substitute \eqref{eq:main-prof-f_2} to \eqref{eq:main-proof-f_3}:
\begin{eqnarray}
    f_{3}(x_0, x_1) = f_{1}(x_{t}, x_1) + \underbrace{f_{3}(x_0, 0) - f_{1}(x_{t}, 0)}_{=f_{2}(x_{t}, x_0)}
    \nonumber
    \\
    f_{3}(x_0, x_1) - f_{3}(x_0, 0) = f_{1}(x_{t}, x_1) - f_{1}(x_{t}, 0).
    \label{eq:main-proof-sup-1}
\end{eqnarray}
Since there is no dependence on $x_0$ in the right part of \eqref{eq:main-proof-sup-1}, we conclude that $f_{3}(x_0, x_1) - f_{3}(x_0, 0)$ is a function of only $x_1$. We define $g_1(x_1) \defeq f_{3}(x_0, x_1) - f_{3}(x_0, 0)$ and $g_2(x_0) \defeq f_{3}(x_0, 0)$. Now we have the desired result:
\begin{eqnarray}
    f_{3}(x_0, x_1) = g_1(x_1) + f_{3}(x_0, 0) = g_1(x_1) + g_{2}(x_0).
\end{eqnarray}
Thus,
\begin{eqnarray}
    f_{3}(x_0, x_1) = \log q(x_1|x_0) - \frac{x_{0}^Tx_{1}}{\epsilon} = g_1(x_1) + g_{2}(x_0).
    \nonumber
\end{eqnarray}
Now, we can use this result about the separation of variables together with the result from the first \textbf{stage} to conclude the proof of the theorem.
\begin{eqnarray}
    \log q(x_1|x_0) = g_1(x_1) + g_{2}(x_0) + \frac{x_{0}^Tx_{1}}{\epsilon} = 
    \nonumber
    \\
    g_1(x_1) + \frac{||x_1||^2}{2\epsilon} + g_{2}(x_0) + \frac{||x_0||^2}{2\epsilon} - \frac{||x_0 - x_1||^2}{2\epsilon},
    \nonumber
    \\
    q(x_1|x_0) = \underbrace{\exp(g_{2}(x_0) + \frac{||x_0||^2}{2\epsilon})}_{\defeq C(x_0)}\exp(-\frac{||x_0 - x_1||^2}{2\epsilon})\underbrace{\exp(g_1(x_1) + \frac{||x_1||^2}{2\epsilon})}_{\defeq \phi(x_1)} = 
    \nonumber
    \\
    q(x_1|x_0) = C(x_0)\exp(-\frac{||x_0 - x_1||^2}{2\epsilon})\phi(x_1),
    \nonumber
    \\
    q(x_0, x_1) = q(x_0)C(x_0)\exp(-\frac{||x_0 - x_1||^2}{2\epsilon})\phi(x_1).
    \nonumber
\end{eqnarray}
Hence, from the first \textbf{stage} of this proof it follows that $q(x_0, x_1)$ is the solution to the Static SB between $q(x_0) = p_0(x_0)$ and $q(x_1) = p_1(x_1)$ with the coefficient $\epsilon$. That is, $p^{T^{*}}(x_0,x_1)=q(x_0,x_1)$. Since $q(x_{t_{1}},\dots, x_{t_{N}}|x_0,x_1)=p^{W^{\epsilon}}(x_{t_{1}},\dots,x_{t_{N}}|x_0,x_1)$ by the assumptions of the current theorem, we also conclude that $q(x_0, x_{t_{1}},\dots, x_{t_{N}}, x_1) = p^{T^{*}}(x_0, x_{t_{1}}, \dots, x_{t_{N}}, x_1)$, i.e., the discrete processes coincide. 
\end{proof}



\subsection{Proofs for Statements in Section \ref{sec:dmf-procedure}}

The logic of our justification of D-IMF for discrete processes generally follows the respective logic of the justification of IMF for continuous stochastic processes \cite{shi2023diffusion}.

\begin{proof}[Proof of Proposition~\ref{thm:reicprocal-minimizes-KL}]
The mild assumption here consists in the existence of at least one process $r \in \mathcal{R}(N)$ for which $\KL{q}{r} < \infty$. The reciprocal process $r \in \mathcal{R}(N)$ has its density in the form $r(x_0, x_{t_{1}}, \dots, x_{t_{N}}, x_1) = p^{W^{\epsilon}}(x_{t_{1}}, \dots, x_{t_{N}}|x_0, x_1)r(x_0,x_1)$ (see \eqref{eq:discrete-reciprocal-process}). Thus, we need to optimize only the part $r(x_0, x_1)$. Below we show, that $r(x_0, x_1)$ should be equal $q(x_0, x_1)$ to minimize the functional.
    \begin{eqnarray}
         \KL{q}{r} = \int \log \frac{q(x_0, x_{\text{in}}, x_1)}{r(x_0, x_{\text{in}}, x_1)}q(x_0, x_{\text{{in}}}, x_1)dx_0dx_{\text{in}}dx_1 = 
        \nonumber
        \\
        \int \log \frac{q(x_{\text{in}}| x_0, x_1)q(x_0, x_1)}{\underbrace{r(x_{\text{in}}|x_0, x_1)}_{p^{W^{\epsilon}}(x_{\text{in}}|x_0, x_1)}r(x_0, x_1)}q(x_0, x_{\text{{in}}}, x_1)dx_0dx_{\text{in}}dx_1 = 
        \nonumber
        \\
         \underbrace{\int \log \frac{q(x_{\text{in}}| x_0, x_1)}{p^{W^{\epsilon}}(x_{\text{in}}|x_0, x_1)}q(x_0, x_{\text{{in}}}, x_1)dx_0dx_{\text{in}}dx_1}_{=\text{Const}} + \int \log \frac{q(x_0, x_1)}{r(x_0, x_1)}q(x_0, x_{\text{{in}}}, x_1)dx_0dx_{\text{in}}dx_1 = 
         \nonumber
         \\
         \text{Const} + \underbrace{\int \log \frac{q(x_0, x_1)}{r(x_0, x_1)}q(x_0, x_1)dx_0dx_1}_{=\KL{q(x_0, x_1)}{r(x_0, x_1)}} = 
         \text{Const} + \KL{q(x_0, x_1)}{r(x_0, x_1)}
         \nonumber
    \end{eqnarray}
Hence, $\text{proj}_{\mathcal{R}}(q) = \argmin_{r \in \mathcal{R}(N)} \KL{q}{r} = p^{W^{\epsilon}}(x_{\text{in}}| x_0, x_1)q(x_0, x_1)$.
\end{proof}

\begin{proof}[Proof of Proposition~\ref{thm:markovian-projections-minimizs-kl}] Similar to the previous proposition, the mild assumption here consists in the existence of at least one process $m \in \mathcal{M}(N)$ for which $\KL{q}{m} < \infty$. This proof is a bit more technical than for the reciprocal projection. We need to define new notation $x^{\text{not}}_{t_{n}, t_{n-1}} = (x_{t_{0}}, \dots, x_{t_{n-2}}, x_{t_{n+1}}, \dots, x_{t_{N+1}})$ for a vector of variables for all time moment except two time moments $t_{n}$ and $t_{n-1}$.

    \begin{eqnarray}
        \KL{q}{m} = \int \log \frac{q(x_0, x_{\text{in}}, x_1)}{m(x_0, x_{\text{in}}, x_1)}q(x_0, x_{\text{{in}}}, x_1)dx_0dx_{\text{in}}dx_1 = 
        \nonumber
        \\
        \int \log \frac{q(x_0, x_{\text{in}}, x_1)}{m(x_0)\prod_{n=1}^{N+1} m(x_{t_{n}}|x_{t_{n-1}})}q(x_0, x_{\text{{in}}}, x_1)dx_0dx_{\text{in}}dx_1 = 
        \nonumber
        \\
        \int \log \frac{q(x_0)}{m(x_0)}q(x_0, x_{\text{{in}}}, x_1)dx_0dx_{\text{in}}dx_1 + \int \log \frac{q(x_{\text{in}}, x_1|x_0)}{\prod_{n=1}^{N+1} m(x_{t_{n}}|x_{t_{n-1}})}q(x_0, x_{\text{{in}}}, x_1)dx_0dx_{\text{in}}dx_1 =
        \nonumber
        \\
        \underbrace{\int \log \frac{q(x_0)}{m(x_0)}q(x_0)dx_0}_{\KL{q(x_0)}{m(x_0)}} + \int \log \frac{q(x_{\text{in}}, x_1|x_0)}{\prod_{n=1}^{N+1} m(x_{t_{n}}|x_{t_{n-1}})}q(x_0, x_{\text{{in}}}, x_1)dx_0 =
        \nonumber
        \\
        \KL{q(x_0)}{m(x_0)} + \int \log \frac{q(x_{\text{in}}, x_1|x_0)}{\prod_{n=1}^{N+1} m(x_{t_{n}}|x_{t_{n-1}})}q(x_0, x_{\text{{in}}}, x_1)dx_0dx_{\text{in}}dx_1 +
        \label{eq:reciprocal-proj-proof-sup-1}
        \\
        \underbrace{N\int \log \frac{q(x_0, x_{\text{in}}, x_1)}{q(x_0, x_{\text{in}}, x_1)}q(x_0, x_{\text{{in}}}, x_1)dx_0dx_{\text{in}}dx_1}_{=0} + \underbrace{\int \log \frac{q(x_0)}{q(x_0)}q(x_0, x_{\text{in}}, x_1)dx_0dx_{\text{in}}x_1}_{=0} =
        \label{eq:reciprocal-proj-proof-sup-2}
        \\
        \KL{q(x_0)}{m(x_0)} + \int \log \frac{\prod_{n=1}^{N+1} q(x_0, x_{\text{in}}, x_1)}{\prod_{n=1}^{N+1} m(x_{t_{n}}|x_{t_{n-1}})}q(x_0, x_{\text{{in}}}, x_1)dx_0dx_{\text{in}}dx_1 -
        \nonumber
        \\
        \big(N\int \log q(x_0,  x_{\text{in}}, x_1)q(x_0, x_{\text{{in}}}, x_1)dx_0dx_{\text{in}}dx_1 + \int \log q(x_0) q(x_0) dx_0dx_{\text{in}}dx_1\big) = 
        \nonumber
        \\
        \KL{q(x_0)}{m(x_0)} + \sum_{n=1}^{N+1} \int \log \frac{q(x_0, x_{\text{in}}, x_1)}{m(x_{t_{n}}|x_{t_{n-1}})}q(x_0, x_{\text{{in}}}, x_1)dx_0dx_{\text{in}}dx_1 -
        \nonumber
        \\
        \underbrace{\big(N\int \log q(x_0,  x_{\text{in}}, x_1)q(x_0, x_{\text{{in}}}, x_1)dx_0dx_{\text{in}}dx_1 + \int \log q(x_0) q(x_0) dx_0dx_{\text{in}}dx_1\big)}_{\defeq C_1}\ = 
        \nonumber
        \\
        \KL{q(x_0)}{m(x_0)} - C_1 + 
        \nonumber
        \\
        \sum_{n=1}^{N+1} \int \log \frac{q(x_{t_{n}}|x_{t_{n-1}})q(x_{t_{n-1}})q(x^{\text{not}}_{t_{n}, t_{n-1}}|x_{t_{n}}, x_{t_{n-1}})}{m(x_{t_{n}}|x_{t_{n-1}})}q(x_0, x_{\text{{in}}}, x_1)dx_0dx_{\text{in}}dx_1  =
        \nonumber
        \\
        \KL{q(x_0)}{m(x_0)} \underbrace{- C_1 + \sum_{n=1}^{N+1} \int \log \big( q(x_{t_{n-1}})q(x^{\text{not}}_{t_{n}, t_{n-1}}|x_{t_{n}}, x_{t_{n-1}})\big) q(x_0, x_{\text{{in}}}, x_1)dx_0dx_{\text{in}}dx_1}_{\defeq C_2} +
        \nonumber
        \\
        \sum_{n=1}^{N+1} \int \log \frac{q(x_{t_{n}}|x_{t_{n-1}})}{m(x_{t_{n}}|x_{t_{n-1}})}q(x_0, x_{\text{{in}}}, x_1)dx_0dx_{\text{in}}dx_1 =
        \nonumber
        \\
        \KL{q(x_0)}{m(x_0)} + C_2 + \sum_{n=1}^{N+1} \big(\underbrace{\int \log \frac{q(x_{t_{n}}|x_{t_{n-1}})}{m(x_{t_{n}}|x_{t_{n-1}})}q(x_{t_{n}}| x_{t_{n-1}})dx_{t_{n}}}_{\KL{q(x_{t_{n}}|x_{t_{n-1}})}{m(x_{t_{n}}|x_{t_{n-1}})}}\big) q(x_{t_{n-1}})dx_{t_{n-1}} =
        \nonumber
        \\
        \KL{q(x_0)}{m(x_0)} + \sum_{n=1}^{N+1} \int \KL{q(x_{t_{n}}|x_{t_{n-1}})}{m(x_{t_{n}}|x_{t_{n-1}})} q(x_{t_{n-1}})dx_{t_{n-1}} + C_2.
        \nonumber
    \end{eqnarray}
In the line \eqref{eq:reciprocal-proj-proof-sup-2}, we add terms equal to zero, to match each $m(x_{t_{n}}|x_{t_{n-1}})$ by the separate term $q(x_0, x_{\text{in}}, x_1)$ in the line \eqref{eq:reciprocal-proj-proof-sup-2}. We need it to as we want to place each term $m(x_{t_{n}}|x_{t_{n-1}})$ in the separate KL-divergence in the final expression. Hence, the minimizer of the objective $m^* \in \mathcal{M}(N)$ has $m^*(x_0) = q(x_0)$ and all transitional distributions $m^*(x_{t_{n}}|x_{t_{n-1}})=q(x_{t_{n}}|x_{t_{n-1}})$, i.e. is given by 
$$
m^*(x_0, x_{\text{in}}, x_1) = [\text{proj}_{\mathcal{M}}(q)](x_0, x_{\text{in}}, x_1) = q(x_0)\prod_{n=1}^{N+1} q(x_{t_{n}}|x_{t_{n-1}}).
$$
\end{proof}

\begin{proposition}[Pythagorean
theorems for projections]\label{thm:kl-pythagorean}
Assume that $r \in \mathcal{R}(N)$ and $m \in \mathcal{M}(N)$. If $\KL{r}{m} < \infty$ and $\KL{r}{\text{proj}_{\mathcal{M}}(r)} < \infty$, then
\begin{equation}\label{eq:rm-pythagorean}
    \KL{r}{m} = \KL{r}{\text{proj}_{\mathcal{M}}(r)} + \KL{\text{proj}_{\mathcal{M}}(r)}{m}
\end{equation}
and if $\KL{m}{r} < \infty$, $\KL{m}{\text{proj}_{\mathcal{R}}(m)} < \infty$ then
\begin{equation}\label{eq:mr-pythagorean}
    \KL{m}{r} = \KL{m}{\text{proj}_{\mathcal{R}}(m)} + \KL{\text{proj}_{\mathcal{R}}(m)}{r}
    \nonumber
\end{equation}

\begin{proof}[Proof of Proposition~\ref{thm:kl-pythagorean}]
Before proving the first equation \eqref{eq:rm-pythagorean} we prove the additional property of $r \in \mathcal{R}(N)$ for any $n \in [1, 2, \dots, N+1]$:
\begin{eqnarray}
    [\text{proj}_{\mathcal{M}}r](x_{t_{n}}, x_{t_{n-1}}) = r(x_{t_{n}}, x_{t_{n-1}}).
    \nonumber
\end{eqnarray}
\begin{eqnarray}
    [\text{proj}_{\mathcal{M}}r](x_{t_{n}}, x_{t_{n-1}}) = [\text{proj}_{\mathcal{M}}r](x_{t_{n}}| x_{t_{n-1}})[\text{proj}_{\mathcal{M}}r](x_{t_{n}}) = r(x_{t_{n}}|x_{t_{n-1}})r(x_{t_{n}}).
\end{eqnarray}
Since $[\text{proj}_{\mathcal{M}}r](x_{t_{n}}| x_{t_{n-1}}) = r(x_{t_{n}}|x_{t_{n-1}})$ by the definition and since Markovian projection preserve all intermediate time marginals.
Now we prove the first equation \eqref{eq:rm-pythagorean}.
\begin{eqnarray}
    \KL{r}{m} = \int \log\frac{r(x_0, x_{\text{in}}, x_1)}{m(x_0, x_{\text{in}}, x_1)} r(x_0, x_{\text{in}}, x_1) dx_0dx_{\text{in}}dx_1 =
    \nonumber
    \\
    \int \log\frac{r(x_0, x_{\text{in}}, x_1)}{m(x_0, x_{\text{in}}, x_1)} r(x_0, x_{\text{in}}, x_1) dx_0dx_{\text{in}}dx_1 +
    \nonumber
    \\
    \underbrace{\int \log \frac{[\text{proj}_{\mathcal{M}}(r)](x_0, x_{\text{in}}, x_1)}{[\text{proj}_{\mathcal{M}}(r)](x_0, x_{\text{in}}, x_1)} r(x_0, x_{\text{in}}, x_1) dx_0dx_{\text{in}}dx_1}_{= 0} =
    \nonumber
    \\
    \underbrace{\int \log\frac{r(x_0, x_{\text{in}}, x_1)}{{[\text{proj}_{\mathcal{M}}(r)](x_0, x_{\text{in}}, x_1)}} r(x_0, x_{\text{in}}, x_1) dx_0dx_{\text{in}}dx_1}_{\KL{r}{\text{proj}_{\mathcal{M}}(r)}} +
    \nonumber
    \\
    \int \log\frac{[\text{proj}_{\mathcal{M}}(r)](x_0, x_{\text{in}}, x_1)}{m(x_0, x_{\text{in}}, x_1)} r(x_0, x_{\text{in}}, x_1) dx_0dx_{\text{in}}dx_1 =
    \nonumber
    \\
    \KL{r}{\text{proj}_{\mathcal{M}}(r)} + 
    \int \log\frac{[\text{proj}_{\mathcal{M}}(r)](x_0)\prod_{n=1}^{N+1}[\text{proj}_{\mathcal{M}}(r)](x_{t_{n}}|x_{t_{n-1}})}{m(x_0)\prod_{n=1}^{N+1} m(x_{t_{n}}|x_{t_{n-1}})} r(x_0, x_{\text{in}}, x_1) dx_0dx_{\text{in}}dx_1 =
    \nonumber
    \\
    \KL{r}{\text{proj}_{\mathcal{M}}(r)} + \KL{[\text{proj}_{\mathcal{M}}(r)](x_0)}{m(x_0)} +
    \nonumber
    \\
    \sum_{n=1}^{N+1} \int \log \frac{[\text{proj}_{\mathcal{M}}(r)](x_{t_{n}}|x_{t_{n-1}})}{m(x_{t_{n}}|x_{t_{n-1}})}  r(x_0, x_{\text{in}}, x_1) dx_0dx_{\text{in}}dx_1 = 
    \nonumber
    \\
     \KL{r}{\text{proj}_{\mathcal{M}}(r)} + \KL{[\text{proj}_{\mathcal{M}}(r)](x_0)}{m(x_0)} +
     \nonumber
     \\
     \sum_{n=1}^{N+1} \int \log \frac{[\text{proj}_{\mathcal{M}}(r)](x_{t_{n}}|x_{t_{n-1}})}{m(x_{t_{n}}|x_{t_{n-1}})}  \underbrace{r(x_{t_{n}}, x_{t_{n-1}})}_{=[\text{proj}_{\mathcal{M}}(r)](x_{t_{n}}, x_{t_{n-1}})} dx_{t_{n}}dx_{t_{n-1}} = 
    \nonumber
    \\
     \KL{r}{\text{proj}_{\mathcal{M}}(r)} + \KL{[\text{proj}_{\mathcal{M}}(r)](x_0)}{m(x_0)} +
     \nonumber
     \\
     \sum_{n=1}^{N+1} \int \log \frac{[\text{proj}_{\mathcal{M}}(r)](x_{t_{n}}|x_{t_{n-1}})}{m(x_{t_{n}}|x_{t_{n-1}})} [\text{proj}_{\mathcal{M}}(r)](x_{t_{n}}, x_{t_{n-1}}) dx_{t_{n}}dx_{t_{n-1}}  = 
    \nonumber
    \\
     \KL{r}{\text{proj}_{\mathcal{M}}(r)} + \KL{[\text{proj}_{\mathcal{M}}(r)](x_0)}{m(x_0)} +
     \nonumber
     \\
    \sum_{n=1}^{N+1} \int \log \frac{[\text{proj}_{\mathcal{M}}(r)](x_{t_{n}}|x_{t_{n-1}})}{m(x_{t_{n}}|x_{t_{n-1}})} [\text{proj}_{\mathcal{M}}(r)](x_0, x_{\text{in}}, x_1) dx_0dx_{\text{in}}dx_1  = 
    \nonumber
\end{eqnarray}
\begin{eqnarray}
    \KL{r}{\text{proj}_{\mathcal{M}}(r)} + \underbrace{\int \log \frac{[\text{proj}_{\mathcal{R}}](q)(x_0)}{m(x_0)}[\text{proj}_{\mathcal{R}}](q)(x_0) dx_0}_{= \KL{[\text{proj}_{\mathcal{M}}(r)](x_0)}{m(x_0)}} + 
    \nonumber
    \\
    \int \log \frac{\prod_{n=1}^{N+1}[\text{proj}_{\mathcal{M}}(r)](x_{t_{n}}|x_{t_{n-1}})}{\prod_{n=1}^{N+1} m(x_{t_{n}}|x_{t_{n-1}})} [\text{proj}_{\mathcal{M}}(r)](x_0, x_{\text{in}}, x_1) dx_0dx_{\text{in}}dx_1  = 
    \nonumber
    \\
    \KL{r}{\text{proj}_{\mathcal{M}}(r)} + 
    \underbrace{\int \log \frac{[\text{proj}_{\mathcal{M}}(r)](x_0, x_{\text{in}}, x_{1})}{m(x_0, x_{\text{in}}, x_1)} [\text{proj}_{\mathcal{M}}(r)](x_0, x_{\text{in}}, x_1) dx_0dx_{\text{in}}dx_1}_{\KL{\text{proj}_{\mathcal{M}}(r)}{m}} = 
    \nonumber
    \\
    \KL{r}{\text{proj}_{\mathcal{M}}(r)} + \KL{\text{proj}_{\mathcal{M}}(r)}{m}.
    \nonumber
\end{eqnarray}
That concludes the proof of the first equation \eqref{eq:rm-pythagorean}. The proof for the second equation \eqref{eq:mr-pythagorean} is similar.

\begin{eqnarray}
    \KL{m}{r} = \int \log \frac{m(x_0, x_{\text{in}}, x_1)}{r(x_0, x_{\text{in}}, x_1)} m(x_0, x_{\text{in}}, x_1) dx_0 dx_{\text{in}}dx_1 + 
    \nonumber
    \\
    \int \log \underbrace{\frac{[\text{proj}_{\mathcal{R}}(m)](x_0, x_{\text{in}}, x_1)}{[\text{proj}_{\mathcal{R}}(m)](x_0, x_{\text{in}}, x_1)}}_{=0} m(x_0, x_{\text{in}}, x_1)dx_0 dx_{\text{in}}dx_1 = 
    \nonumber
    \\
    \underbrace{\int \log \frac{m(x_0, x_{\text{in}}, x_1)}{[\text{proj}_{\mathcal{R}}(m)](x_0, x_{\text{in}}, x_1)} m(x_0, x_{\text{in}}, x_1) dx_0 dx_{\text{in}}dx_1}_{\KL{m}{\text{proj}_{\mathcal{R}}(m)}} + 
    \nonumber
    \\
    \int \log \frac{[\text{proj}_{\mathcal{R}}(m)](x_0, x_{\text{in}}, x_1)}{r(x_0, x_{\text{in}}, x_1)} m(x_0, x_{\text{in}}, x_1) dx_0 dx_{\text{in}}dx_1 =
    \nonumber
    \\
    \KL{m}{\text{proj}_{\mathcal{R}}(m)} + 
    \nonumber
    \\
    \int \log \frac{\cancel{p^{W_{\epsilon}}(x_{\text{in}}|x_0, x_1)}[\text{proj}_{\mathcal{R}}(m)](x_0, x_1)}{\cancel{p^{W_{\epsilon}}(x_{\text{in}}|x_0, x_1)}r(x_0, x_1)} m(x_0, x_{\text{in}}, x_1) dx_0 dx_{\text{in}}dx_1 =
    \nonumber
    \\
    \int \log \frac{[\text{proj}_{\mathcal{R}}(m)](x_0, x_1)}{r(x_0, x_1)} \underbrace{m(x_0, x_1)}_{=[\text{proj}_{\mathcal{R}}(m)](x_0, x_1)} dx_0 dx_1 =
    \nonumber
    \\
    \KL{m}{\text{proj}_{\mathcal{R}}(m)} + \int \log \frac{[\text{proj}_{\mathcal{R}}(m)](x_0, x_1)}{r(x_0, x_1)} [\text{proj}_{\mathcal{R}}(m)](x_0, x_1) dx_0dx_1 =
    \nonumber
    \\
    \KL{m}{\text{proj}_{\mathcal{R}}(m)} + \int \log \frac{[\text{proj}_{\mathcal{R}}(m)](x_0, x_1)}{r(x_0, x_1)} [\text{proj}_{\mathcal{R}}(m)](x_0, x_{\text{in}}, x_1) dx_0dx_{\text{in}}dx_1 =
    \nonumber
    \\
    \KL{m}{\text{proj}_{\mathcal{R}}(m)} + \int \log \frac{p^{W_{\epsilon}}(x_{\text{in}}|x_0, x_1)[\text{proj}_{\mathcal{R}}(m)](x_0, x_1)}{p^{W_{\epsilon}}(x_{\text{in}}|x_0, x_1) r(x_0, x_1)} [\text{proj}_{\mathcal{R}}(m)](x_0, x_{\text{in}}, x_1) dx_0dx_{\text{in}}dx_1 =
    \nonumber
    \\
    \KL{m}{\text{proj}_{\mathcal{R}}(m)} + \underbrace{\int \log \frac{[\text{proj}_{\mathcal{R}}(m)](x_0, x_{\text{in}}x_1)}{r(x_0, x_{\text{in}}, x_1)} [\text{proj}_{\mathcal{R}}(m)](x_0, x_{\text{in}}, x_1) dx_0dx_{\text{in}}dx_1}_{=\KL{[\text{proj}_{\mathcal{R}}(m)](x_0, x_{\text{in}}x_1)}{r(x_0, x_{\text{in}}, x_1)}} =
    \nonumber
    \\
    = \KL{m}{\text{proj}_{\mathcal{R}}(m)} + \KL{\text{proj}_{\mathcal{R}}(m)}{r}
    \nonumber
\end{eqnarray}
That concludes the proof of the second equation \eqref{eq:mr-pythagorean}.

\end{proof}
    
\end{proposition}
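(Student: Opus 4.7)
The plan is to prove both identities by inserting the relevant projection inside the $\log$ of $\KL{\cdot}{\cdot}$ and then exploiting the fact that each projection preserves enough structure of its input to make the resulting cross-term collapse into a KL divergence against the target. This is a standard Pythagorean-style argument for information projections, but the details differ depending on whether we project onto $\mathcal{M}(N)$ or $\mathcal{R}(N)$.

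For the first identity (Markovian side), I would start from $\KL{r}{m}$ and multiply the integrand by $[\text{proj}_{\mathcal{M}}(r)]/[\text{proj}_{\mathcal{M}}(r)]$, which immediately splits it as
\begin{equation*}
\KL{r}{m} = \KL{r}{\text{proj}_{\mathcal{M}}(r)} + \int r(x_0, x_{\text{in}}, x_1) \log \frac{[\text{proj}_{\mathcal{M}}(r)](x_0, x_{\text{in}}, x_1)}{m(x_0, x_{\text{in}}, x_1)} dx_0 dx_{\text{in}} dx_1.
\end{equation*}
Expanding both $[\text{proj}_{\mathcal{M}}(r)]$ and $m$ via their Markovian factorizations \eqref{discrete-mp}, the log-ratio breaks into $\log r(x_0)/m(x_0)$ plus a sum of $\log r(x_{t_n}|x_{t_{n-1}})/m(x_{t_n}|x_{t_{n-1}})$ terms. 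Integrating the $n$-th summand against $r$ only depends on the pairwise marginal $r(x_{t_{n-1}}, x_{t_n})$, and the $x_0$ term only on $r(x_0)$. I would then expand $\KL{\text{proj}_{\mathcal{M}}(r)}{m}$ by the same chain rule to obtain an identical-looking sum, but over the corresponding marginals of $[\text{proj}_{\mathcal{M}}(r)]$; matching the two reduces to showing those marginals coincide with the ones of $r$.

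For the second identity (reciprocal side), the same insertion trick yields $\KL{m}{r} = \KL{m}{\text{proj}_{\mathcal{R}}(m)} + \int m \log \frac{[\text{proj}_{\mathcal{R}}(m)]}{r}$. Here the calculation is much cleaner: both $[\text{proj}_{\mathcal{R}}(m)]$ and $r$ are reciprocal, so their inner conditionals both equal $p^{W^{\epsilon}}(x_{\text{in}}|x_0,x_1)$ and cancel inside the log. What remains is $\log m(x_0,x_1)/r(x_0,x_1)$, using $[\text{proj}_{\mathcal{R}}(m)](x_0,x_1) = m(x_0,x_1)$ from the definition \eqref{discrete-rp}. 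Integrating against $m$ gives $\KL{m(x_0,x_1)}{r(x_0,x_1)}$, which in turn equals $\KL{\text{proj}_{\mathcal{R}}(m)}{r}$ by the same $p^{W^{\epsilon}}$ cancellation applied to the full joints.

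The main obstacle is the pairwise marginal matching $r(x_{t_{n-1}}, x_{t_n}) = [\text{proj}_{\mathcal{M}}(r)](x_{t_{n-1}}, x_{t_n})$ needed in the first identity. I would establish it in two steps: by the definition of $\text{proj}_{\mathcal{M}}$ we have $[\text{proj}_{\mathcal{M}}(r)](x_{t_n}|x_{t_{n-1}}) = r(x_{t_n}|x_{t_{n-1}})$, and an inductive marginalization starting from $[\text{proj}_{\mathcal{M}}(r)](x_0) = r(x_0)$ shows $[\text{proj}_{\mathcal{M}}(r)](x_{t_{n-1}}) = r(x_{t_{n-1}})$ for every $n$. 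Multiplying these two identities yields the pairwise marginal equality, and this is precisely the step that allows one to swap $\int r \log(\cdot)$ for $\int [\text{proj}_{\mathcal{M}}(r)] \log(\cdot)$ and recover $\KL{\text{proj}_{\mathcal{M}}(r)}{m}$ in the cross-term.
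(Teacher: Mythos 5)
Your proposal is correct and follows essentially the same route as the paper's proof: insert the projection into the log to split off $\KL{r}{\text{proj}_{\mathcal{M}}(r)}$ (resp. $\KL{m}{\text{proj}_{\mathcal{R}}(m)}$), then collapse the cross-term into $\KL{\text{proj}_{\mathcal{M}}(r)}{m}$ via the pairwise-marginal identity $r(x_{t_{n-1}},x_{t_n})=[\text{proj}_{\mathcal{M}}(r)](x_{t_{n-1}},x_{t_n})$ (resp. into $\KL{\text{proj}_{\mathcal{R}}(m)}{r}$ via cancellation of the common Brownian-bridge conditional). If anything, your inductive marginalization argument for $[\text{proj}_{\mathcal{M}}(r)](x_{t_{n-1}})=r(x_{t_{n-1}})$ spells out a step the paper only asserts.
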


\begin{proposition}\label{thm:limit-between-sequntial-iters}
Assume that we have a sequence of processes $\{q^{l}\}_{l=0}^{\infty}$ from D-IMF procedure starting from $q^0$ for which $\KL{q^0}{q^*} < \infty$. Assume that for each reciprocal and Markovian projection in a sequence $\KL{q^{l}}{q^{l+1}} < \infty$. Then $\KL{q^{l+1}}{q^*} \leq \KL{q^{l}}{q^*}$ and $\lim_{l \rightarrow \infty} \KL{q^{l}}{q^{l+1}} = 0$.
\end{proposition}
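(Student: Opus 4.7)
The plan is to combine the two Pythagorean identities of Proposition~\ref{thm:kl-pythagorean} with the crucial property that $q^{*} = p^{T^{*}}$ is \emph{simultaneously} Markovian and reciprocal, which is exactly the content of Theorem~\ref{thm:discrete-recioprocal-and-ma}. The idea is to use $q^{*}$ as the ``third'' distribution in the Pythagorean decomposition, so that consecutive D-IMF iterates break the distance to $q^{*}$ into a sum of two KL's.

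First I would split by parity. For an even index $l=2k$ the iterate $q^{2k}\in\mathcal{R}(N)$ and its successor is the Markovian projection $q^{2k+1}=\text{proj}_{\mathcal{M}}(q^{2k})\in\mathcal{M}(N)$; since $q^{*}\in\mathcal{M}(N)$, I apply \eqref{eq:rm-pythagorean} with $r=q^{2k}$ and $m=q^{*}$ to get
\begin{equation}
    \KL{q^{2k}}{q^{*}} \;=\; \KL{q^{2k}}{q^{2k+1}} \;+\; \KL{q^{2k+1}}{q^{*}}.
    \nonumber
\end{equation}
For an odd index $l=2k+1$ the iterate $q^{2k+1}\in\mathcal{M}(N)$ and $q^{2k+2}=\text{proj}_{\mathcal{R}}(q^{2k+1})\in\mathcal{R}(N)$; since $q^{*}\in\mathcal{R}(N)$, I apply \eqref{eq:mr-pythagorean} with $m=q^{2k+1}$ and $r=q^{*}$ to get
\begin{equation}
    \KL{q^{2k+1}}{q^{*}} \;=\; \KL{q^{2k+1}}{q^{2k+2}} \;+\; \KL{q^{2k+2}}{q^{*}}.
    \nonumber
\end{equation}
In both cases the second summand is non-negative, so $\KL{q^{l+1}}{q^{*}}\leq \KL{q^{l}}{q^{*}}$, which is the first claim. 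An induction starting from the hypothesis $\KL{q^{0}}{q^{*}}<\infty$ also shows $\KL{q^{l}}{q^{*}}<\infty$ for every $l$, so the Pythagorean identities above are legitimate at every step (the finiteness of $\KL{q^{l}}{q^{l+1}}$ is given by assumption).

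Finally, for the limit statement I would telescope. The identities rearrange to $\KL{q^{l}}{q^{l+1}} = \KL{q^{l}}{q^{*}} - \KL{q^{l+1}}{q^{*}}$. The sequence $\{\KL{q^{l}}{q^{*}}\}_{l\geq 0}$ is non-negative and monotonically decreasing, hence convergent to some $L\geq 0$. Summing gives
\begin{equation}
    \sum_{l=0}^{\infty}\KL{q^{l}}{q^{l+1}} \;=\; \KL{q^{0}}{q^{*}} - L \;<\; \infty,
    \nonumber
\end{equation}
so the general term tends to zero, i.e.\ $\lim_{l\to\infty}\KL{q^{l}}{q^{l+1}}=0$.

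The only delicate point is verifying that the hypotheses of Proposition~\ref{thm:kl-pythagorean} hold at every iteration; this is the main thing to check carefully. Finiteness of the ``projection'' KL $\KL{q^{l}}{q^{l+1}}$ is directly assumed in the statement, and finiteness of the ``target'' KL $\KL{q^{l}}{q^{*}}$ propagates from $l$ to $l+1$ via the Pythagorean identity itself, starting from the base case $\KL{q^{0}}{q^{*}}<\infty$. Everything else is a bookkeeping argument around monotone convergence of a telescoping series.
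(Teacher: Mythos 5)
Your proposal is correct and follows essentially the same route as the paper: both apply the Pythagorean identities of Proposition~\ref{thm:kl-pythagorean} with $q^{*}$ (which lies in $\mathcal{M}(N)\cap\mathcal{R}(N)$) as the third distribution to obtain $\KL{q^{l}}{q^{*}}=\KL{q^{l}}{q^{l+1}}+\KL{q^{l+1}}{q^{*}}$, deduce monotonicity from non-negativity of KL, and telescope the identity against the finite initial value $\KL{q^{0}}{q^{*}}$ to force the summands to vanish. Your write-up is in fact slightly more careful than the paper's in spelling out the parity split and the propagation of finiteness needed to invoke the Pythagorean identities at every step.
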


\begin{proof}[Proof of Proposition~\ref{thm:limit-between-sequntial-iters}]
We use the same technique as was used in the proof of IMF procedure \cite[Proposition 7]{shi2023diffusion}, and for forward KL in \cite{ruschendorf1995convergence}.
We apply Proposition~\ref{thm:kl-pythagorean} and for every $l$ we have:
$$
\KL{q^l}{q^*} = \KL{q^l}{q^{l+1}} + \KL{q^{l+1}}{q^{*}}
$$
Since the KL divergence is non-negative, it follows that $\KL{q^{l+1}}{q^{*}} \leq \KL{q^l}{q^*}$. Applying this proposition for each $l \leq L \in \mathbb{N}$, we have
$$
\KL{q^0}{q^*} = \KL{q^0}{q^{1}} + \KL{q^{1}}{q^{*}} = \sum_{l=0}^{L}\KL{q^{l}}{q^{l+1}} + \KL{q^{L+1}}{q^*}.
$$
Since KL is non-negative and $\KL{q^0}{q^*} < \infty$, we get ${\lim_{l \rightarrow \infty} \KL{q^{l}}{q^{l+1}} = 0}$.
\end{proof}


\begin{proof}[Proof of Theorem~\ref{thm:dimf-converges}]
    The mild assumptions here are the assumptions of the Propositon~\ref{thm:limit-between-sequntial-iters}, i.e. $\KL{q^{l}}{q^{l+1}} < \infty$. To prove the current theorem, we follow the proof of \cite[Theorem 8]{shi2023diffusion} but do the derivations for discrete stochastic processes instead of continuous. By our previous Proposition~\ref{thm:limit-between-sequntial-iters} it holds that $\KL{q^{l}}{q^*} \leq \KL{q^{0}}{q^*} < \infty$ for every $l$. Hence the sequence $(q^l)_{l=0}^{\infty}$ and its subsequences of markovian $(m^{l})_{l=1}^{\infty} = (q^{2l+1})_{l=1}^{\infty}$ and reciprocal processes $(r^{l})_{l=1}^{\infty} = (q^{2l})_{l=1}^{\infty}$ are subsets of a set $\{q \in \mathcal{P}_{2,ac}(\mathbb{R}^{D \times (N+2)}) : \KL{q}{q^*} \leq \KL{q^0}{q^*} \}$ which is compact \cite[Theorem 20]{van2014renyi}. Hence, $(m_{l})_{l=1}^{\infty}$ contains a convergent subsequence $(m^{l_{k}})_{k=1}^{\infty} \rightarrow m^*$. In turn, the subsequence $(r^{l_{k}})_{k=1}^{\infty}$ containes a convergent subsequence $(r^{l_{k_{j}}})_{j=1}^{\infty} \rightarrow r^*$. Since sets of Markovian $\mathcal{M}(N)$ and reciprocal $\mathcal{R}(N)$ processes are closed under weak convergence, we have $m^* \in \mathcal{M}(N)$ and $r^* \in \mathcal{R}(N)$. From the lower semicontinuity of KL divergence in the weak topology \cite[Theorem 19]{van2014renyi} and $\lim_{l \rightarrow \infty}\KL{q^{l}}{q^{l+1}} = 0$ (see Proposition~\ref{thm:limit-between-sequntial-iters}):
    \begin{eqnarray}
        0 \leq \KL{m^*}{r^*} \leq \lim \inf_{j \rightarrow \infty} \KL{m^{l_{k_{j}}}}{r^{l_{k_{j}}}} = 0.
    \end{eqnarray}
    Thus, $m^* = r^* \defeq q^{\text{lim}}$. We know that $q^{\text{lim}}$ has the same marginals $p_0(x_0) = q(x_0)$ and $p_1(x_1) = q(x_1)$ since both Markovian and reciprocal projections preserve marginals. By our Theorem~\ref{thm:discrete-recioprocal-and-ma} since $q^{\text{lim}} \in \mathcal{M}(N) \cap \mathcal{R}(N)$, then $q^{\text{lim}}(x_0, x_{\text{in}}, x_1) = p^{T^*}(x_0, x_{\text{in}}, x_1)$. Finally, $\lim_{l \rightarrow \infty} \KL{q^l(x_0, x_{\text{in}}, x_1)}{p^{T^*}(x_0, x_{\text{in}}, x_1)} = 0$ follows using 
    $$\lim_{j \rightarrow \infty} \KL{r^{l_{k_{j}}}(x_0, x_{\text{in}}, x_1)}{p^{T^{*}}(x_0, x_{\text{in}}, x_1)} = 0$$
    and the mononotonicity of $\KL{q^{l}}{q^*}$, see Proposition~\ref{thm:limit-between-sequntial-iters}.
\end{proof}

\subsection{Proofs of the Statements in \wasyparagraph\ref{sec:theory-gaussian-to-gaussian}}

The proofs in this subsection are the most technical as there are a lot of manipulations with matrices.
\begin{proof}[Proof of Theorem~\ref{thm:gauss-reciprocal}]
From \eqref{eq:brownian-sampling} and \eqref{eq:brownian-bridge-transition} follows that the discrete Brownian Bridge $p^{W^{\epsilon}}(x_{\text{in}}|x_0, x_1)$ has also a Gaussian distribution. The covariance of the Brownian Bridge with coefficient $\epsilon$ at times $s < t$ \cite[Eq. 9.14]{ibe2013markov}  is $\epsilon s (1-t)$. Thus, the matrix $\epsilon K$ is a covariance matrix for all pairs of time moments $t, t' \in [t_{1}, \dots, t_{N}]$ of the considered discrete Brownian Bridge $p^{W^{\epsilon}}(x_{\text{in}}| x_0, x_1)$. The mean value $\mathbb{E} [x_{t_{n}} |x_0, x_1]$ of Brownian Bridge at time $t_{n}$ is equal to $t_{n}x_1 + (1-t_{n})x_0$. Thus, the discrete Brownian Bridge has the following distribution: $p^{W^{\epsilon}} (x_{\text{in}}| x_0, x_1) = \mathcal{N}(x_{\text{in}}| Ux_{01}, \epsilon K)$. 

Recall that the reciprocal projection is given by:
\begin{eqnarray}\label{eq:proof-gauss-reciprocal-sup-1}
    [\text{proj}_{\mathcal{R}}q](x_{\text{in}}, x_0, x_1) = p^{W^{\epsilon}}(x_{\text{in}}|x_0, x_1)q(x_0, x_1) .
\end{eqnarray}
 Since it is a product of two Gaussian distributions, which itself is also a Gaussian distribution, our goal is to find the mean vector and covariance matrix of $[\text{proj}_{\mathcal{R}}q](x_{\text{in}}, x_0, x_1)$. Further we denote $[\text{proj}_{\mathcal{R}}q](x_{\text{in}}, x_0, x_1)$ as $r(x_0, x_{\text{in}}, x_1)$ for convenience. 

The \textbf{mean vector} of $[\text{proj}_{\mathcal{R}}q](x_{\text{in}}, x_0, x_1)$ for each $t_{n}$ is given by 
\begin{gather}
    \mathbb{E}_{r(x_{t_{n}})}x_{t_{n}} = \int \mathbb{E}_{r(x_{t_{n}}|x_0, x_1)} [x_{t_{n}}|x_0, x_1]q(x_0, x_1)dx_0dx_1 =
    \nonumber
    \\
    \int \mathbb{E}_{p^{W^{\epsilon}}(x_{t_{n}}|x_0, x_1)} [x_{t_{n}}|x_0, x_1]q(x_0, x_1)dx_0dx_1 = \int \big[x_0 + t_{n}(x_1 - x_0)\big] q(x_0, x_1)dx_0dx_1 =
    \nonumber
    \\
    (1-t_{n}) \int x_0 q(x_0, x_1)dx_0dx_1 + t_{n}\int x_1 q(x_0, x_1)dx_0dx_1 = t_n\mu_1 + (1-t_n)\mu_0.
    \nonumber
\end{gather}
where $\mu_0$ and $\mu_1$ are the means of $q(x_0)$ and $q(x_1)$, respectively. Thus, the mean vector of $[\text{proj}_{\mathcal{R}}q](x_{\text{in}}, x_0, x_1)$ is given by $(U\mu_{01}, \mu_0, \mu_1)$.

Now, we are going to find the \textbf{covariance matrix} $\Sigma_{R}$. We will first find the inverse covariance 
$$
\Sigma_{R}^{-1} = 
\begin{pmatrix}
    A & B \\
    B^T & C
\end{pmatrix}
$$
of $[\text{proj}_{\mathcal{R}}q](x_{\text{in}}, x_0, x_1)$. Here $A$ has shape $ND\times ND$ as the matrix $K$, while the matrix $C$ has the shape $2D\times 2D$ as the matrix $\Sigma$. Matrices $A$ and $C$ are symmetric since they are a part of the inversed symmetric matrix $\Sigma_{R}$. We exploit the fact that the logarithm of a Gaussian distribution has the form (by \textbf{Const} we denote all terms that does not depend on $x_{\text{in}}$ or $x_{01}$):
\color{black}
\begin{eqnarray}
    \log \big([\text{proj}_{\mathcal{R}}q](x_{\text{in}}, x_0, x_1)\big) = 
    \nonumber
    \\
    \text{Const} - \frac{1}{2}((x_{\text{in}}, x_{01}) - (U\mu_{01}, \mu_{01}))^T\Sigma_{R}^{-1}((x_{\text{in}}, x_{01}) - (U\mu_{01}, \mu_{01})) =
    \nonumber
    \\
    \text{Const} - \frac{1}{2}((x_{\text{in}}, x_{01}) - (U\mu_{01}, \mu_{01}))^T\begin{pmatrix}
    A & B \\
    B^T & C
\end{pmatrix}((x_{\text{in}}, x_{01}) - (U\mu_{01}, \mu_{01})) = 
    \nonumber
    \\
    \text{Const} - \frac{1}{2}(x_{\text{in}} - U\mu_{01})^TA(x_{\text{in}} - U\mu_{01}) - \frac{1}{2}(x_{01} - \mu_{01})^TC(x_{01}-\mu_{01}) -
    \nonumber
    \\
    (x_{\text{in}} - U\mu_{01})^TB(x_{01} - \mu_{01}) =
    \nonumber
    \\
    \text{Const} - \frac{1}{2}x_{\text{in}}^TAx_{\text{in}} + (U\mu_{01})^TAx_{\text{in}} - \frac{1}{2}x_{01}^TCx_{01} + \mu_{01}^TCx_{01} -
    \nonumber
    \\
    x_{\text{in}}^TBx_{01} - x_{\text{in}}^TB\mu_{01} - (U\mu_{01})^TBx_{01} - (U\mu_{01})^TB\mu_{01} = 
    \nonumber
    \\
    \text{Const} - \frac{1}{2}x_{\text{in}}^TAx_{\text{in}} + (U\mu_{01})^TAx_{\text{in}} - \frac{1}{2}x_{01}^TCx_{01} + \mu_{01}^TCx_{01} -
    \nonumber
    \\
    x_{\text{in}}^TBx_{01} - x_{\text{in}}^TB\mu_{01} - (U\mu_{01})^TBx_{01}.
    \nonumber
\end{eqnarray}
In turn, from \eqref{eq:proof-gauss-reciprocal-sup-1} we have:
\begin{eqnarray}
    \log \big([\text{proj}_{\mathcal{R}}q](x_{\text{in}}, x_0, x_1)\big) = \log p^{W^{\epsilon}}(x_{\text{in}}|x_0, x_1) + \log q(x_0, x_1) =
    \nonumber
    \\
    \text{Const} - \frac{1}{2}(x_{\text{in}} - Ux_{01})^T(\epsilon K)^{-1}(x_{\text{in}} - Ux_{01}) - \frac{1}{2}(x_{01} - \mu_{01})^T\Sigma^{-1}(x_{01} - \mu_{01}) =
    \nonumber
    \\
    \text{Const} - \frac{1}{2}x_{\text{in}}^T(\epsilon K)^{-1}x_{\text{in}} + x_{\text{in}}^T(\epsilon K)^{-1}Ux_{01} - \frac{1}{2}(Ux_{01})^{T}(\epsilon K)^{-1}Ux_{01} -
    \nonumber
    \\
    \frac{1}{2}x_{01}^T\Sigma^{-1}x_{01} + x_{01}^T\Sigma^{-1}\mu_{01} - \frac{1}{2}\mu_{01}\Sigma^{-1}\mu_{01} = 
    \nonumber
    \\
    \text{Const} - \frac{1}{2}x_{\text{in}}^T\underbrace{(\epsilon K)}_{=A}x_{\text{in}} + x_{\text{in}}^T\underbrace{(\epsilon K)^{-1}U}_{=B}x_{01} - \frac{1}{2}x_{01}^{T}\underbrace{(U^T(\epsilon K)^{-1}U + \Sigma^{-1})}_{=C}x_{01} + x_{01}^T\Sigma^{-1}\mu_{01}.
    \nonumber
\end{eqnarray}
\color{black}
By matching the formulas above, it follows:
\begin{eqnarray}
    A = (\epsilon K)^{-1} , \quad B = -(\epsilon K)^{-1}U, \quad C = U^T(\epsilon K)^{-1}U + \Sigma^{-1}.
\end{eqnarray}
Thus, we have:
\begin{eqnarray}
    \Sigma^{-1}_{R} = \begin{pmatrix}
    A & B \\
    B^T & C
\end{pmatrix} = 
\begin{pmatrix}
    (\epsilon K)^{-1} & -(\epsilon K)^{-1}U \\
    -((\epsilon K)^{-1}U)^T & U^T(\epsilon K)^{-1}U + \Sigma^{-1}
\end{pmatrix}
\nonumber
\end{eqnarray}
\color{black} By using the formula of block-wise matrix inversion \cite[Section 9.1.3]{petersen2008matrix} \color{black}:
\begin{eqnarray}
    \begin{pmatrix}
    A & B \\
    B^T & C
\end{pmatrix}^{-1} = \begin{pmatrix}
    A^{-1} + A^{-1}B(C - B^TA^{-1}B)^{-1}B^TA^{-1} & -A^{-1}B(C - B^TA^{-1}B)^{-1} \\
    -(C - B^TA^{-1}B)^{-1}B^TA^{-1}& (C - B^TA^{-1}B)^{-1}
\end{pmatrix}.
\end{eqnarray}
Applying this formula, we have:
\begin{eqnarray}
    (C - B^TA^{-1}B)^{-1} = (U^T(\epsilon K)^{-1}U + \Sigma^{-1} - U^{T}(\epsilon K)^{-1}(\epsilon K) (\epsilon K)^{-1}U)^{-1} = (\Sigma^{-1})^{-1} = \Sigma.
    \nonumber
    \\
    A^{-1} + A^{-1}B(C - B^TA^{-1}B)^{-1}B^TA^{-1} = 
    \nonumber
    \\
    \epsilon K + \epsilon K (\epsilon K)^{-1} U \Sigma \Sigma^{-1} \Sigma U^T \epsilon K (\epsilon K)^{-1} = \epsilon K + U \Sigma U^{T}.
    \nonumber
    \\
    -A^{-1}B(C - B^TA^{-1}B)^{-1} = \epsilon K (\epsilon K)^{-1}U \Sigma = U \Sigma.
    \nonumber
\end{eqnarray}
Thus, we obtain the desired result:
\begin{eqnarray}
    \Sigma_{R} =
    \begin{pmatrix}
    \epsilon K + U \Sigma U^T & U\Sigma \\
    (U\Sigma)^T & \Sigma
    \end{pmatrix}.
    \nonumber
\end{eqnarray}
\end{proof}

\begin{proof}[Proof of Theorem~\ref{thm:gaauss-markovian}]
    \underline{Part 1.}
    Since from the assumptions of the theorem $q(x_{\text{in}}, x_0, x_1)$ has Gaussian distribution, it follows that joint distribution of two time moments $q(x_{t_{n}}, x_{t_{n-1}})$ is also Gaussian and is given by:
    \begin{eqnarray}
        q(x_{t_{n}}, x_{t_{n-1}}) = \mathcal{N}(\begin{pmatrix}
            x_{t_{n}} \\
            x_{t_{n-1}}
        \end{pmatrix}|
        \begin{pmatrix}
            \mu_{t_{n}} \\
            \mu_{t_{n-1}}
        \end{pmatrix}
        \begin{pmatrix}
            (\widetilde{\Sigma}_{R})_{t_{n},t_{n}} & (\widetilde{\Sigma}_{R})_{t_{n}, t_{n-1}} \\
            (\widetilde{\Sigma}_{R})_{t_{n-1},t_{n}} & (\widetilde{\Sigma}_{R})_{t_{n-1}, t_{n-1}}
        \end{pmatrix})
    \end{eqnarray}
\color{black} Recall that here $(\widetilde{\Sigma}_{R})_{t_{i}, t_{j}}$ represents submatrix of $\widetilde{\Sigma}_{R}$ with covariance of $x_{t_{i}}$ and $x_{t_{j}}$. Hence, the conditional distributions are given by \cite[Sec 8.1.3]{petersen2008matrix}:
    \begin{eqnarray}
        q(x_{t_{n}}|x_{t_{n-1}}) = \mathcal{N}(x_{t_{n}}| \widehat{\mu}_{t_{n}}(x_{t_{n-1}}), \widehat{\Sigma}_{t_{n}}),
        \nonumber
        \\
        \widehat{\mu}_{t_{n}}(x_{t_{n-1}}) \defeq \mu_{t_{n}} + (\widetilde{\Sigma}_{R})_{t_{n}, t_{n-1}}((\widetilde{\Sigma}_{R})_{t_{n-1}, t_{n-1}})^{-1}(x_{t_{n-1}} - \mu_{t_{n-1}}),
        \label{eq:mu_hat}
        \\
        \widehat{\Sigma}_{t_{n}} \defeq (\widetilde{\Sigma}_{R})_{t_{n}, t_{n}} - (\widetilde{\Sigma}_{R})_{t_{n}, t_{n-1}}((\widetilde{\Sigma}_{R})_{t_{n-1}, t_{n-1}})^{-1}((\widetilde{\Sigma}_{R})_{t_{n}, t_{n-1}})^T.
        \nonumber
    \end{eqnarray}
    That concludes the first part of our proof about the whole distribution $[\text{proj}_{\mathcal{M}}q](x_0, x_{\text{in}}, x_1)$ of Markovian projection.

    \underline{Part 2.}
    Next, we find the distribution of $[\text{proj}_{\mathcal{M}}q](x_0, x_1)$, but before we proceed, we introduce new notation to improve readability:
    \begin{eqnarray}
        q_{\mathcal{M}}(x_0, x_{\text{in}}, x_1) \defeq [\text{proj}_{\mathcal{M}}q](x_0, x_{\text{in}}, x_1).
    \end{eqnarray}
    Since the process $q_{\mathcal{M}}(x_0, x_{\text{in}}, x_1)$ is Gaussian, all its joint and conditional distributions are also Gaussian. Moreover, we know that from the definition of the Markovian projection \eqref{discrete-mp} follows that it preserve all marginal distributions, i.e. $q_{\mathcal{M}}(x_{t_{n}})=q(x_{t_{n}})$, hence we can already write that $q_{\mathcal{M}}(x_0, x_1)$ is given by:
    \begin{equation}
     \!\! q_{\mathcal{M}}(x_0, x_1) = \mathcal{N}(
    \begin{pmatrix}
        x_0 \\
        x_1
    \end{pmatrix}\!|\!
    \begin{pmatrix}
        \mu_0 \\
        \mu_1
    \end{pmatrix} \!, \!
    \begin{pmatrix}
        \Sigma_0 & \!\!\! \Sigma_{01} \\
        (\Sigma_{01})^T & \!\!\! \Sigma_1 \\
    \end{pmatrix}),
    \end{equation}
    where $\mu_0$ and $\mu_1$ are the means of $q(x_0)$ and $q(x_1)$, while $\Sigma_0$ and $\Sigma_1$ are the covariance matricies of $q(x_0)$ and $q(x_1)$. Thus, only $\Sigma_{01}$ is unknown. Again, by using the formula for the conditional distributions \cite[Sec 8.1.3]{petersen2008matrix} we have that:
    \begin{eqnarray}
        q_{\mathcal{M}}(x_1| x_0) = \mathcal{N}(x_1| \widetilde{\mu}_{1}(x_0), \widetilde{\Sigma}_{1}(x_0)),
        \nonumber
        \\
        \widetilde{\mu}_{1}(x_{0}) \defeq \mu_{1} + \underbrace{\Sigma_{01}^T\Sigma_{0}^{-1}}_{\defeq G}(x_0 - \mu_0),
        \nonumber
        \\
        \widehat{\Sigma}_{1} \defeq \Sigma_{1} - \Sigma_{01}^T\Sigma_{0}^{-1}\Sigma_{01}.
        \nonumber
    \end{eqnarray}
    Since the mean $\widetilde{\mu}_{1}(x_{0})$ of the conditional distribution is a affine map of $x_0$ with the matrix $G$ we can derive: 
    $$\Sigma_{01}^T = G\Sigma_0.$$ Thus, we need to find the expression for $G$, by considering the expression for $\widetilde{\mu}_{1}(x_{0})$.
    To derive the expression of the mean $\widetilde{\mu}_{1}(x_0)$ of $q_{\mathcal{M}}(x_1|x_0)$ we consider the sequence $q_{\mathcal{M}}(x_{t_{n}}|x_0)$ for $n \in [1, \dots, N+1]$. We already know the expression for $n=1$ which is given by $[\text{proj}_{\mathcal{M}}q](x_{t_{1}}| x_0) = q(x_{t_{1}}|x_0)$ in the first part of the proof. For other $n$, we use the following expression:
    \begin{eqnarray}
        q_{\mathcal{M}}(x_{t_{n}}| x_0) = \int q(x_{t_{n}}|x_{t_{n-1}})q_{\mathcal{M}}(x_{t_{n-1}}|x_{0}) dx_{t_{n-1}}.
        \label{eq:gauss-markovian-convlution}
    \end{eqnarray}
    Since $q_{\mathcal{M}}(x_{t_{n}}| x_0)$ is Gaussian we denote $q_{\mathcal{M}}(x_{t_{n}}| x_0) = \mathcal{N}(x_{t_{n}}|\widetilde{\mu}_{t_{n}}(x_0), \widetilde{\Sigma}_{t_{n}})$.
    We derive the mean $\widetilde{\mu}_{t_{n}}(x_0)$ by using the properties of conditional expectations as follows:
    \begin{eqnarray}
         \widetilde{\mu}_{t_{n}}(x_0) = \mathbb{E}_{q_{\mathcal{M}}(x_{t_{n}}| x_0)} [x_{t_{n}}] = \int \underbrace{\Big(\mathbb{E}_{q(x_{t_{n}}| x_{t_{n-1}})} [x_{t_{n}}] \Big)}_{\widehat{\mu}_{t_{n}}(x_{t_{n-1}})} q_{\mathcal{M}}(x_{t_{n-1}}|x_0) dx_{t_{n-1}} = 
        \nonumber
        \\
        \int \Big(\mu_{t_{n}} + (\widetilde{\Sigma}_{R})_{t_{n}, t_{n-1}}((\widetilde{\Sigma}_{R})_{t_{n-1}, t_{n-1}})^{-1}(x_{t_{n-1}} - \mu_{t_{n-1}})\Big) q_{\mathcal{M}}(x_{t_{n-1}}|x_0) dx_{t_{n-1}} = 
        \nonumber
        \\
        \mu_{t_{n}} + (\widetilde{\Sigma}_{R})_{t_{n}, t_{n-1}}((\widetilde{\Sigma}_{R})_{t_{n-1}, t_{n-1}})^{-1}\Big(\big(\int x_{t_{n-1}}q_{\mathcal{M}}(x_{t_{n-1}}|x_{0})dx_{t_{n-1}}\big) - \mu_{t_{n-1}}\Big) = 
        \nonumber
        \\
       \mu_{t_{n}} + (\widetilde{\Sigma}_{R})_{t_{n}, t_{n-1}}((\widetilde{\Sigma}_{R})_{t_{n-1}, t_{n-1}})^{-1}\Big(\big(\underbrace{\mathbb{E}_{q_{\mathcal{M}}(x_{t_{n-1}}|x_0)}x_{t_{n-1}}}_{=\widetilde{\mu}(x_{t_{n-1}})(x_0)}\big) - \mu_{t_{n-1}}\Big) = 
        \nonumber
        \\
        \mu_{t_{n}} + (\widetilde{\Sigma}_{R})_{t_{n}, t_{n-1}}((\widetilde{\Sigma}_{R})_{t_{n-1}, t_{n-1}})^{-1} (\widetilde{\mu}_{t_{n-1}}(x_0) - \mu_{t_{n-1}}) = \widehat{\mu}_{t_{n}}(\widetilde{\mu}_{t_{n-1}}(x_0)).
        \label{eq:last_line}
    \end{eqnarray}
    Note that in the line \eqref{eq:last_line}, we use equation $\eqref{eq:mu_hat}$ for $\widehat{\mu}_{t_{n}}(x_{t_{n-1}})$ with $x_{t_{n-1}} = \widetilde{\mu}_{t_{n-1}}(x_0)$ to simplify the expression. Since $\widetilde{\mu}_{t_{n}}(x_0) = \widehat{\mu}_{t_{n}}(\widetilde{\mu}_{t_{n-1}}(x_0))$ we can derive $\widetilde{\mu}_{1}(x_0)$ recursively as follows:
    $$
    \widetilde{\mu}_{1}(x_0) = \widetilde{\mu}_{t_{N+1}}(x_0) = \widehat{\mu}_{t_{N+1}}(\widetilde{\mu}_{t_{N}}(x_0)) = \widehat{\mu}_{t_{N+1}}(\widehat{\mu}_{t_{N}}(\dots \widehat{\mu}_0(x_0) \dots )),
    $$
    where each $\widehat{\mu}_{t_{n}}(x_{t_{n-1}})$ is a affine map given by \eqref{eq:mu_hat} with the matrix given by 
    $$(\widetilde{\Sigma}_{R})_{t_{n}, t_{n-1}}((\widetilde{\Sigma}_{R})_{t_{n-1}, t_{n-1}})^{-1}.$$ 
    Hence, $\widetilde{\mu}_{1}(x_0)$ is a composition of affine maps, and its matrix is given by the product of matrices $\widetilde{\mu}_{t_{n}}(x_{t_{n-1}})$ as follows:
    $$
    G = \Big[\prod_{n=1}^{N+1} (\widetilde{\Sigma}_{R})_{t_{n}, t_{n-1}}((\widetilde{\Sigma}_{R})_{t_{n-1}, t_{n-1}})^{-1}\Big],
    $$
    in turn $\Sigma_{01}^T$ is given by:
    $$
    \Sigma_{01}^T = G\Sigma_0 = \Big[\prod_{n=1}^{N+1} (\widetilde{\Sigma}_{R})_{t_{n}, t_{n-1}}((\widetilde{\Sigma}_{R})_{t_{n-1}, t_{n-1}})^{-1}\Big] \Sigma_0.
    $$
    This concludes the proof.
\end{proof}

\section{Additional Experiments}
\label{sec-exp-extra}

\subsection{Illustrative 2D Example}\label{sec:exp-toy-2d}

\begin{figure*}[!b]
\begin{subfigure}[b]{0.245\linewidth}
\centering
\includegraphics[width=0.995\linewidth]{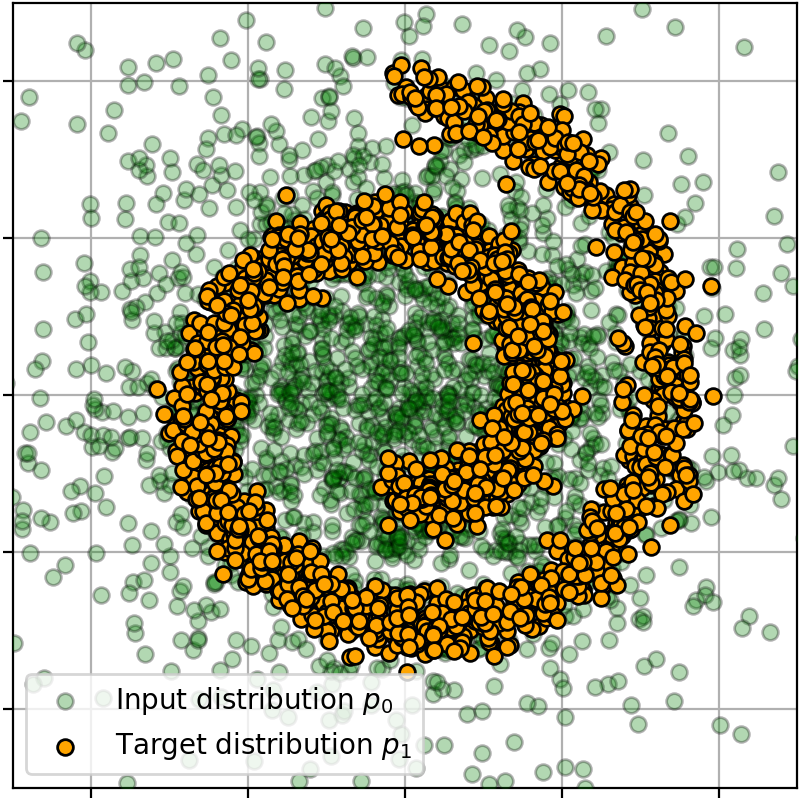}
\caption{\centering ${x_0\sim p_0
}$, ${x_1 \sim p_1}.$}
\vspace{-1mm}
\end{subfigure}
\vspace{-1mm}\hfill\begin{subfigure}[b]{0.245\linewidth}
\centering
\includegraphics[width=0.995\linewidth]{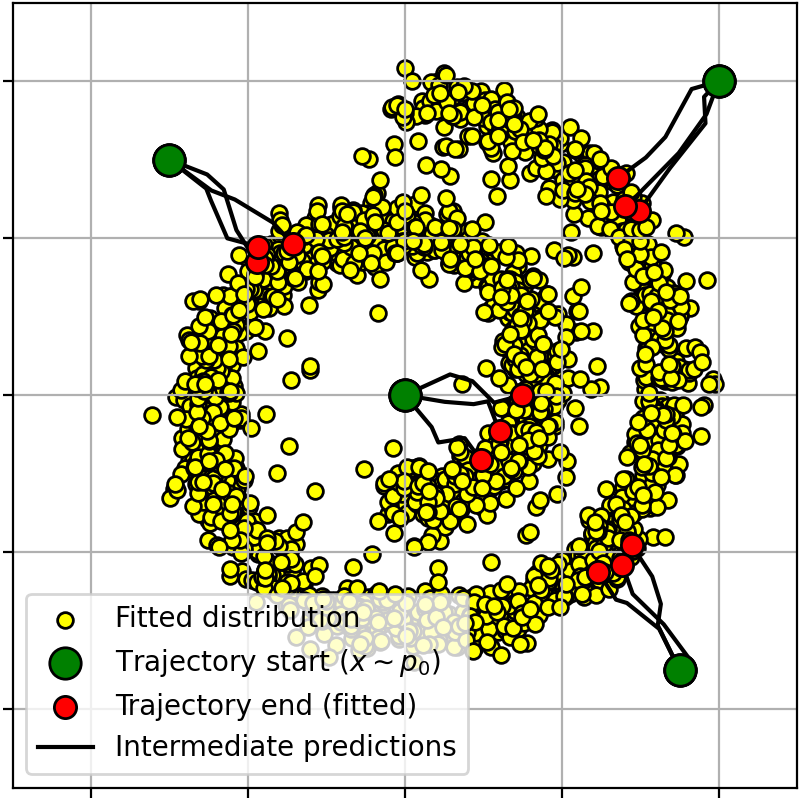}
\caption{\centering $\epsilon=0.03$.}
\vspace{-1mm}
\end{subfigure}
\hfill\begin{subfigure}[b]{0.245\linewidth}
\centering
\includegraphics[width=0.995\linewidth]{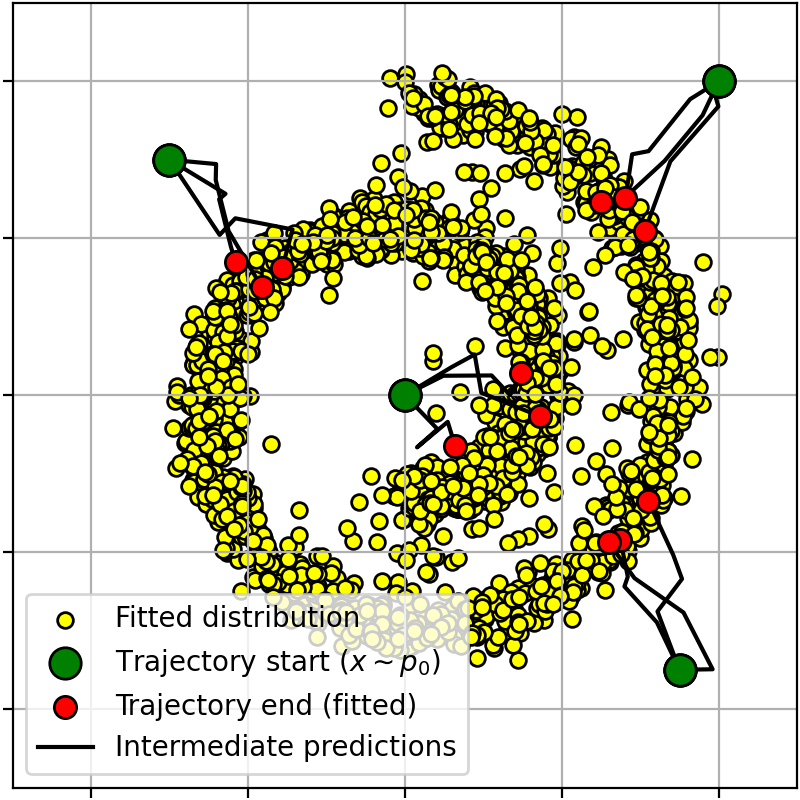}
\caption{\centering $\epsilon=0.1$.}
\vspace{-1mm}
\end{subfigure}
\hfill\begin{subfigure}[b]{0.245\linewidth}
\centering
\includegraphics[width=0.995\linewidth]{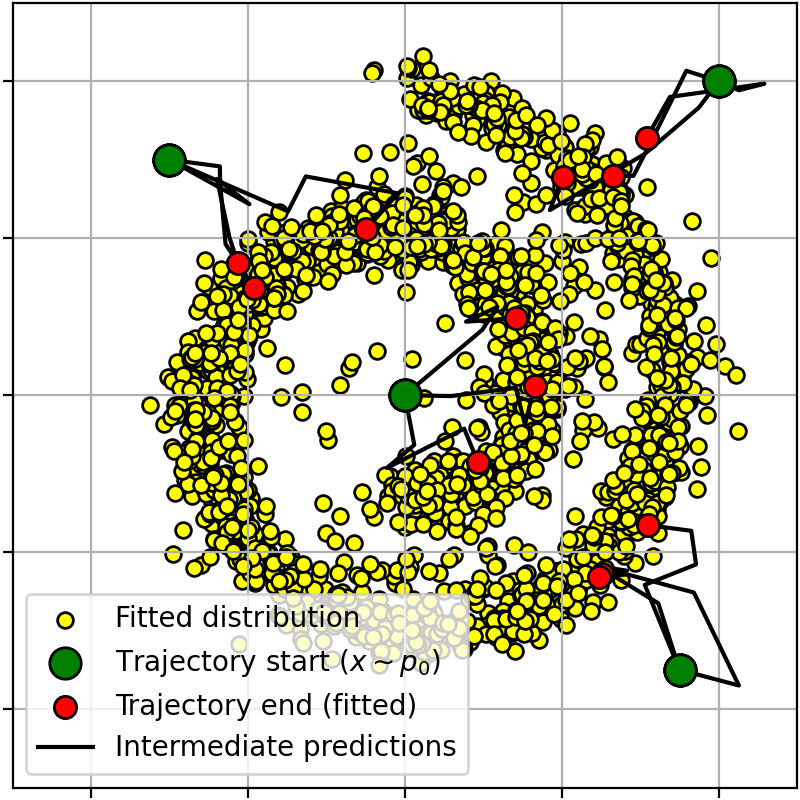}
\caption{\centering $\epsilon=0.3$.}
\vspace{-1mm}
\end{subfigure}
\vspace{-0mm} \caption{\centering The final process $q_{\theta}$ learned with ASBM \textbf{(ours)} in \textit{Gaussian} $\!\rightarrow\!$ \textit{Swiss roll} example.}
\label{fig:swiss-roll}
\end{figure*}

Here we consider the SB problem with $p_0$ as a 2D Gaussian distribution and $p_1$ as the Swiss-roll distribution. We use independent $q^{0}(x_0, x_1)=p_0(x_0)p_1(x_1)$, $N=3$ ($t_{n}=\frac{n}{N+1}$) and $K=20$ outer iterations. We run our ASBM algorithm with different values of parameter $\epsilon$ and present our results in Figure~\ref{fig:swiss-roll}. In all the cases, we observe the convergence to the target distribution. Overall, the trajectories are similar to the Brownian bridge and the closeness of start and endpoints is preserved. In Figure~\ref{fig:swiss-roll-evolution}, we show the evolution of trajectories for different D-IMF iterations, which become more straight when number of iterations increase.

\begin{figure*}[!t]
\begin{subfigure}[b]{0.245\linewidth}
\centering
\includegraphics[width=0.995\linewidth]{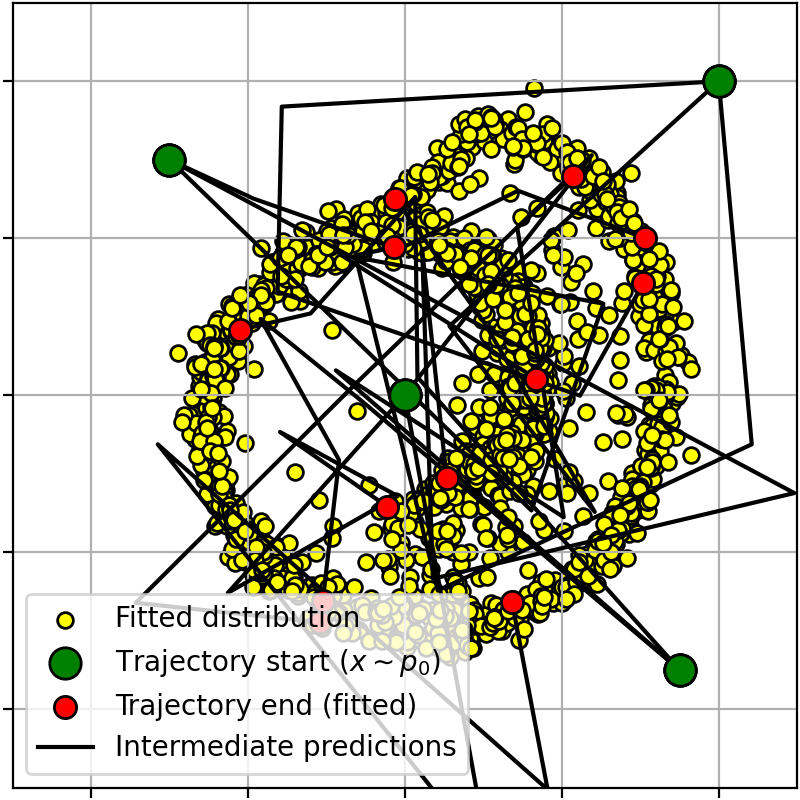}
\caption{\centering Outer iteration $0$.}
\vspace{-1mm}
\end{subfigure}
\vspace{-1mm}\hfill\begin{subfigure}[b]{0.245\linewidth}
\centering
\includegraphics[width=0.995\linewidth]{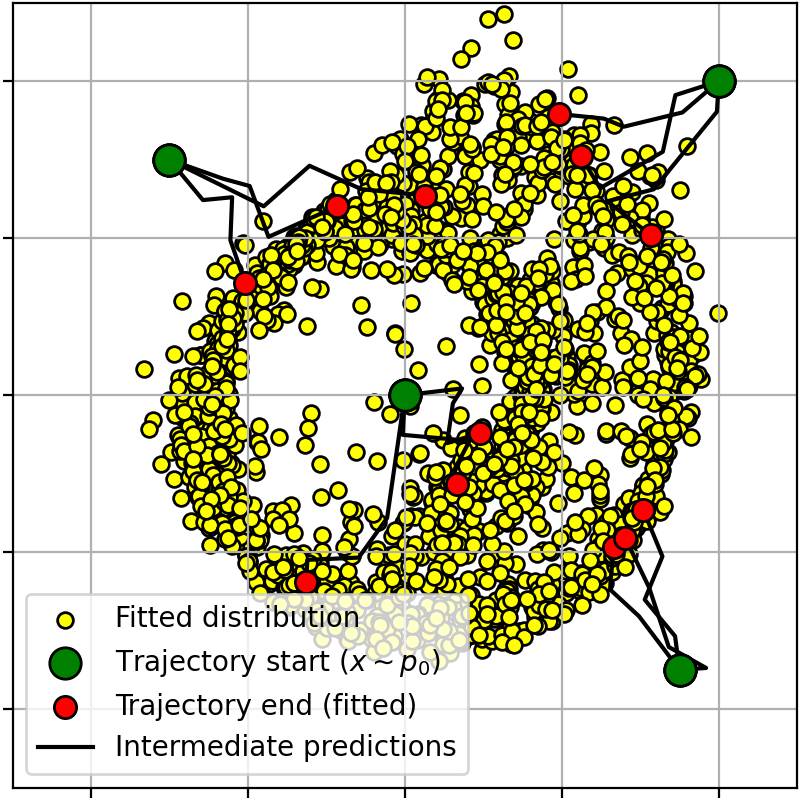}
\caption{\centering Outer iteration $1$.}
\vspace{-1mm}
\end{subfigure}
\hfill\begin{subfigure}[b]{0.245\linewidth}
\centering
\includegraphics[width=0.995\linewidth]{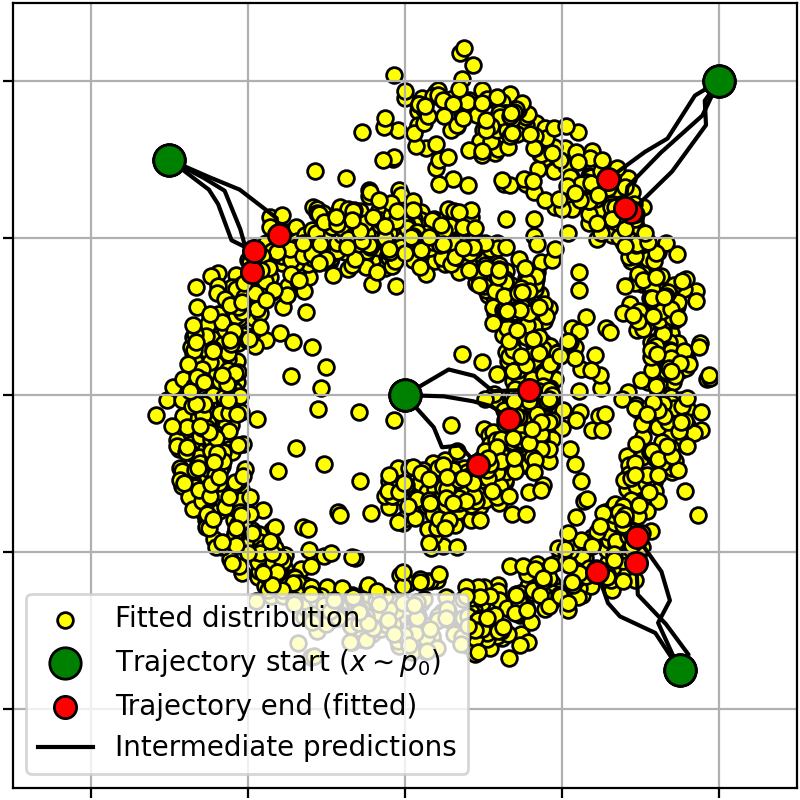}
\caption{\centering Outer iteration $10$.}
\vspace{-1mm}
\end{subfigure}
\hfill\begin{subfigure}[b]{0.245\linewidth}
\centering
\includegraphics[width=0.995\linewidth]{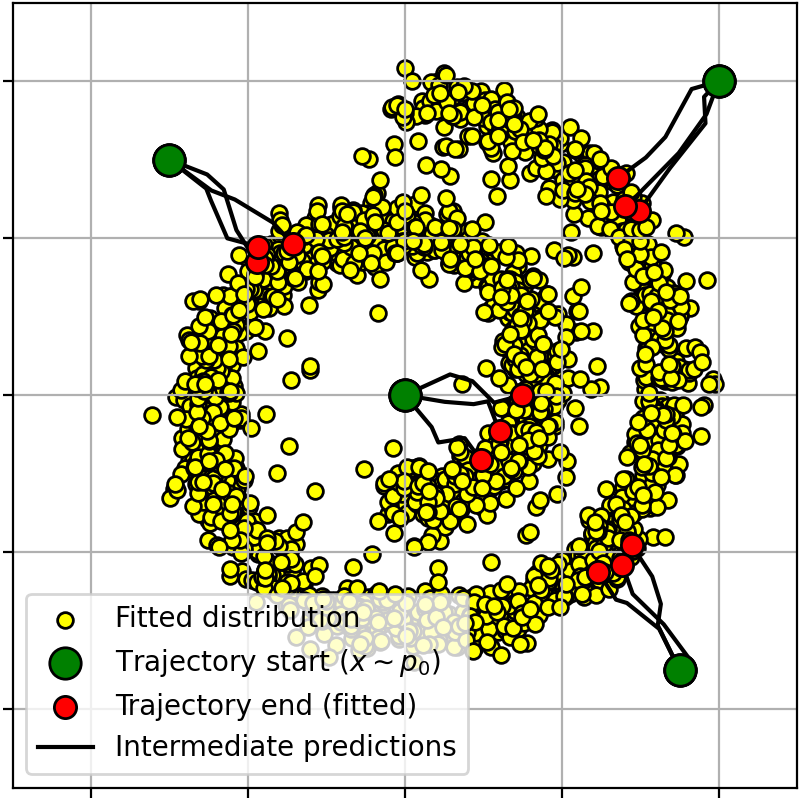}
\caption{\centering Outer iteration $19$.}
\vspace{-1mm}
\end{subfigure}
\vspace{-0mm} \caption{\centering Evolution of our learned discrete process $q_{\theta}$ depending on D-IMF iteration in \textit{Gaussian} $\!\rightarrow\!$ \textit{Swiss roll} example with $\epsilon=0.03$.}
\label{fig:swiss-roll-evolution}
\vspace{-3mm}
\end{figure*}


\subsection{Benchmark}\label{sec:exp-benchmark}

We use the SB mixtures benchmark proposed by \cite[\wasyparagraph{4}]{gushchin2023building} to experimentally verify that our ASBM algorithm is indeed able to solve the Schrödinger Bridge between $p_0$ and $p_1$. The benchmark provides continuous probability distribution pairs $p_0,p_1$ for dimensions $D \in \{2,16,64,128\}$ with the known static SB solution $p^{T^*}(x_0, x_1)$ for parameters $\epsilon \in \{0.1,1.10\}$. To evaluate the quality of our recovered SB solution, we use $\text{cB}\mathbb{W}_{2}^{2}\text{-UVP}$ metric as suggested by the authors \cite[\wasyparagraph{5}]{gushchin2023building} and provide results in Table \ref{table-cbwuvp-benchmark}. Additionally, we study how our approach learns the target distribution $p_1$ in Table \ref{table-bwuvp-benchmark}. In all the cases, we run our ASBM algorithm starting from the independent coupling between $p_0$ and $p_1$.

As the baselines, we consider other neural bridge matching methods \cite{tong2023simulation,shi2023diffusion}. The first one ($\text{SF}^2$M-Sink) is based on minibatch OT approximations, while the latter implements continuous IMF (DSBM). Additionally, we include the results of the best algorithm (for each setup) from the benchmark \cite{gushchin2023building}.

As shown in the Table ~\ref{table-cbwuvp-benchmark}, our algorithm demonstrates superior performance on $\epsilon=10$, superior performance or comparable performance on $\epsilon=1$, slightly worse performance w.r.t. $\text{SF}^2$M-Sink \cite{tong2023simulation} and superior performance w.r.t. DSBM \cite{shi2023diffusion} on $\epsilon=0.1$. Also, from Table ~\ref{table-bwuvp-benchmark} one may note that ASBM fits target distribution better then other Bridge Matching SB algorithms.

\begin{table*}[!h]
\hspace{-12mm}
\tiny
\setlength{\tabcolsep}{3pt}
\begin{tabular}{rrcccccccccccc}
\toprule
& & \multicolumn{4}{c}{$\epsilon=0.1$} & \multicolumn{4}{c}{$\epsilon=1$} & \multicolumn{4}{c}{$\epsilon=10$}\\
\cmidrule(lr){3-6} \cmidrule(lr){7-10} \cmidrule(l){11-14}
& Algorithm Type & {$D\!=\!2$} & {$D\!=\!16$} & {$D\!=\!64$} & {$D\!=\!128$} & {$D\!=\!2$} & {$D\!=\!16$} & {$D\!=\!64$} & {$D\!=\!128$} & {$D\!=\!2$} & {$D\!=\!16$} & {$D\!=\!64$} & {$D\!=\!128$} \\
\midrule
Best algorithm on benchmark$^\dagger$ & Varies &  $1.94$  &  $13.67$  &  $11.74$  &  $11.4$  &  $1.04$ &  $9.08$ &   $18.05$  &  $15.23$  &  $1.40$  &  $1.27$  &  $2.36$  &  $\mathbf{1.31}$ \\
\hline

DSBM & \multirow{3}{*}{Bridge matching} & $1.21$ & $4.61$ & $9.81$ & $19.8$ & $0.68$ & $\mathbf{0.63}$ & $\mathbf{5.8}$ & $29.5$ & $0.23$ & $5.45$ & $68.9$ & $362$ \\

$\text{SF}^2$M-Sink$^\dagger$ & & $\mathbf{0.54}$ & $\mathbf{3.7}$ & $\mathbf{9.5}$ & $\mathbf{10.9}$ & $0.2$ & $1.1$ & $9$ & $23$ & $0.31$ & $4.9$ & $319$ & $819$ \\  

ASBM (\textbf{ours}) & & $0.89$ & $8.2$ & $13.5$ & $53.7$ & $\mathbf{0.19}$ & $1.6$ & $\mathbf{5.8}$ & $\mathbf{10.5}$ & $\mathbf{0.13}$ & $\mathbf{0.4}$ & $\mathbf{1.9}$ & $4.7$ \\

\bottomrule
\end{tabular}
\vspace{-2mm}
\captionsetup{justification=centering, font=scriptsize}
\caption{Comparisons of $\text{cB}\mathbb{W}_{2}^{2}\text{-UVP}\downarrow$ (\%) between the static SB solution $p^T(x_0,x_1)$ and the learned $q_{\theta}(x_0,x_1)$ on the SB benchmark. \\ The best metric over \textit{bridge Matching algorithms} is \textbf{bolded}. Results marked with $\dagger$ are taken from \cite{gushchin2024light}.}
\label{table-cbwuvp-benchmark}
\vspace{-2mm}
\end{table*}

\begin{table*}[!h]

\hspace{-12mm}
\tiny
\setlength{\tabcolsep}{3pt}
\begin{tabular}{rrcccccccccccc}
\toprule
& & \multicolumn{4}{c}{$\epsilon=0.1$} & \multicolumn{4}{c}{$\epsilon=1$} & \multicolumn{4}{c}{$\epsilon=10$}\\
\cmidrule(lr){3-6} \cmidrule(lr){7-10} \cmidrule(l){11-14}
& Algorithm Type & {$D\!=\!2$} & {$D\!=\!16$} & {$D\!=\!64$} & {$D\!=\!128$} & {$D\!=\!2$} & {$D\!=\!16$} & {$D\!=\!64$} & {$D\!=\!128$} & {$D\!=\!2$} & {$D\!=\!16$} & {$D\!=\!64$} & {$D\!=\!128$} \\
\midrule  
Best algorithm on benchmark$^\dagger$ & Varies & $0.016$ & $0.05$ & $0.25$ & $0.22$ & $0.005$ & $0.09$  & $0.56$ & $0.12$ & $0.01$ & $0.02$ & $0.15$ & $0.23$ \\ 
\hline
DSBM & \multirow{3}{*}{Bridge matching} & $0.1$ & $0.14$ & $0.44$ & $3.2$ & $0.13$ & $\mathbf{0.1}$ & $\mathbf{0.91}$ & $6.67$ & $0.1$ & $5.17$ & $66.7$ & $356$ \\  

$\text{SF}^2$M-Sink$^\dagger$ &  &$0.04$ & $0.18$ & $\mathbf{0.39}$ & $\mathbf{1.1}$ & $0.07$ & $0.3$ & $4.5$ & $17.7$ & $0.17$ & $4.7$ & $316$ & $812$ \\
ASBM (\textbf{ours}) & & $\mathbf{0.016}$ & $\mathbf{0.1}$ & $0.85$ & $11.05$ & $\mathbf{0.02}$ & $0.34$ & $1.57$ & $\mathbf{3.8}$ & $\mathbf{0.013}$ & $\mathbf{0.25}$ & $\mathbf{1.7}$ & $\mathbf{4.7}$ \\
\bottomrule
\end{tabular}
\vspace{-2mm}
\captionsetup{justification=centering, font=scriptsize}
\caption{Comparisons of $\text{B}\mathbb{W}_{2}^{2}\text{-UVP}\downarrow$ (\%) between the ground truth target distribution $p_1(x_1)$ and learned target distribution $q_{\theta}(x_1)$.\\ The best metric over \textit{bridge matching} algorithms is \textbf{bolded}. Results marked with $\dagger$ are taken from \cite{gushchin2024light}.}
\label{table-bwuvp-benchmark}
\vspace{-2mm}
\end{table*}

\textbf{Remark.} There exist recent light SB algorithms \cite{korotin2024light, gushchin2024light} which do not use neural parameterization and rely on the Gaussian mixtures instead. However, these methods have very strong inductive bias towards the benchmark as it is also constructed using Gaussian mixtures. Therefore, we exclude them from comparison, see the comments of the authors in \cite[\wasyparagraph{5.2}]{korotin2024light} and \cite[\wasyparagraph{5.2}]{gushchin2024light}



\subsection{Colored MNIST}\label{sec:exp-colored-mnist}

Here we test ASBM (ours, NFE=4) and DSBM (NFE=100) algorithms starting from mini-batch OT coupling \cite{tong2023simulation} on transfer between colorized MNIST digits of classes "2" and "3" with $\epsilon \in \{1, 10\}$. We learn ASBM and DSBM on \textit{train} set of digits and show the translated \textit{test} images in Figures \ref{fig:c_mnist_samples_f} and \ref{fig:c_mnist_samples_b} along with calcualted \textit{test} FID in Table \ref{tab:cmnist_FID}.

\begin{figure*}[!h]
\begin{subfigure}[b]{0.058\linewidth}
\centering
\includegraphics[width=0.995\linewidth]{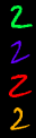}
\caption{\centering ${x\sim p_0}$}
\vspace{-1mm}
\end{subfigure}
\vspace{-1mm}\hfill\begin{subfigure}[b]{0.22\linewidth}
\centering
\includegraphics[width=0.995\linewidth]{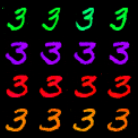}
\caption{\centering ASBM (\textbf{ours}), \newline $\epsilon=1$}
\label{fig:cmnist_asbm_f_eps_1}
\vspace{-1mm}
\end{subfigure}
\hfill\begin{subfigure}[b]{0.22\linewidth}
\centering
\includegraphics[width=0.995\linewidth]{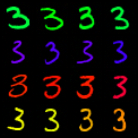}
\caption{\centering DSBM \cite{shi2023diffusion},\newline $\epsilon=1$}
\label{fig:cmnist_dsbm_f_eps_1}
\vspace{-1mm}
\end{subfigure}
\hfill\begin{subfigure}[b]{0.24\linewidth}
\centering
\includegraphics[width=0.91\linewidth]{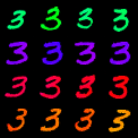}
\caption{\centering ASBM (\textbf{ours}),\newline $\epsilon=10$}
\label{fig:cmnist_asbm_f_eps_10}
\vspace{-1mm}
\end{subfigure}
\hfill\begin{subfigure}[b]{0.22\linewidth}
\centering
\includegraphics[width=0.995\linewidth]{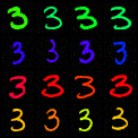}
\caption{\centering DSBM \cite{shi2023diffusion},\newline $\epsilon=10$}
\label{fig:cmnist_dsbm_f_eps_10}
\vspace{-1mm}
\end{subfigure}

\vspace{-0mm} \caption{\centering Samples from ASBM (\textbf{ours}) and DSBM learned on Colored MNIST \textit{2}$\rightarrow$\textit{3} ($32\times 32$) translation for $\epsilon \in \{1, 10\}$.}
\label{fig:c_mnist_samples_f}
\vspace{-3mm}
\end{figure*}

\begin{figure*}[!h]
\begin{subfigure}[b]{0.058\linewidth}
\centering
\includegraphics[width=0.995\linewidth]{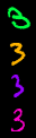}
\caption{\centering ${x\sim p_0}$}
\vspace{-1mm}
\end{subfigure}
\vspace{-1mm}\hfill\begin{subfigure}[b]{0.22\linewidth}
\centering
\includegraphics[width=0.995\linewidth]{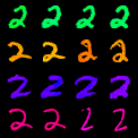}
\caption{\centering ASBM (\textbf{ours}),\newline $\epsilon=1$ }
\label{fig:cmnist_asbm_b_eps_1}
\vspace{-1mm}
\end{subfigure}
\hfill\begin{subfigure}[b]{0.22\linewidth}
\centering
\includegraphics[width=0.995\linewidth]{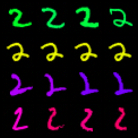}
\caption{\centering DSBM \cite{shi2023diffusion},\newline $\epsilon=1$}
\label{fig:cmnist_dsbm_b_eps_1}
\vspace{-1mm}
\end{subfigure}
\hfill\begin{subfigure}[b]{0.24\linewidth}
\centering
\includegraphics[width=0.91\linewidth]{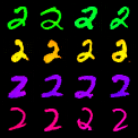}
\caption{\centering ASBM (\textbf{ours}),\newline $\epsilon=10$}
\label{fig:cmnist_asbm_b_eps_10}
\vspace{-1mm}
\end{subfigure}
\hfill\begin{subfigure}[b]{0.22\linewidth}
\centering
\includegraphics[width=0.995\linewidth]{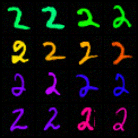}
\caption{\centering DSBM \cite{shi2023diffusion},\newline $\epsilon=10$ }
\label{fig:cmnist_dsbm_b_eps_10}
\vspace{-1mm}
\end{subfigure}

\vspace{-0mm} \caption{\centering Samples from ASBM (\textbf{ours}) and DSBM learned on Colored MNIST \textit{3}$\rightarrow$\textit{2} ($32\times 32$) translation for $\epsilon \in \{1, 10\}$.}
\label{fig:c_mnist_samples_b}
\vspace{-3mm}
\end{figure*}

\begin{table}[!h]
    \small
    \centering
    \begin{tabular}{|c|c|c|c|}
    \hline
        Model & $\epsilon$ & FID ($2\rightarrow3$) & FID ($3\rightarrow2$) \\
        \hline
        ASBM (\textbf{ours}) & 1 & 2.7 & 2.8 \\
        \hline
        DSBM & 1 & 6.2 & 5.3\\
        \hline
        ASBM (\textbf{ours}) & 10 & 4.3 & 4.53 \\
        \hline
        DSBM & 10 & 58.7 & 59.9\\
        \hline
        
    \end{tabular}
    \vspace{1mm}
    \caption{C-MNIST FID $\downarrow$ values for ASBM and DSBM with $\epsilon \in \{1, 10\}$}
    \label{tab:cmnist_FID}
\end{table}

For $\epsilon=1$ the color stays almost exactly the same through translation and there are minor shape diversity for both ASBM and DSBM, see Figures (\ref{fig:cmnist_asbm_f_eps_1}, \ref{fig:cmnist_dsbm_f_eps_1}, \ref{fig:cmnist_asbm_b_eps_1}, \ref{fig:cmnist_dsbm_b_eps_1}). In turn,
$\epsilon=10$ introduces more stochastisity to the solutions, and expectedly the color and shape vary a bit but overall stays similar to input data for both ASBM and DSBM, see Figures (\ref{fig:cmnist_asbm_f_eps_10}, \ref{fig:cmnist_dsbm_f_eps_10}, \ref{fig:cmnist_asbm_b_eps_10}, \ref{fig:cmnist_dsbm_b_eps_10}). As one can see from Table \ref{tab:cmnist_FID}, ASBM has better FID on both $\epsilon \in \{1, 10\}$. However DSBM experiences a notable increase in FID with $\epsilon=10$. We conjecture that this is due to the 
FID unstability w.r.t.\ slightly noisy images which may appear in DSBM due to the neccesity to integrate noisy trajectories (for large $\epsilon$).

\section{Experimental Details}\label{sec:app-exp-details}

\subsection{Details of DDGAN Implementation for Learning Markovian Projection}
\label{sec-ddgan-implementation}

Below, we discuss the parametrization of the discriminator and generator in detail. In general we follow \cite{xiao2022tackling}, but we change their DDPM diffusion inner process on the Brownian bridge process.

\textbf{Parametrization and objective for the discriminator.}
As in the DD-GAN paper \cite{xiao2022tackling} we use a time-conditional discriminator $D_{\xi}(x_{t_{n}}, x_{t_{n-1}}, t_{n-1})$: $\mathbb{R}^{D}\times \mathbb{R}^D \times [0, 1] \rightarrow [0, 1]$. For each time moment $t$ and object $x_{t_{n-1}}$, the role of this discriminator is to check whether the sample $x_{t_{n}}$ is from the distribution $q(x_{t_{n}}|x_{t_{n-1}})$. As well as in the DD-GAN paper \cite{xiao2022tackling}, we train this discriminator by optimizing the following objective:
\begin{eqnarray}
\min_{\xi} \sum_{n=1}^{N+1} \mathbb{E}_{q(x_{t_{n-1}})}[\mathbb{E}_{q(x_{t_{n}}|x_{t_{n-1}})}[-\log D_{\xi}(x_t, x_{t_{n-1}}, t_{n-1})] 
\\
+ \mathbb{E}_{q_{\theta}(x_{t_{n}}|x_{t_{n-1}})}[- \log (1 - D_{\xi}(x_t, x_{t_{n-1}}, t_{n-1}))]]
\nonumber
\end{eqnarray}
Here, the samples from $q(x_{t_n}|x_{t_{n-1}})$ play the role of true samples, while the samples obtained from the parametrized distribution $q_{\theta}(x_{t_{n}}|x_{t_{n-1}})$ play role of fake samples in terms of original GANs. To estimate the first expectation $\mathbb{E}_{q(x_{t_{n-1}})}\mathbb{E}_{q(x_{t_{n}}|x_{t_{n-1}})} = \mathbb{E}_{{q(x_{t_{n}}, x_{t_{n-1}})}}$ one should sample from $q(x_{t_{n}}, x_{t_{n-1}})$. To sample a pair $(x_{t_n}, x_{t_{n-1}}) \sim q(x_{t_{n}}, x_{t_{n-1}})$, we use the properties \eqref{eq:brownian-bridge-transition} and \eqref{eq:brownian-sampling} of the reciprocal process $q$:
\begin{eqnarray}
    q(x_{t_{n}}, x_{t_{n-1}}) = \int p^{W^{\epsilon}}(x_{t_{n}}|x_{t_{n-1}}, x_1) p^{W^{\epsilon}}(x_{t_{n-1}} |x_0, x_1)q(x_1, x_0) dx_1dx_0.
    \nonumber
\end{eqnarray}
Sampling from $q(x_{t_{n-1}})q_{\theta}(x_{t_{n}}|x_{t_{n-1}})$ for estimation of second expectation is given in detail below.

\textbf{Parametrization and objective for the generator.}
We follow the same setup as the authors of DD-GAN \cite{xiao2022tackling} and parametrize $q_{\theta}(x_{t_{n}}|x_{t_{n-1}})$ implicitly through the generator $G_{\theta}(x_{t_{n-1}}, z, t): \mathbb{R}^D \times \mathbb{R}^Z \times [0, 1] \rightarrow \mathbb{R}^D$ as follows:
\begin{eqnarray}
    q_{\theta}(x_{t_{n}}|x_{t_{n-1}}) \defeq \int_{\mathbb{R}^D} q_{\theta}(x_1|x_{t_{n-1}}) p^{W^{\epsilon}}(x_{t_{n}}|x_{t_{n-1}}, x_1) dx_1 =
    \nonumber
    \\
    \int_{\mathbb{R}^Z} p^{W^{\epsilon}}(x_{t_{n}}|x_{t_{n-1}}, x_1 = G_{\theta}(x_{t_{n-1}}, z, t)) p_z(z)dz,
    \nonumber
\end{eqnarray}
where ${q}_{\theta}(x_1|x_{t_{n-1}})$ should match ${q}(x_1|x_{t_{n-1}})$ and $p_z(z)$ is the auxiliary probability distribution for the generator $G_{\theta}$ to model samples from ${q}_{\theta}(x_1|x_{t_{n-1}})$. Thus, for a given $x_{t_{n-1}}$ sample $x_{t_{n}} \sim q_{\theta}(x_{t_{n}}|x_{t_{n-1}})$ is obtained by first sampling $x_1$ from the generator $G_{\theta}$ and then using sampling from the Brownian bridge $p^{W^{\epsilon}}(x_{t_{n}}|x_{t_{n-1}}, x_1)$. While in the DD-GAN, the authors use the intermediate time distribution $q(x_{\text{in}}|x_0, x_1)$ from DDPM \cite{ho2020denoising} and it is the main difference between our Markovian projection and one which the authors of DD-GAN used. As in the non-saturation GANs \cite{goodfellow2014generative}, we train the generator by optimizing the following objective:

\begin{equation}
    \max_{\theta} \sum_{n=1}^{N+1} \mathbb{E}_{q(x_{t_{n-1}})} \mathbb{E}_{q_{\theta}(x_{t_{n}}|x_{t_{n-1}})}[\log (D_{\phi}(x_t, x_{t_{n-1}}, t_{n-1}))].
\nonumber
\end{equation}

\subsection{Details of D-IMF Implementation}
\label{sec-dimf-implementation}
\textbf{General description of the ASBM algorithm.}
D-IMF algorithm is parametrized by the number $K$ of outer D-IMF iterations, number of inner D-IMF iterations (number of generator gradient optimization steps inside one IMF iteration), ASBM number of inner steps $N$ and starting coupling $q^0(x_0, x_1)$ used in the initial reciprocal process $q^{0}(x_0,x_{\text{in}},x_1)= p^{W^{\epsilon}}(x_{\text{in}}|x_0, x_1)q^0(x_0, x_1)$. Our ASBM Algorithm~\ref{alg:asbm} for D-IMF procedure is analog of DSBM \cite[Algorithm 1]{shi2023diffusion} for IMF procedure. 

\begin{algorithm}[H]\label{alg:asbm}
    \caption{Adversarial SB matching (ASBM).}
    \SetKwInOut{Input}{Input}\SetKwInOut{Output}{Output}
    \Input{number of intermediate steps $N$; \\
    initial process $q^0(x_{0}, x_{t_{1}}, \dots, x_{t_{N}}, x_1)$ accessible by samples; \\
    number of outer iteration $K \in \mathbb{N}$; \\
    forward transitional density network $\{q_{\theta}(x_{t_{n}}|x_{t_{n-1}})\}_{n=1}^{N+1}$; \\
    backward transitional density network $\{q_{\eta}(x_{t_{n-1}|x_{t_{n}}})\}_{n=1}^{N+1}$;
    }
    \Output{$p_0(x_0)\prod_{n=1}^{N+1}q_{\theta}(x_{t_{n}}|x_{t_{n-1}}) \approx p_1(x_1)\prod_{n=1}^{N+1}q_{\eta}(x_{t_{n-1}}|x_{t_{n}}) \approx p^{T^*}(x_{0}, x_{\text{in}}, x_1)$.}

    \For{$k = 0$ \KwTo $K-1$}{
        Learn $\{q_{\theta}(x_{t_{n}}|x_{t_{n-1}})\}_{n=1}^{N+1}$ using \ref{adv-loss} with $q^{4k}$; \\
        Let $q^{4k+1}$ be given by $p_0(x_0)\prod_{n=1}^{N+1}q_{\theta}(x_{t_{n}}|x_{t_{n-1}})$; \\
        Let $q^{4k+2}$ be given by $p^{W^{\epsilon}}(x_{\text{in}}|x_0, x_1)q_{\theta}(x_0, x_1)$; \\
        Learn $\{q_{\eta}(x_{t_{n-1}}|x_{t_{n}})\}_{n=1}^{N+1}$ using \ref{adv-loss-2} with $q^{4k+2}$; \\
        Let $q^{4k+3}$ be given by $p_1(x_1)\prod_{n=1}^{N+1}q_{\eta}(x_{t_{n-1}}|x_{t_{n}})$; \\
        Let $q^{4k+4}$ be given by $p^{W^{\epsilon}}(x_{\text{in}}|x_0, x_1)q_{\eta}(x_0, x_1)$;
      }
\end{algorithm}

We \underline{do not reinitialize} neural networks during the ASBM algorithm.

\textbf{Special pretraining on the $0$-th outer iteration.} While, in general, Algorithm~\ref{alg:asbm} implements our scheme, in our experiments, we slightly modify the initial outer iteration based on purely empirical reasons. We train both forward and backward models $\{q_{\theta}(x_{t_{n}}|x_{t_{n-1}})\}_{n=1}^{N+1}$ and $\{q_{\eta}(x_{t_{n-1}|x_{t_{n}}})\}_{n=1}^{N+1}$ with $q^{0}$ and the let $q^{1}$ be $p_0(x_0)\prod_{n=1}^{N+1}q_{\theta}(x_{t_{n}}|x_{t_{n-1}})$. We use more gradient setups on this iteration than on the further outer iterations. We do that to "pretrain" both processes $q_{\theta}$ and $q_{\eta}$ to model $p_1$ and $p_0$ respectively. Then we proceed to other iterations as described in Algorithm~\ref{alg:asbm}.

\subsection{Hyperparameters of ASBM}
\label{sec-hyperparameters}

For all the experiments, Discrete Markovian Projection is conducted using the DD-GAN code \cite{xiao2022tackling}:
\begin{center}
    \url{https://github.com/NVlabs/denoising-diffusion-gan}
\end{center}
The only thing that we modify is the replacement of the DDPM \cite{ho2020denoising} posterior sampling for generator with our Brownian Bridge posterior sampling, see Appendix \ref{sec-ddgan-implementation}. In all the experiments we use a uniform time discretization, i.e., for the number of inner times points $N$ , $t_n=\frac{n}{N+1}$ for $n\in[0, N+1]$.

In Toy 2D (Appendix \ref{sec:exp-toy-2d}) and SB Benchmark (Appendix \ref{sec:exp-benchmark}) experiments, both generator and discrimintor are parametrized by MLPs with inner layer widths $[256, 256, 256]$, LeakyReLU activations and $2$-dimensional time embeddings using \texttt{torch.nn.Embeddings}. In CelebA  (\wasyparagraph \ref{sec:image-experiments}) and Colored MNIST (Appendix \ref{sec:exp-colored-mnist}) experiments, generator is parametrized by U-Net \cite{ronneberger2015u} and discriminator by a ResNet-like architectures with addition of positional time encoding as in \cite{xiao2022tackling}. Neural networks are optimized with the Adam optimizer \cite{kingma2014adam} and apply the Exponential Moving Averaging (EMA) on generator's weights. At the start of a new D-IMF iteration, both the generator, generator (EMA), discriminator and optimizers are initialized using checkpoints from the end of the previous D-IMF iteration. Inside each D-IMF iteration (except the initial one), EMA generator weights are used for sampling from previous Discrete Markovian Projections. Starting coupling $q^0(x_0, x_1)$ may be either Ind, i.e. $q^0(x_0, x_1) = p_0(x_0)p_1(x_1)$, or Mini Batch Optimal Transport coupling (MB), i.e. discrete Optimal Transport solved on mini-batch samples \cite{tong2023simulation}. 

The hyperparameters which we use in the experiments are summarized in Table \ref{tab:hyperparams_table_appx}.

\begin{table}[h]
\hspace{-20mm}
    \small
    \begin{tabular}{|c|c|c|c|c|c|c|c|c|c|c|}
    \hline
        Experiment & \begin{tabular}[x]{@{}c@{}}Start couping \\$q^0(x_0, x_1)$\end{tabular} &\begin{tabular}[x]{@{}c@{}}D-IMF\\outer iters\end{tabular} & \begin{tabular}[x]{@{}c@{}}D-IMF=0\\grad updates\end{tabular} & \begin{tabular}[x]{@{}c@{}}D-IMF\\grad updates \end{tabular} & $N$ & Batch Size & \begin{tabular}[x]{@{}c@{}}$D$/$G$ opt\\ratio\end{tabular}& \begin{tabular}[x]{@{}c@{}}EMA\\decay\end{tabular}& Lr $G$ & Lr $D$\\
        \hline
        2D Toy & Ind & 20 & 400000 & 40000 & 3 & 512 & 1:1 & 0.999 & 1e-4 & 1e-4 \\
        \hline
        SB Bench & Ind & 2 & 133000 & 67000 & 31 & 128 & 3:1 & 0.999 & 1e-4 & 1e-4 \\
        \hline
        C-MNIST & MB & 3 & 100000 & 50000 & 3 & 64 & 1:1  & 0.999 & 1.25e-4 & 1.6e-4\\
        \hline
        CelebA & MB & 5 & 1000000 & 40000 & 3 & 32 & 1:1  & 0.9999 & 1.25e-4 & 1.6e-4 \\
        \hline
    \end{tabular}
    \vspace{2mm}
    \caption{\centering Hyperparameters for experiments. $D$ stands for Discriminator and $G$ stands for Generator. Ratio of Discriminator optimization steps w.r.t. Generator optimization steps is denoted by $D$/$G$ opt ratio. Lr stands for learning rate.}
    \label{tab:hyperparams_table_appx}
\end{table}

\textbf{Other details \& pre-processing.} Test FID is calculated using \href{https://github.com/mseitzer/pytorch-fid}{pytorch-fid package}. Working with CelabA dataset \cite{liu2015faceattributes}, we use all 84434 male and 118165 female samples ($90\%$ train, $10\%$ test of each class). Each sample is resized to $128\times 128$ and normalized by $0.5$ mean and $0.5$ std. Generator and discriminator are the same as for CelebA-HQ in DDGAN \cite{xiao2022tackling} (42M Generator parameters and 27M Discriminator parameters). Working with Colorized MNIST \cite{gushchin2023entropic}, we pick digits of classes "2" and "3" (we use the default MNIST \textit{train}/\textit{test} split), resize them to $32\times 32$ and normalize by $0.5$ mean and $0.5$ std. We use the same generator and discriminator as DDGAN uses in CIFAR10 \cite{xiao2022tackling}.

\textbf{Computational time.} \label{sec:compute_res} The most time challenging experiment on CelebA runs for approximately $7$ days on $1$ GPUs A100. Experiment with Colored MNIST takes less then $2$ days of training on GPU A100. Toy2D and Schrödinger Bridge benchmark experiments take several hours on GPU A100.

\subsection{Details of DSBM Baseline}\label{sec:app-dsbm-details}

DSBM \cite{shi2023diffusion} implementation is taken from the official code:
\begin{center}
    \url{https://github.com/yuyang-shi/dsbm-pytorch}
\end{center}

For CelebA experiment all the hyperparameters, except for 200k  training iterations for the first IMF iteration (Bridge Matching \textbf{pretrain}, Appx I.3 \cite{shi2023diffusion}) and number of overall IMF iterations (that is taken the same as for corresponding ASBM experiment, see Table \ref{tab:hyperparams_table_appx}), were taken from \cite{shi2023diffusion}. As a neural network time conditional U-Net model (38M parameters) was used. Hyperparameters and neural network for Colored MNIST experiment were taken from MNIST $\leftrightarrow$ E-MNIST experiment \cite[\wasyparagraph{6}]{shi2023diffusion}. Starting coupling is exactly the same as for ASBM in corresponding experiments (Table \ref{tab:hyperparams_table_appx}).



\section{Additional results on CelebA}\label{sec:app-celeba-additional-results}

\subsection{Extended Evaluation using Other Metrics}

\textbf{FID for \textit{female}$\rightarrow$\textit{male}}. We evaluate the backward model (\textit{female}$\rightarrow$\textit{male}) trained for unpaired CelebA (128$\times$128) image-to-image translation (\wasyparagraph\ref{sec:image-experiments}) and present the test FID in Table~\ref{tab:reverse_fid}.

\begin{table}[h]
    \centering
    \begin{tabular}{|c|c|c|}
    \hline
        Model & $\epsilon=1$ & $\epsilon=10$ \\
        \hline
        DSBM & 24.06 & 92.15 \\
        \hline
        ASBM (ours) & 16.86 & 17.44 \\
        \hline
    \end{tabular}
    \vspace{1mm}
    \caption{\centering Test FID$\downarrow$ values for CelebA \textit{female}$\rightarrow$\textit{male} image-to-image translation.}
    \label{tab:reverse_fid}
\end{table}

\textbf{CMMD}. To strengthen the unpaired CelebA (128$\times$128) \textit{male}$\rightarrow$\textit{female} image-to-image translation (\wasyparagraph\ref{sec:image-experiments}) experimental results, we add CMMD \cite{jayasumana2024rethinking} metric. CMMD is a recent analogue of the FID that enjoys unbiased estimation and rich CLIP \cite{radford2021learning} embeddings. We estimate CMMD  on CelebA for the same DSBM and ASBM models as for the FID calculation (\wasyparagraph\ref{fig:celeba_128_samples_ASBM_DSBM}) using all available \textit{female} test samples and present results in Table~\ref{tab:cmmd_celeba_f_to_m}. It can be seen that the CMMD values correlate with the FID values.

\begin{table}[h]
    {
    \centering
    \begin{tabular}{|c|c|c|}
    \hline
        Model & $\epsilon=1$ &  $\epsilon=10$ \\
        \hline
        DSBM & 0.365 & 1.140 \\
        \hline
        ASBM (ours) & 0.216 & 0.231 \\
        \hline
    \end{tabular}
    \vspace{1mm}
    \caption{\centering CMMD$\downarrow$ \cite{jayasumana2024rethinking} metric for unpaired CelebA (128$\times$128) \textit{male}$\rightarrow$\textit{female} image-to-image translation estimated on \textit{female} test set.}
    \label{tab:cmmd_celeba_f_to_m}
    }
\end{table}

\textbf{Training with different NFE}.\label{sec:app-celeba-nfe-training} In the unpaired CelebA (128$\times$128) \textit{male}$\rightarrow$\textit{female} image-to-image translation (\wasyparagraph~\ref{sec:image-experiments}), number of inner steps $N=3$ is considered. However, it is possible to train the model with different values of $N$, which correspond to the model NFE minus one. For completeness, we provide experimental results with training and evaluation at $N=1$ and $N=7$ (NFE$=2$ and NFE$=8$). Here all training hyperparameters are the same as for $N=3$, see Appendix~\ref{sec:app-exp-details}. Samples and test FID are shown in Figure~\ref{fig:asbm_nfe_training}.

\begin{figure}[h]
    \begin{subfigure}[b]{0.32\linewidth}
    \centering
    \includegraphics[width=0.99\linewidth]{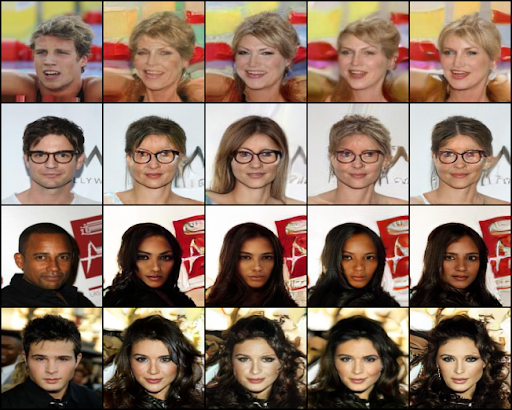}
    \caption{NFE=2, FID=11.829}
    \end{subfigure}
    \hfill
    \begin{subfigure}[b]{0.32\linewidth}
    \centering
    \includegraphics[width=0.99\linewidth]{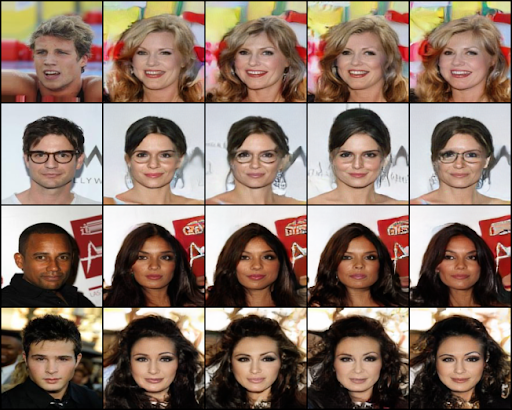}
    \caption{NFE=4, FID=16.08}
    \end{subfigure}
    \hfill
    \begin{subfigure}[b]{0.32\linewidth}
    \centering
    \includegraphics[width=0.99\linewidth]{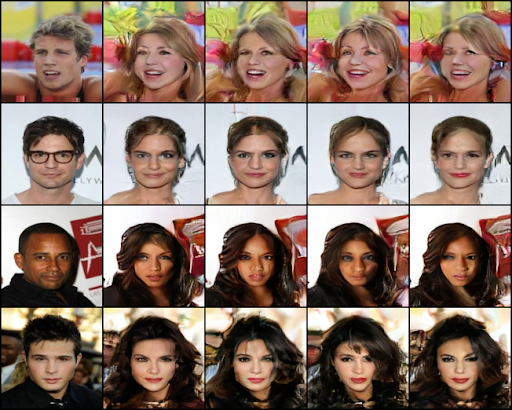}
    \caption{NFE=8, FID=29.58}
    \end{subfigure}
    \caption{\centering Unpaired CelebA (128$\times$128) \textit{male}$\rightarrow$\textit{female} image-to-image translation samples and FID$\downarrow$ values for ASBM trained with different NFE $\in \{2, 4, 8\}$ with $\epsilon=1$.}
    \label{fig:asbm_nfe_training}
\end{figure}


\textbf{Inference with different NFE}. Although in practice models are trained with a fixed NFE (see Appendix~\ref{sec:app-exp-details}), it is possible to use different NFE at the inference stage by exploiting the continuity of the time-conditional module, see Algorithm~\ref{alg:asbm_inference_nfe}. We take the model for \textit{male}$\rightarrow$\textit{female} trained on NFE=$3$ with $\epsilon=1$ and evaluate it with different NFE $\in \{1, 2, 3, 4, 8, 16, 32\}$, see the results in Figure~\ref{fig:asbm_nfe_inference}, and do quantitative evaluation using FID and MSE cost (MSE between inputs and outputs) in Table~\ref{tab:asbm_nfe_inference}. As can be seen, the MSE cost increases with NFE and the FID is optimal at NFE=$4$.

\begin{algorithm}[H]\label{alg:asbm_inference_nfe}
    \caption{Inference of forward ASBM model.}
    \SetKwInOut{Input}{Input}\SetKwInOut{Output}{Output}
    \Input{number of intermediate steps $N$; sample $x_{0}$\\
    forward $x_N$ generator network $G_\theta$; }
    
    \Output{sample from $p_0(x_0)\prod_{n=1}^{N+1}q_{\theta}(x_{t_{n}}|x_{t_{n-1}})$.}
    
    \For{$n = 0$ \KwTo $N$}{

        $x_{n+1} \sim p^{W^\epsilon}(x_{t_{n+1}}| x_{t_n}, x_1=G_\theta(x_{t_{n}}, z, t))$
      }
\end{algorithm}

\begin{figure}[h]
    \centering
    \includegraphics[width=0.99\linewidth]{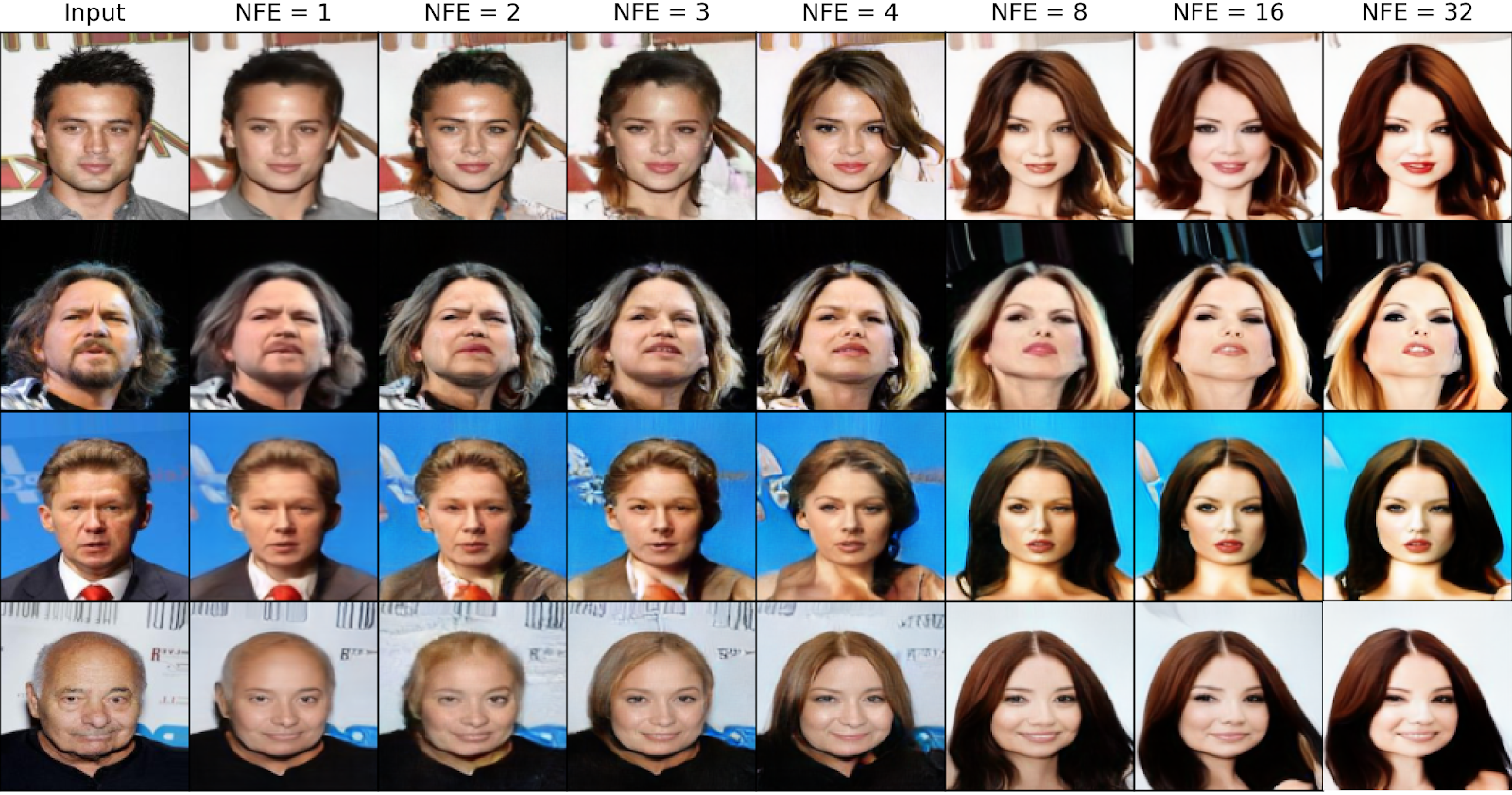}
    \caption{\centering CelebA \textit{male}$\rightarrow$\textit{female} translation samples of ASBM trained with NFE$=4$ and evaluated with NFE $\in\{1, 2, 3, 4, 8, 16, 32\}$.}
    \label{fig:asbm_nfe_inference}
\end{figure}

\begin{table}[h]
    \centering
    \begin{tabular}{|c|c|c|c|c|c|c|c|}
    \hline
        NFE & 1 & 2 & 3 &  4 & 8 & 16 & 32 \\
    \hline
        FID & 58.71 & 32.27 &  17.67 & 16.62 & 55.72 & 67 & 86.97 \\
    \hline
        MSE cost & 0.009 & 0.023 & 0.047 & 0.113 & 0.288 & 0.354 & 0.50 \\
    \hline
    \end{tabular}
    \vspace{1mm}
    \caption{\centering Quantitative evaluation of ASBM model trained with NFE$=4$ and evaluated with NFE$\in\{1, 2, 3, 4, 8, 16, 32\}$. FID$\downarrow$ and MSE cost are calculated on the test set.}
    \label{tab:asbm_nfe_inference}
\end{table}

\textbf{LPIPS diversity}. To measure the generation diversity of our model on CelebA \textit{male}$\rightarrow$\textit{female} translation, we compute the LPIPS variance \cite{huang2018multimodal}. Specifically, we take a subset of 500 images from the test part of the Celeba dataset and sample a batch of 16 generated images for each input image. We then compute the average LPIPS \cite{zhang2018unreasonable} distance between all possible pairs of these images and average these values. We present the results in the Table~\ref{tab:lpips_diversity} for DSBM and ASBM with different values of the coefficient $\epsilon=1$ and $\epsilon=10$.

  \begin{table}[h]
      \centering
      \begin{tabular}{|c|c|c|}
      \hline
          Model & $\epsilon=1$ & $\epsilon=10$ \\
      \hline
          DSBM & 0.1047 & 0.1909 \\
      \hline
          ASBM (ours) & 0.0933 & 0.1878 \\
      \hline
      \end{tabular}
      \vspace{1mm}
      \caption{\centering Average diversity of DSBM and ASBM generative models for \textit{male}$\rightarrow$\textit{female} translation measured by using LPIPS variance \cite{huang2018multimodal}.}
      \label{tab:lpips_diversity}
  \end{table}

\textbf{LPIPS perceptual similarity}. To evaluate the content preservation during the unpaired image-to-image \textit{male}$\rightarrow$\textit{female} translation on CelebA, we calculate the perceptual similarity. Namely, we take the test samples from CelebA dataset, translate them using learned DSBM and ASBM models with parameters $\epsilon=1$ and $\epsilon=10$ and then calculate LPIPS \cite{zhang2018unreasonable} between inputs and generated outputs and average results. One can see results in the Table~\ref{tab:lpips_cost}.

\begin{table}[h]
    \centering
    \begin{tabular}{|c|c|c|}
    \hline
        Model & $\epsilon=1$ & $\epsilon=10$ \\
        \hline
         DSBM & 0.246 & 0.386 \\
        \hline
         ASBM & 0.242 & 0.294 \\
        \hline
    \end{tabular}
    \vspace{1mm}
    \caption{\centering Perceptual similarity for \textit{male}$\rightarrow$\textit{female} translation for DSBM and ASBM models with $\epsilon=1$ and $\epsilon=10$ measured using LPIPS$\downarrow$ \cite{zhang2018unreasonable} between inputs from CelebA test and generated outputs.}
    \label{tab:lpips_cost}
\end{table}

\subsection{Analysis on D-IMF/IMF iterations dynamics}

\begin{wrapfigure}{r}{0.4\textwidth}
\begin{center}
    
\vspace{-14mm}
\includegraphics[width=0.4\textwidth]{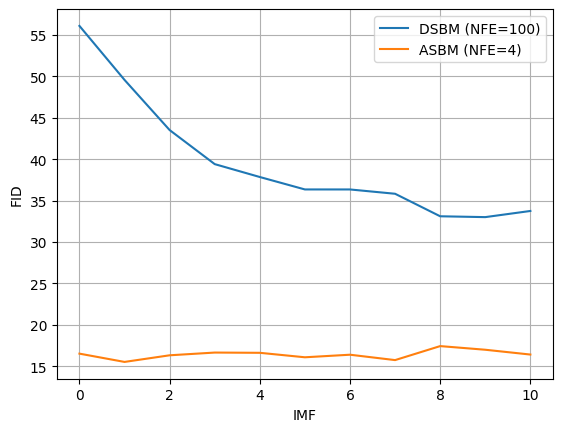}
\caption{\centering ASBM and DSBM FID w.r.t. IMF iterations.}
\label{fig:asbm_vs_dsbm_fid_imf_forward}
\vspace{-2mm}
\end{center}
\end{wrapfigure}

We include additional analysis on dynamics of model samples with D-IMF iterations for ASBM (ours) and IMF iterations for DSBM with $\epsilon=1$. As one can see from Figure \ref{fig:ASBM_IMF_samples_dynamics} ASBM visually almost converges after 5 iterations in terms of similarity of generated sample w.r.t. to input data, i.e., the transport cost. From plot in Figure \ref{fig:asbm_vs_dsbm_fid_imf_forward} we see that ASBM's FID does not change through subsequent D-IMF iterations; ASBM fits target on the iteration 5 rather well. Looking at Figure \ref{fig:DSBM_IMF_samples_dynamics}, one can conclude that for DSBM visual similarity along side with transport cost starts to diverge after 5th outer IMF iteration. Also, as it can be seen at plot in Figure \ref{fig:asbm_vs_dsbm_fid_imf_forward}, FID stops to improve after outer iteration 9 and does not improve drastically from outer iteration 5. Hence, we take ASBM and DSBM with 5 outer D-IMF/IMF iterations as a balance point for our comparison.

\begin{figure*}[h]
\centering

\begin{subfigure}[b]{0.99\linewidth}
\includegraphics[width=0.995\linewidth]{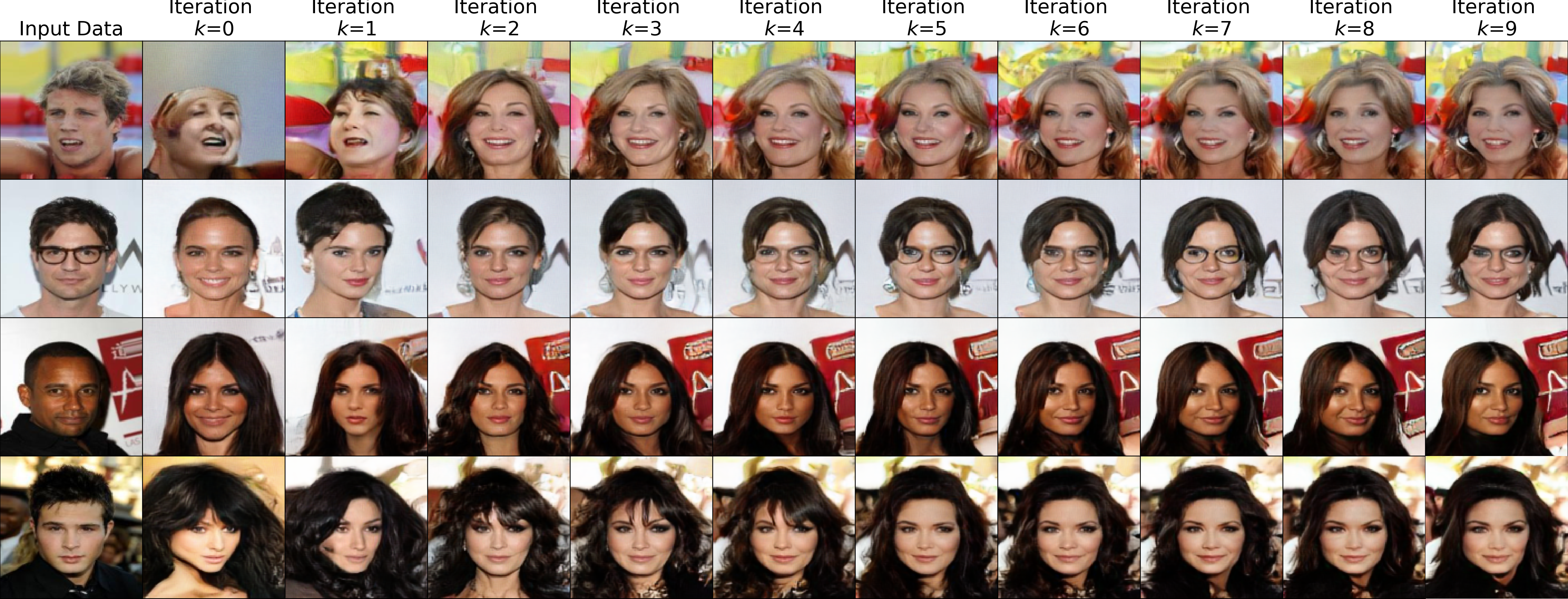}
\caption{ASBM (ours)}
\label{fig:ASBM_IMF_samples_dynamics}
\end{subfigure}

\begin{subfigure}[b]{0.99\linewidth}
    
\includegraphics[width=0.995\linewidth]{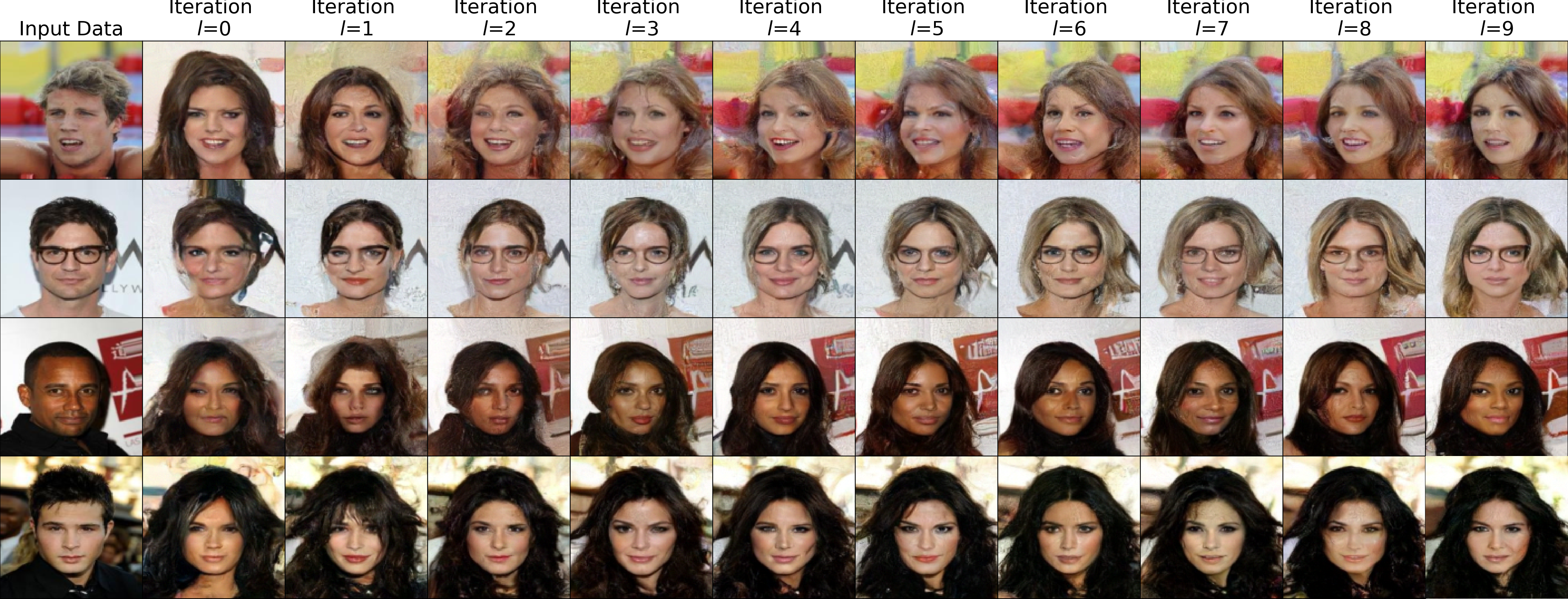}
\caption{DSBM}
\label{fig:DSBM_IMF_samples_dynamics}

\end{subfigure}

\caption{Samples dependence on D-IMF/IMF outer iterations number $k$ , $\epsilon=1$.}
\label{fig:IMF_samples_dynamics}

\vspace{-3mm}
\end{figure*}

\subsection {ASBM (ours) and DSBM samples for \textit{female}$\rightarrow$\textit{male} ($128\times 128$)}

In Figure \ref{fig:celeba_128_samples_f_to_m_ASBM_DSBM}, we provide additional examples for \textit{female}$\rightarrow$\textit{male} ($128\times 128$) setting with $\epsilon \in \{1, 10\}$ for ASBM (Figures \ref{fig:celeba_128_samples_f_to_m_ASBM_eps_1}, \ref{fig:celeba_128_samples_f_to_m_ASBM_eps_10}) and DSBM (Figures \ref{fig:celeba_128_samples_f_to_m_DSBM_eps_1}, \ref{fig:celeba_128_samples_f_to_m_DSBM_eps_10}) along with quantitative evaluation of FID values. Both ASBM and DSBM models were evaluated at D-IMF/IMF iteration number 4. As one can see ASBM (NFE=4) outperforms DSBM (NFE=100) in FID using only 4 evaluation steps.


\begin{figure*}[!t]
\begin{subfigure}[b]{0.1085\linewidth}
\centering
\includegraphics[width=0.995\linewidth]{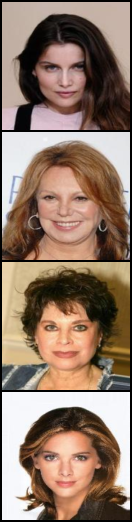}
\caption{\centering ${x\sim p_0}$ \newline }
\vspace{-1mm}
\end{subfigure}
\vspace{-1mm}\hfill\begin{subfigure}[b]{0.43\linewidth}
\centering
\includegraphics[width=0.995\linewidth]{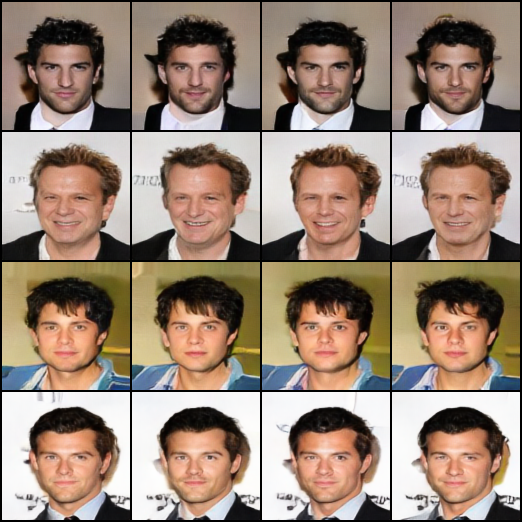}
\caption{\centering ASBM (\textbf{ours}), $\epsilon=1$ \newline FID: 16.87}
\label{fig:celeba_128_samples_f_to_m_ASBM_eps_1}
\vspace{-1mm}
\end{subfigure}
\hfill\begin{subfigure}[b]{0.43\linewidth}
\centering
\includegraphics[width=0.995\linewidth]{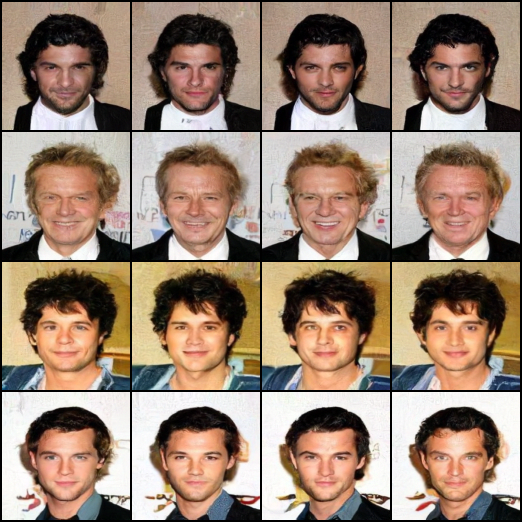}
\caption{\centering DSBM \cite{shi2023diffusion}, $\epsilon=1$  \newline FID: 24.06}
\label{fig:celeba_128_samples_f_to_m_DSBM_eps_1}
\vspace{-1mm}
\end{subfigure}

\vspace{5mm}
\centering
\begin{subfigure}[b]{0.1085\linewidth}
\centering
\includegraphics[width=0.995\linewidth]{pics/celeba_128/celeba_128_b_start_data_samples.png}
\caption{\centering ${x\sim p_0}$ \newline}
\vspace{-1mm}
\end{subfigure}
\hfill\begin{subfigure}[b]{0.43\linewidth}
\centering
\includegraphics[width=0.995\linewidth]{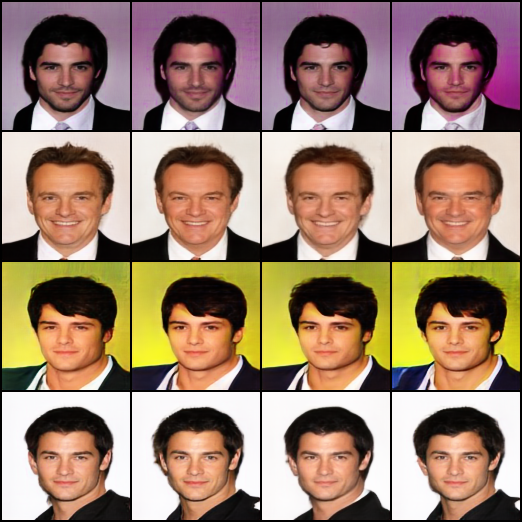}
\caption{\centering ASBM (\textbf{ours}), $\epsilon=10$ \newline FID: 14.73}
\label{fig:celeba_128_samples_f_to_m_ASBM_eps_10}
\vspace{-1mm}
\end{subfigure}
\hfill\begin{subfigure}[b]{0.43\linewidth}
\centering
\includegraphics[width=0.995\linewidth]{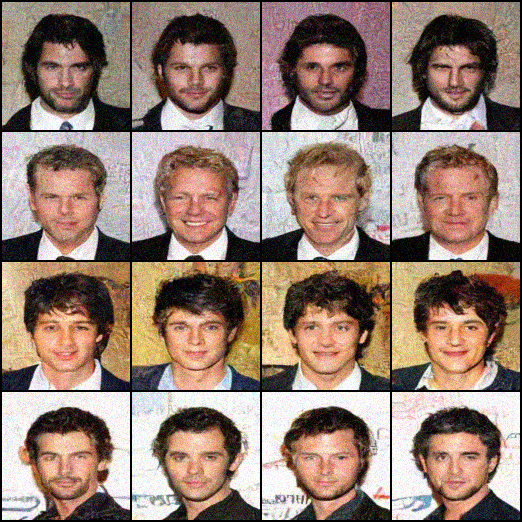}
\caption{\centering DSBM \cite{shi2023diffusion}, $\epsilon=10$  \newline FID: 92.16}
\label{fig:celeba_128_samples_f_to_m_DSBM_eps_10}
\vspace{-1mm}
\end{subfigure}

\vspace{-0mm} \caption{\centering Samples from ASBM (ours) and DSBM learned on Celeba \textit{female}$\rightarrow$\textit{male} ($128\times 128$) for $\epsilon \in \{1, 10\}$}
\label{fig:celeba_128_samples_f_to_m_ASBM_DSBM}
\vspace{-3mm}
\end{figure*}

\subsection{Extra (uncurated) samples for ASBM (ours) on CelebA \textit{male}$\leftrightarrow$\textit{female} ($128\times 128$)}

In Figures \ref{fig:ASBM_sample_canvas_m_to_f} and \ref{fig:ASBM_sample_canvas_f_to_m}, we provide additional samples for ASBM CelebA \textit{male}$\leftrightarrow$\textit{female} ($128\times 128$) experiment with $\epsilon \in \{1, 10\}$.

\begin{figure*}[!h]
\centering
    
\begin{subfigure}[b]{0.087\linewidth}
\includegraphics[width=0.995\linewidth]{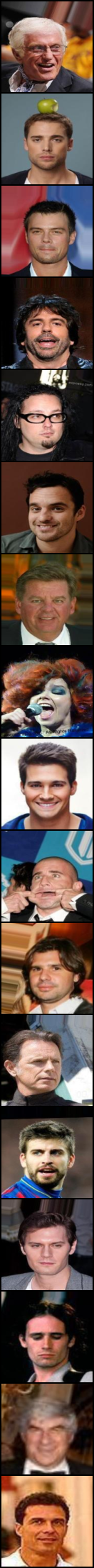}
\caption{Input}
\end{subfigure}
\begin{subfigure}[b]{0.43\linewidth}
\includegraphics[width=0.995\linewidth]{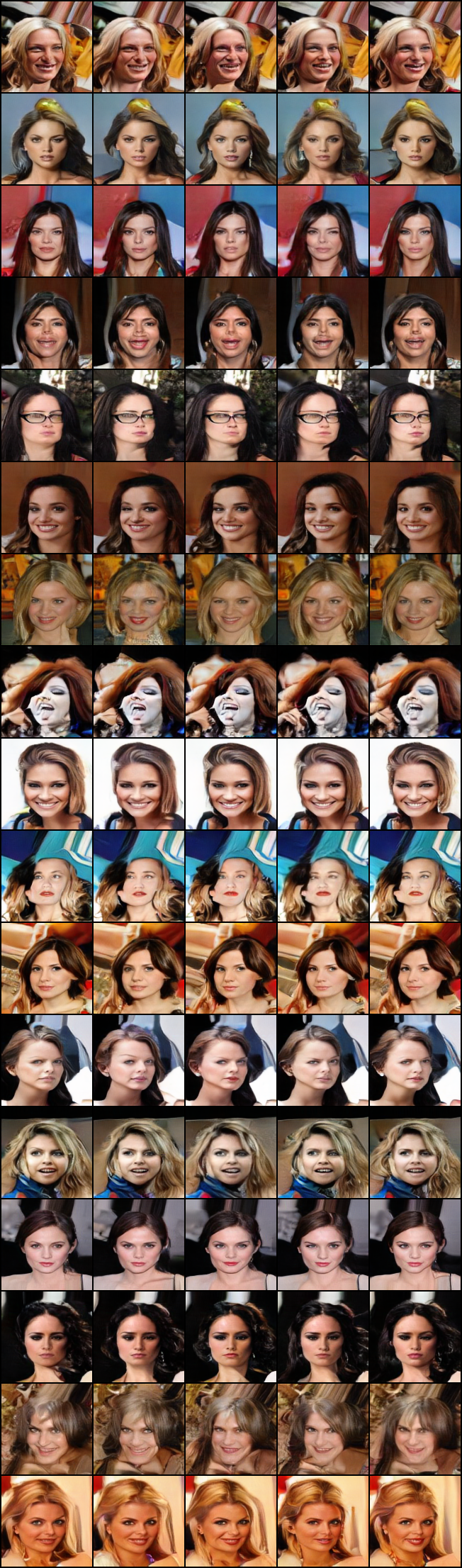}
\caption{Output for $\epsilon=10$}
\end{subfigure}
\begin{subfigure}[b]{0.43\linewidth}
\includegraphics[width=0.995\linewidth]{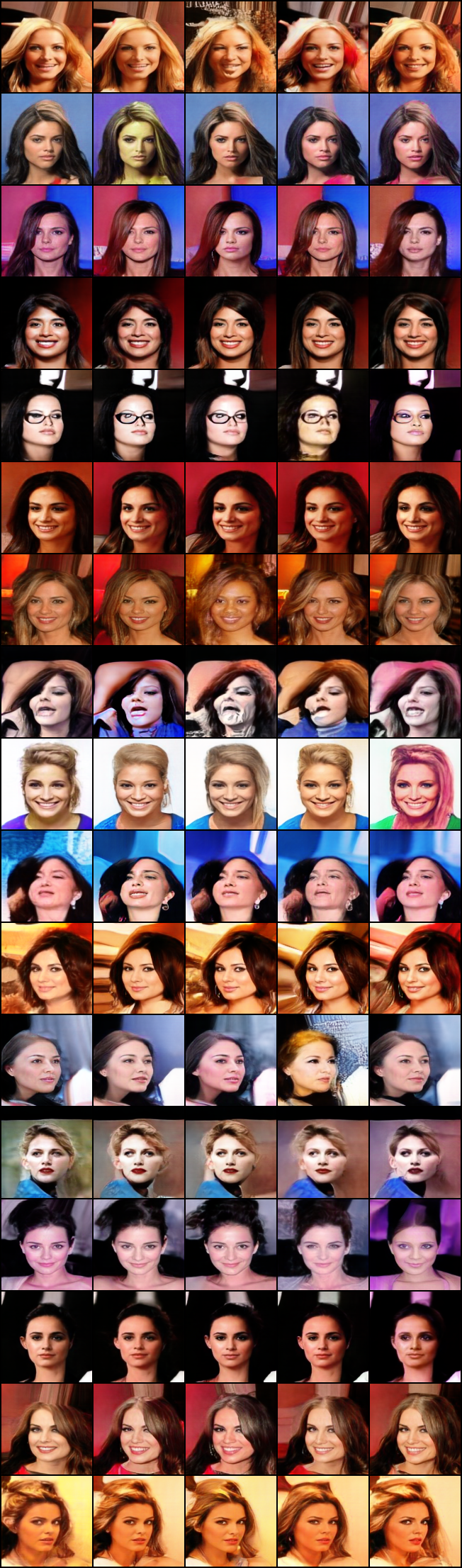}
\caption{Output for $\epsilon=10$}
\end{subfigure}

\caption{\centering ASBM (ours) Celeba \textit{male}$\rightarrow$\textit{female} ($128\times 128$) samples for $\epsilon\in\{1, 10\}$}.\label{fig:ASBM_sample_canvas_m_to_f}
\vspace{-1mm}
\vspace{-3mm}
\end{figure*}

\begin{figure*}[!h]
\centering

\begin{subfigure}[b]{0.087\linewidth}
\includegraphics[width=0.995\linewidth]{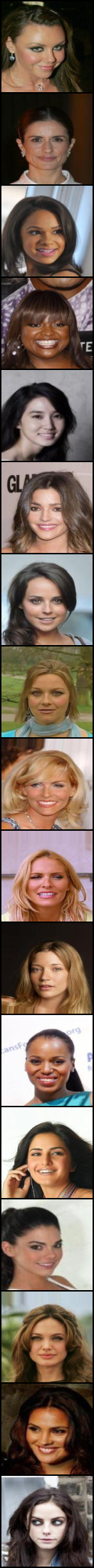}
\caption{Input}
\end{subfigure}
\begin{subfigure}[b]{0.43\linewidth}
\includegraphics[width=0.995\linewidth]{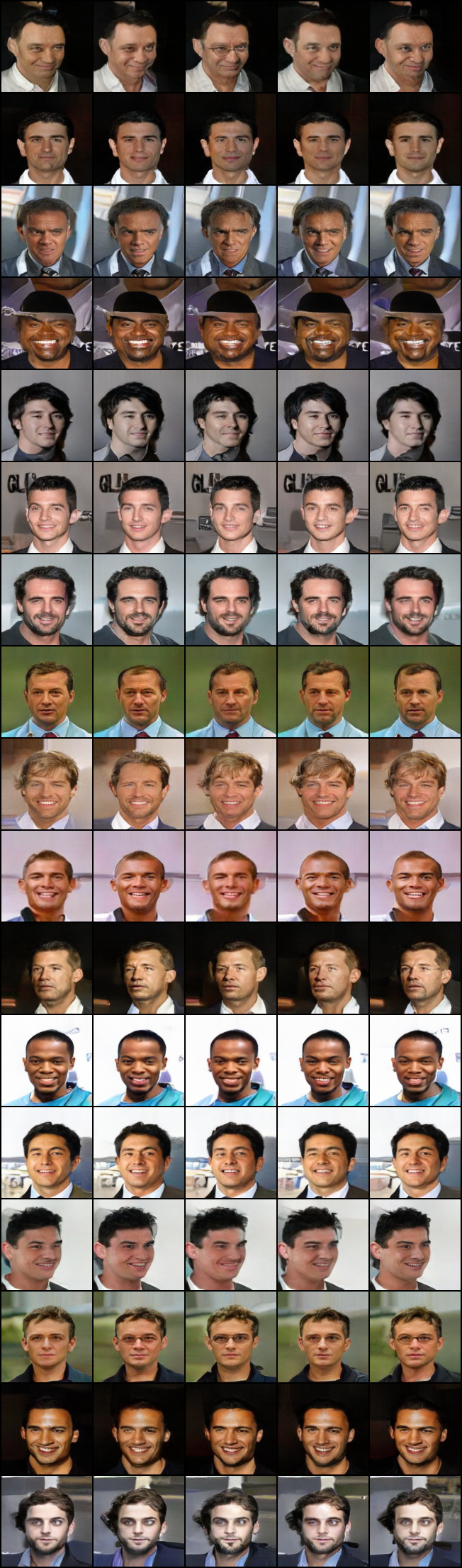}
\caption{$\epsilon=10$}
\end{subfigure}
\begin{subfigure}[b]{0.43\linewidth}
\includegraphics[width=0.995\linewidth]{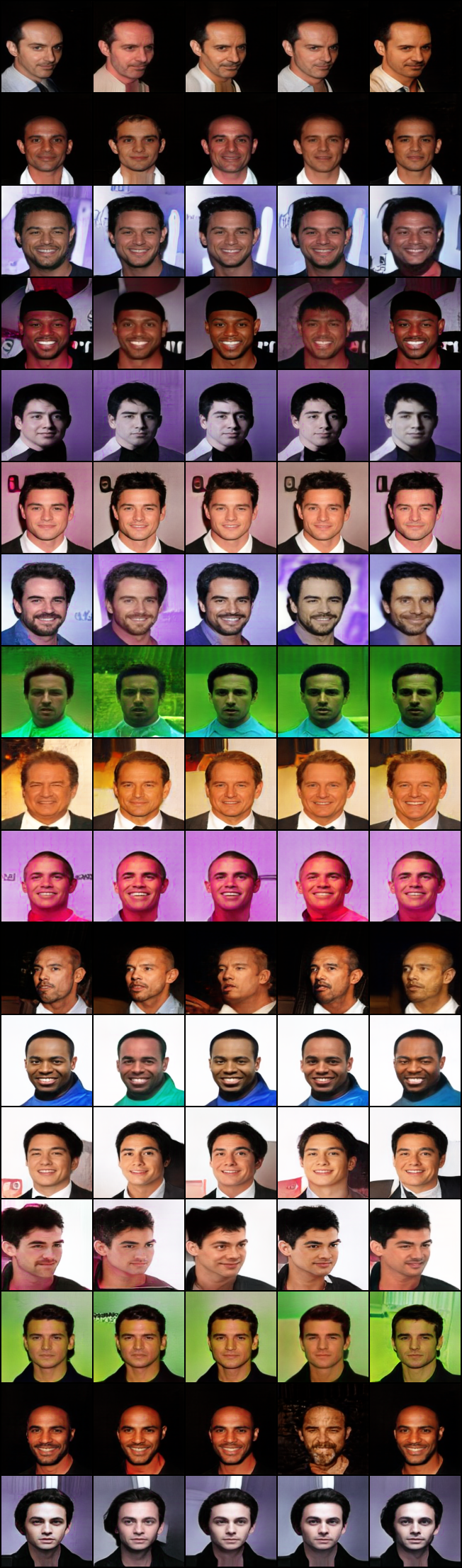}
\caption{$\epsilon=10$}
\end{subfigure}

\caption{\centering ASBM  \textit{female}$\rightarrow$\textit{male} ($128\times 128$) samples for $\epsilon\in\{1, 10\}$}\label{fig:ASBM_sample_canvas_f_to_m}
\vspace{-1mm}
\vspace{-3mm}
\end{figure*}

\clearpage

\section*{NeurIPS Paper Checklist}

\begin{enumerate}

\item {\bf Claims}
    \item[] Question: Do the main claims made in the abstract and introduction accurately reflect the paper's contributions and scope?
    \item[] Answer: \answerYes{} 
    \item[] Justification: For every contribution in introduction there are links to sections about them. 
    \item[] Guidelines:
    \begin{itemize}
        \item The answer NA means that the abstract and introduction do not include the claims made in the paper.
        \item The abstract and/or introduction should clearly state the claims made, including the contributions made in the paper and important assumptions and limitations. A No or NA answer to this question will not be perceived well by the reviewers. 
        \item The claims made should match theoretical and experimental results, and reflect how much the results can be expected to generalize to other settings. 
        \item It is fine to include aspirational goals as motivation as long as it is clear that these goals are not attained by the paper. 
    \end{itemize}

\item {\bf Limitations}
    \item[] Question: Does the paper discuss the limitations of the work performed by the authors?
    \item[] Answer: \answerYes{} 
    \item[] Justification: Limitations are discussed in Appendix \ref{sec-limitations}
    \item[] Guidelines:
    \begin{itemize}
        \item The answer NA means that the paper has no limitation while the answer No means that the paper has limitations, but those are not discussed in the paper. 
        \item The authors are encouraged to create a separate "Limitations" section in their paper.
        \item The paper should point out any strong assumptions and how robust the results are to violations of these assumptions (e.g., independence assumptions, noiseless settings, model well-specification, asymptotic approximations only holding locally). The authors should reflect on how these assumptions might be violated in practice and what the implications would be.
        \item The authors should reflect on the scope of the claims made, e.g., if the approach was only tested on a few datasets or with a few runs. In general, empirical results often depend on implicit assumptions, which should be articulated.
        \item The authors should reflect on the factors that influence the performance of the approach. For example, a facial recognition algorithm may perform poorly when image resolution is low or images are taken in low lighting. Or a speech-to-text system might not be used reliably to provide closed captions for online lectures because it fails to handle technical jargon.
        \item The authors should discuss the computational efficiency of the proposed algorithms and how they scale with dataset size.
        \item If applicable, the authors should discuss possible limitations of their approach to address problems of privacy and fairness.
        \item While the authors might fear that complete honesty about limitations might be used by reviewers as grounds for rejection, a worse outcome might be that reviewers discover limitations that aren't acknowledged in the paper. The authors should use their best judgment and recognize that individual actions in favor of transparency play an important role in developing norms that preserve the integrity of the community. Reviewers will be specifically instructed to not penalize honesty concerning limitations.
    \end{itemize}

\item {\bf Theory Assumptions and Proofs}
    \item[] Question: For each theoretical result, does the paper provide the full set of assumptions and a complete (and correct) proof?
    \item[] Answer: \answerYes{} 
    \item[] Justification: Proofs along with assumptions are provided in Appendix \ref{sec:proofs_appx}.
    \item[] Guidelines:
    \begin{itemize}
        \item The answer NA means that the paper does not include theoretical results. 
        \item All the theorems, formulas, and proofs in the paper should be numbered and cross-referenced.
        \item All assumptions should be clearly stated or referenced in the statement of any theorems.
        \item The proofs can either appear in the main paper or the supplemental material, but if they appear in the supplemental material, the authors are encouraged to provide a short proof sketch to provide intuition. 
        \item Inversely, any informal proof provided in the core of the paper should be complemented by formal proofs provided in appendix or supplemental material.
        \item Theorems and Lemmas that the proof relies upon should be properly referenced. 
    \end{itemize}

    \item {\bf Experimental Result Reproducibility}
    \item[] Question: Does the paper fully disclose all the information needed to reproduce the main experimental results of the paper to the extent that it affects the main claims and/or conclusions of the paper (regardless of whether the code and data are provided or not)?
    \item[] Answer: \answerYes{} 
    \item[] Justification: Experimental details are discussed in Appendix \ref{sec:app-exp-details}. Code for the experiments is provided in supplemetary materials. All the datasets are available in public.
    \item[] Guidelines:
    \begin{itemize}
        \item The answer NA means that the paper does not include experiments.
        \item If the paper includes experiments, a No answer to this question will not be perceived well by the reviewers: Making the paper reproducible is important, regardless of whether the code and data are provided or not.
        \item If the contribution is a dataset and/or model, the authors should describe the steps taken to make their results reproducible or verifiable. 
        \item Depending on the contribution, reproducibility can be accomplished in various ways. For example, if the contribution is a novel architecture, describing the architecture fully might suffice, or if the contribution is a specific model and empirical evaluation, it may be necessary to either make it possible for others to replicate the model with the same dataset, or provide access to the model. In general. releasing code and data is often one good way to accomplish this, but reproducibility can also be provided via detailed instructions for how to replicate the results, access to a hosted model (e.g., in the case of a large language model), releasing of a model checkpoint, or other means that are appropriate to the research performed.
        \item While NeurIPS does not require releasing code, the conference does require all submissions to provide some reasonable avenue for reproducibility, which may depend on the nature of the contribution. For example
        \begin{enumerate}
            \item If the contribution is primarily a new algorithm, the paper should make it clear how to reproduce that algorithm.
            \item If the contribution is primarily a new model architecture, the paper should describe the architecture clearly and fully.
            \item If the contribution is a new model (e.g., a large language model), then there should either be a way to access this model for reproducing the results or a way to reproduce the model (e.g., with an open-source dataset or instructions for how to construct the dataset).
            \item We recognize that reproducibility may be tricky in some cases, in which case authors are welcome to describe the particular way they provide for reproducibility. In the case of closed-source models, it may be that access to the model is limited in some way (e.g., to registered users), but it should be possible for other researchers to have some path to reproducing or verifying the results.
        \end{enumerate}
    \end{itemize}

\item {\bf Open access to data and code}
    \item[] Question: Does the paper provide open access to the data and code, with sufficient instructions to faithfully reproduce the main experimental results, as described in supplemental material?
    \item[] Answer: \answerYes{} 
    \item[] Justification: Code will be made public after the paper acceptance (now we provide it in the supplementary). Experimental details are provided in Appendix \ref{sec:app-exp-details}. All the datasets are publicly available. 
    \item[] Guidelines:
    \begin{itemize}
        \item The answer NA means that paper does not include experiments requiring code.
        \item Please see the NeurIPS code and data submission guidelines (\url{https://nips.cc/public/guides/CodeSubmissionPolicy}) for more details.
        \item While we encourage the release of code and data, we understand that this might not be possible, so “No” is an acceptable answer. Papers cannot be rejected simply for not including code, unless this is central to the contribution (e.g., for a new open-source benchmark).
        \item The instructions should contain the exact command and environment needed to run to reproduce the results. See the NeurIPS code and data submission guidelines (\url{https://nips.cc/public/guides/CodeSubmissionPolicy}) for more details.
        \item The authors should provide instructions on data access and preparation, including how to access the raw data, preprocessed data, intermediate data, and generated data, etc.
        \item The authors should provide scripts to reproduce all experimental results for the new proposed method and baselines. If only a subset of experiments are reproducible, they should state which ones are omitted from the script and why.
        \item At submission time, to preserve anonymity, the authors should release anonymized versions (if applicable).
        \item Providing as much information as possible in supplemental material (appended to the paper) is recommended, but including URLs to data and code is permitted.
    \end{itemize}

\item {\bf Experimental Setting/Details}
    \item[] Question: Does the paper specify all the training and test details (e.g., data splits, hyperparameters, how they were chosen, type of optimizer, etc.) necessary to understand the results?
    \item[] Answer: \answerYes{} 
    \item[] Justification: Experimental details are discussed in Appendix \ref{sec:app-exp-details}
    \item[] Guidelines:
    \begin{itemize}
        \item The answer NA means that the paper does not include experiments.
        \item The experimental setting should be presented in the core of the paper to a level of detail that is necessary to appreciate the results and make sense of them.
        \item The full details can be provided either with the code, in appendix, or as supplemental material.
    \end{itemize}

\item {\bf Experiment Statistical Significance}
    \item[] Question: Does the paper report error bars suitably and correctly defined or other appropriate information about the statistical significance of the experiments?
    \item[] Answer: \answerNo{} 
    \item[] Justification: Unfortunately running experiments several times to calculate statistics and error bars would be too computationally expensive. 
    \item[] Guidelines:
    \begin{itemize}
        \item The answer NA means that the paper does not include experiments.
        \item The authors should answer "Yes" if the results are accompanied by error bars, confidence intervals, or statistical significance tests, at least for the experiments that support the main claims of the paper.
        \item The factors of variability that the error bars are capturing should be clearly stated (for example, train/test split, initialization, random drawing of some parameter, or overall run with given experimental conditions).
        \item The method for calculating the error bars should be explained (closed form formula, call to a library function, bootstrap, etc.)
        \item The assumptions made should be given (e.g., Normally distributed errors).
        \item It should be clear whether the error bar is the standard deviation or the standard error of the mean.
        \item It is OK to report 1-sigma error bars, but one should state it. The authors should preferably report a 2-sigma error bar than state that they have a 96\% CI, if the hypothesis of Normality of errors is not verified.
        \item For asymmetric distributions, the authors should be careful not to show in tables or figures symmetric error bars that would yield results that are out of range (e.g. negative error rates).
        \item If error bars are reported in tables or plots, The authors should explain in the text how they were calculated and reference the corresponding figures or tables in the text.
    \end{itemize}

\item {\bf Experiments Compute Resources}
    \item[] Question: For each experiment, does the paper provide sufficient information on the computer resources (type of compute workers, memory, time of execution) needed to reproduce the experiments?
    \item[] Answer: \answerYes{} 
    \item[] Justification: Computational time and used computational resources details were reported for several experiments in \ref{sec:compute_res}
    \item[] Guidelines:
    \begin{itemize}
        \item The answer NA means that the paper does not include experiments.
        \item The paper should indicate the type of compute workers CPU or GPU, internal cluster, or cloud provider, including relevant memory and storage.
        \item The paper should provide the amount of compute required for each of the individual experimental runs as well as estimate the total compute. 
        \item The paper should disclose whether the full research project required more compute than the experiments reported in the paper (e.g., preliminary or failed experiments that didn't make it into the paper). 
    \end{itemize}
    
\item {\bf Code Of Ethics}
    \item[] Question: Does the research conducted in the paper conform, in every respect, with the NeurIPS Code of Ethics \url{https://neurips.cc/public/EthicsGuidelines}?
    \item[] Answer: \answerYes{} 
    \item[] Justification: Research conforms with NeurIPS Code of Ethics. Societal impact related information was discussed in \ref{sec:impact} 
    \item[] Guidelines:
    \begin{itemize}
        \item The answer NA means that the authors have not reviewed the NeurIPS Code of Ethics.
        \item If the authors answer No, they should explain the special circumstances that require a deviation from the Code of Ethics.
        \item The authors should make sure to preserve anonymity (e.g., if there is a special consideration due to laws or regulations in their jurisdiction).
    \end{itemize}

\item {\bf Broader Impacts}
    \item[] Question: Does the paper discuss both potential positive societal impacts and negative societal impacts of the work performed?
    \item[] Answer: \answerYes{} 
    \item[] Justification: Broader impact was discussed in \ref{sec:impact}
    \item[] Guidelines:
    \begin{itemize}
        \item The answer NA means that there is no societal impact of the work performed.
        \item If the authors answer NA or No, they should explain why their work has no societal impact or why the paper does not address societal impact.
        \item Examples of negative societal impacts include potential malicious or unintended uses (e.g., disinformation, generating fake profiles, surveillance), fairness considerations (e.g., deployment of technologies that could make decisions that unfairly impact specific groups), privacy considerations, and security considerations.
        \item The conference expects that many papers will be foundational research and not tied to particular applications, let alone deployments. However, if there is a direct path to any negative applications, the authors should point it out. For example, it is legitimate to point out that an improvement in the quality of generative models could be used to generate deepfakes for disinformation. On the other hand, it is not needed to point out that a generic algorithm for optimizing neural networks could enable people to train models that generate Deepfakes faster.
        \item The authors should consider possible harms that could arise when the technology is being used as intended and functioning correctly, harms that could arise when the technology is being used as intended but gives incorrect results, and harms following from (intentional or unintentional) misuse of the technology.
        \item If there are negative societal impacts, the authors could also discuss possible mitigation strategies (e.g., gated release of models, providing defenses in addition to attacks, mechanisms for monitoring misuse, mechanisms to monitor how a system learns from feedback over time, improving the efficiency and accessibility of ML).
    \end{itemize}
    
\item {\bf Safeguards}
    \item[] Question: Does the paper describe safeguards that have been put in place for responsible release of data or models that have a high risk for misuse (e.g., pretrained language models, image generators, or scraped datasets)?
    \item[] Answer: \answerNA{} 
    \item[] Justification: Presented research doesn't need safeguards
    \item[] Guidelines:
    \begin{itemize}
        \item The answer NA means that the paper poses no such risks.
        \item Released models that have a high risk for misuse or dual-use should be released with necessary safeguards to allow for controlled use of the model, for example by requiring that users adhere to usage guidelines or restrictions to access the model or implementing safety filters. 
        \item Datasets that have been scraped from the Internet could pose safety risks. The authors should describe how they avoided releasing unsafe images.
        \item We recognize that providing effective safeguards is challenging, and many papers do not require this, but we encourage authors to take this into account and make a best faith effort.
    \end{itemize}

\item {\bf Licenses for existing assets}
    \item[] Question: Are the creators or original owners of assets (e.g., code, data, models), used in the paper, properly credited and are the license and terms of use explicitly mentioned and properly respected?
    \item[] Answer: \answerYes{} 
    \item[] Justification: All the used assets are cited.
    \item[] Guidelines:
    \begin{itemize}
        \item The answer NA means that the paper does not use existing assets.
        \item The authors should cite the original paper that produced the code package or dataset.
        \item The authors should state which version of the asset is used and, if possible, include a URL.
        \item The name of the license (e.g., CC-BY 4.0) should be included for each asset.
        \item For scraped data from a particular source (e.g., website), the copyright and terms of service of that source should be provided.
        \item If assets are released, the license, copyright information, and terms of use in the package should be provided. For popular datasets, \url{paperswithcode.com/datasets} has curated licenses for some datasets. Their licensing guide can help determine the license of a dataset.
        \item For existing datasets that are re-packaged, both the original license and the license of the derived asset (if it has changed) should be provided.
        \item If this information is not available online, the authors are encouraged to reach out to the asset's creators.
    \end{itemize}

\item {\bf New Assets}
    \item[] Question: Are new assets introduced in the paper well documented and is the documentation provided alongside the assets?
    \item[] Answer: \answerYes{} 
    \item[] Justification: Code is provided in supplementary material. License for the code will be included after the paper acceptance.
    \item[] Guidelines:
    \begin{itemize}
        \item The answer NA means that the paper does not release new assets.
        \item Researchers should communicate the details of the dataset/code/model as part of their submissions via structured templates. This includes details about training, license, limitations, etc. 
        \item The paper should discuss whether and how consent was obtained from people whose asset is used.
        \item At submission time, remember to anonymize your assets (if applicable). You can either create an anonymized URL or include an anonymized zip file.
    \end{itemize}

\item {\bf Crowdsourcing and Research with Human Subjects}
    \item[] Question: For crowdsourcing experiments and research with human subjects, does the paper include the full text of instructions given to participants and screenshots, if applicable, as well as details about compensation (if any)? 
    \item[] Answer: \answerNA{} 
    \item[] Justification: The paper does not involve crowdsourcing nor research with human subjects.
    \item[] Guidelines:
    \begin{itemize}
        \item The answer NA means that the paper does not involve crowdsourcing nor research with human subjects.
        \item Including this information in the supplemental material is fine, but if the main contribution of the paper involves human subjects, then as much detail as possible should be included in the main paper. 
        \item According to the NeurIPS Code of Ethics, workers involved in data collection, curation, or other labor should be paid at least the minimum wage in the country of the data collector. 
    \end{itemize}

\item {\bf Institutional Review Board (IRB) Approvals or Equivalent for Research with Human Subjects}
    \item[] Question: Does the paper describe potential risks incurred by study participants, whether such risks were disclosed to the subjects, and whether Institutional Review Board (IRB) approvals (or an equivalent approval/review based on the requirements of your country or institution) were obtained?
    \item[] Answer: \answerNA{} 
    \item[] Justification: The paper does not involve crowdsourcing nor research with human subjects.
    \item[] Guidelines:
    \begin{itemize}
        \item The answer NA means that the paper does not involve crowdsourcing nor research with human subjects.
        \item Depending on the country in which research is conducted, IRB approval (or equivalent) may be required for any human subjects research. If you obtained IRB approval, you should clearly state this in the paper. 
        \item We recognize that the procedures for this may vary significantly between institutions and locations, and we expect authors to adhere to the NeurIPS Code of Ethics and the guidelines for their institution. 
        \item For initial submissions, do not include any information that would break anonymity (if applicable), such as the institution conducting the review.
    \end{itemize}

\end{enumerate}

\end{document}